\newtcolorbox{theorem-box}[1][]{%
    colback=blue!10!white, % Background color
    colframe=blue!60!black, % Border color
    fonttitle=\bfseries\color{yellow}, % Title font and color
    title=#1, % Title text
}
\newtheorem{Theorem}{Theorem}
\newtheorem{Lemma}{Lemma}
\newtheorem{Assumption}{Assumption}
\newtheorem{Remark}{Remark}
\newcommand{\rs}{\!\!}
\newcolumntype{C}[1]{>{\centering \arraybackslash}p{#1}}
\newcommand{\bblue}{\textcolor{blue}}
\newcommand{\bviol}{\textcolor{violet}}
\newcommand{\bquad}{\qquad\qquad\qquad\qquad\qquad}\newcommand{\mquad}{\qquad\qquad\qquad}
\newcommand{\squad}{\qquad\qquad}
 \newcommand{\sqz}{{\tt SqueezeNet1}}
\acrodef{ml}[ML]{machine learning}
\acrodef{fl}[FL]{federated learning}
\acrodef{hfl}[HFL]{hierarchical federated learning}
\acrodef{rl}[RL]{reinforcement learning}
\acrodef{drl}[DRL]{deep reinforcement learning}
\acrodef{cs}[CS]{central server}
\acrodef{bs}[BS]{base station}
\acrodef{isp}[ISP]{(wireless) internet service provider}
\acrodef{ue}[UE]{user equipment}
\acrodef{es}[ES]{edge server}
\acrodef{csp}[CSP]{content service provider}
\acrodef{fedavg}[{\tt FedAvg}]{federated averaging}
\acrodef{fednova}[{\tt FedNova}]{federated normalized averaging}
\acrodef{afa}[{\tt AFA}]{anarchic federated averaging}
\acrodef{afacd}[{\tt AFA-CD}]{AFA-cross-device}
\acrodef{feddisco}[{\tt FedDisco}]{federated learning with discrepancy-aware collaboration}
\acrodef{scaffold}[{\tt SCAFFOLD}]{stochastic controlled averaging algorithm}
\acrodef{osafl}[{\tt OSAFL}]{\underline{\textbf{o}}nline-\underline{\textbf{s}}core-\underline{\textbf{a}}ided \underline{\textbf{f}}ederated \underline{\textbf{l}}earning}
\acrodef{iot}[IoT]{Internet of Things}
\acrodef{os}[OS]{operating system}
\acrodef{iid}[IID]{independent and identically distributed}
\acrodef{sca}[SCA]{successive convex approximation}
\acrodef{sgd}[{\tt SGD}]{stochastic gradient descent}
\acrodef{cpu}[CPU]{central processing unit}
\acrodef{gpu}[GPU]{graphics processing unit}
\acrodef{prb}[pRB]{physical resource block}
\acrodef{snr}[SNR]{signal-to-noise-ratio}
\acrodef{lp}[LP]{linear programming}
\acrodef{fpp}[FPP]{floating point precision}
\acrodef{cdf}[CDF]{cumulative distribution function}
\acrodef{uav}[UAV]{unmanned aerial vehicles}
\acrodef{ap}[AP]{access point}
\acrodef{hz}[Hz]{hertz}
\acrodef{csi}[CSI]{channel state information}
\acrodef{kkt}[KKT]{Karush–Kuhn–Tucker}
\acrodef{fifo}[{\tt FIFO}]{first-in-first-out}
\acrodef{trimtoplabel}[{\tt TrimTopLabel}]{trim top label}
\acrodef{fcn}[{\tt FCN}]{fully connected neural network}
\acrodef{lstm}[{\tt LSTM}]{long short-term memory}
\acrodef{cnn}[{\tt CNN}]{convolutional neural network}
\acrodef{fc}[{\tt FC}]{fully connected}
\acrodef{resnet18}[{\tt ResNet-$18$}]{residual network}
\acrodef{isac}[ISAC]{integrated sensing and communication}
\acrodef{mfedavg}[{\tt M-FedAvg}]{modified-FedAvg}
\acrodef{mfedprox}[{\tt M-FedProx}]{modified-FedProx}
\acrodef{mfednova}[{\tt M-FedNova}]{modified-FedNova}
\acrodef{mafacd}[{\tt M-AFA-CD}]{modified-AFA-CD}
\acrodef{mfeddisco}[{\tt M-FedDisco}]{modified-FedDisco}
\acrodef{cfl}[CFL]{continual federated learning}
\acrodef{mpc}[MPC]{multi-path component}
\acrodef{ccdf}[CCDF]{complementary cumulative distribution function}
\title{Online-Score-Aided Federated Learning for Resource-Constrained Wireless Clients \\with Continual Data Arrival} 
\author{Ferdous Pervej, \IEEEmembership{Member, IEEE}, Minseok Choi, \IEEEmembership{Member, IEEE}, and Andreas F. Molisch, \IEEEmembership{Fellow, IEEE}
\thanks{This work was supported by NSF-IITP Project $2152646$.}
\thanks{Ferdous Pervej is with the Department of Electrical and Computer Engineering, Utah State University, Logan, UT 84322 USA, and also with the Ming Hsieh Department of Electrical and Computer Engineering, University of Southern California, Los Angeles, CA 90089 USA (e-mail: ferdous.pervej@usu.edu).}
\thanks{Andreas F. Molisch is with the Ming Hsieh Department of Electrical and Computer Engineering, University of Southern California, Los Angeles, CA 90089 USA (e-mail: molisch@usc.edu).}
\thanks{Minseok Choi is with the Department of Electronic Engineering, Kyung Hee University, Republic of South Korea (email: choims@khu.ac.kr)}
\vspace{-0.3in}
}
\begin{document}

\maketitle
\IEEEpeerreviewmaketitle

\begin{abstract}
Heterogeneous system configurations of distributed clients connected to the \ac{cs} via a time-varying wireless network pose significant challenges for popular distributed \ac{ml} algorithms such as \ac{fl}. 
Although the limited (radio and computational) resources are widely acknowledged, two critical yet often ignored aspects are (a) client devices can only dedicate a small chunk of their limited storage for the \ac{fl} task and (b) new training samples may arrive continually in many practical wireless applications.
Therefore, we propose a new \ac{fl} algorithm, \ac{osafl}, specifically designed for tasks with continual data arrival in resource-constrained environments.  
We first theoretically show how the convergence bound is affected by continual data distribution shifts, uncertain client participation, gradient quantization errors, and noise from stochastic gradients and statistical data heterogeneity across clients.  
We then show how to (sub-optimally) minimize these errors by choosing appropriate aggregation weights at the \ac{cs} during global update. 
Our extensive simulation results across three popular image classification datasets and three \ac{ml} models with different numbers of trainable parameters validate the effectiveness of the proposed \ac{osafl} algorithm compared to (modified) state-of-the-art \ac{fl} baselines.
\end{abstract}

\begin{IEEEkeywords}
Continual data sensing, federated learning, resource optimization, wireless networks.
\end{IEEEkeywords}

\vspace{-0.15in}

\section{Introduction}
% \bred{($\approx 4.1$ textheight; Abstract+Introduction $==>$ 2.5 pages)}

\acresetall 
\acused{fl}

% \noindent 
% \bred{\prg $1$ ($\approx 0.3$ textheight :: discuss general goals in FL)}
% \begin{itemize}
    % \item \ac{fl} \cite{mcmahan17Communication} is a distributed \ac{ml} approach; an \ac{ml} model is trained across distributed clients local dataset
    % \item Protect client's data privacy     
    % \item Layout the global/local objective functions using (\ref{globalObj_Static}) and (\ref{localObj_Static}) 
% Generally, \ac{fl} algorithm is designed to train a global \ac{ml} model $\mathbf{w} \in \mathbb{R}^N$ on distributed $U$ clients' local datasets $\mathcal{D}_u$'s by optimizing the following objective function \cite{mcmahan17Communication}.
% \begin{align}
% \label{globalObj_Static}
%     \underset{\mathbf{w}}{\mathrm{minimize}} ~~ f(\mathbf{w}|\mathcal{D}) \coloneqq \sum\nolimits_{u=0}^{U-1} \alpha_u f_u(\mathbf{w}|\mathcal{D}_u),
% \end{align}
% where $0 \leq \alpha_u \leq 1$ with $\sum_{u=0}^{U-1}\alpha_u=1$, $\mathcal{D}_u$ is the local dataset of client $u$ and $\mathcal{D} = \bigcup_{u=0}^{U-1} \mathcal{D}_u$.
% Besides, $f_u$ is the local loss function of client $u$ and is defined as follows:
% \begin{align}
% \label{localObj_Static}
%     f_u(\mathbf{w}|\mathcal{D}) \coloneqq \frac{1}{|\mathcal{D}_u|} \sum_{(\mathbf{x}, y) \in \mathcal{D}_u} l(\mathbf{w}|(\mathbf{x}, y)), 
% \end{align}
% where $l(\mathbf{w}|(\mathbf{x},y)$ is the loss associated to training sample $(\mathbf{x},y)$.
% \item Point out the dependency to the dataset $\mathcal{D}_u$
% \end{itemize}
\noindent 
\IEEEPARstart{F}{ederated} learning (FL) \cite{mcmahan17Communication} is a privacy-preserving distributed \ac{ml} approach that enables training an \ac{ml} model, parameterized by trainable parameters $\mathbf{w} \in \mathbb{R}^N$, where $N$ is the number of parameters, across distributed clients' local datasets.
A \ac{cs} typically broadcasts the model to a set of clients $\mathcal{U}=\left\{u\right\}_{u=0}^{U-1}$, who then train the received model on their respective local dataset $\mathcal{D}_u$ to minimize 
\begin{align}
\label{localObj_Static}
    f_u(\mathbf{w}|\mathcal{D}_u) \coloneqq \left( 1 / |\mathcal{D}_u| \right) \sum\nolimits_{(\mathbf{x}, y) \in \mathcal{D}_u} l(\mathbf{w}|(\mathbf{x}, y)), 
\end{align}
where $l(\mathbf{w}|(\mathbf{x},y)$ is the loss associated to training sample $(\mathbf{x},y)$. 
Moreover, $\mathbf{x}$ and $y$ are the training feature and label, respectively. 
As such, \ac{fl} is designed to train $\mathbf{w}$ distributively to optimize \cite{mcmahan17Communication}
\begin{align}
\label{globalObj_Static}
    \underset{\mathbf{w}}{\mathrm{minimize}} ~~ f(\mathbf{w}|\mathcal{D}) \coloneqq \sum\nolimits_{u=0}^{U-1} \alpha_u f_u(\mathbf{w}|\mathcal{D}_u),
\end{align}
where $0 \leq \alpha_u \leq 1$ with $\sum_{u=0}^{U-1}\alpha_u=1$ and $\mathcal{D} = \bigcup_{u=0}^{U-1} \mathcal{D}_u$.
Since a client's loss function $f_u(\mathbf{w}|\mathcal{D}_u)$ depends on its local dataset $\mathcal{D}_u$ and the global loss function $f(\mathbf{w}|\mathcal{D})$ is a weighted combination of the clients' loss functions, clients' datasets directly impact the global loss function: non-IID (independent and identically distributed) data distributions cause statistical data heterogeneity, which negatively affects this loss function.

While \ac{fl} in wireless networks is particularly attractive because clients do not need to offload their privacy-sensitive data to the \ac{cs}, {\em resource constraints} on both the client and network sides need to be acknowledged \cite{niknam2020federated}. 
One fundamental assumption in typical \ac{fl} is that the client has a static dataset $\mathcal{D}_u$ available before training begins. 
However, $\mathcal{D}_u$ may not be static and/or readily available in many practical applications \cite{pervej2025resource, hosseinalipour2023parallel}.
Moreover, the statistical distributions of client data may not remain static.
The training samples can {\em continually arrive} \cite{pervej2022mobility,pervej2025resource,hosseinalipour2023parallel} in many applications.
This is particularly true for many wireless applications, such as \ac{isac} \cite{pulkkinen2024model}, temporal/online \ac{csi} prediction \cite{luo2018channel}, demand predictions in video caching network \cite{pervej2025resource}, simultaneous localization and mapping \cite{taniguchi2017online}, wearable sensors \cite{zhang2022online}, etc.
For example, new \ac{csi} are available in each radio frame, which is typically $10$ milliseconds. 
The clients can use their historical \ac{csi} to predict the future \ac{csi}, which may take only a few milliseconds, and the actual \ac{csi} information can be known at the end of that radio frame. 
Therefore, the clients can use their prediction results and the actual label for model training. 
In wireless localization \cite[Chapter $29$]{molisch2023wireless}, many users share real-time location information (some users also do not reveal such information), i.e., the actual label is known instantaneously, which can help design online \ac{ml} algorithms for efficient indoor/outdoor localization.

% \noindent
% \bred{\prg 3 ($\approx 0.28$ textheight :: discuss limited storage memory issues in wireless devices)} 
% \begin{itemize}
    % \item Clients can store limited number of training samples since they typically have limited storage; especially in IoT devices e.g., a $32\times32$ RGB image has $3\times32\times32=3072$ pixels, which takes $3072 \times 32 = 98304$ bits and $3072 \times 64 = 196608$ bits for single-precision floating point and double-precision floating point representations, respectively, based on IEEE standard for floating-point arithmetic \cite{lam2013automatically}.
    % Besides, generally the RGB image is used as the frame of a video, and each second of a video clip consists of multiple frames.
    % \item When new training samples arrive, clients may need to remove old samples to make space for the newly arrived samples
    % \item In some cases, the deleted samples will be lost forever 
% \end{itemize}
The {\em limited storage} capacity of clients poses another major concern in devising an efficient \ac{fl} solution in wireless networks, as clients can only store a limited number of training samples. 
Client devices are not expected to solely store these training samples and the \ac{ml} model; they also store user-generated and operational files for the \ac{os}.
Therefore, only a small chunk of the limited storage can be used for the \ac{fl} task. 
This can exacerbate the learning process, especially when the clients have very limited storage, e.g., \ac{iot} devices. 
As such, clients may need to {\em remove} old samples to make space for newly arrived samples. 
Therefore, the deleted samples will be lost forever.

This problem is {\em unique} to wireless applications and differs from traditional \ac{cfl} \cite{ma2022continual,qi2023better} where new tasks continually arrive. 
More specifically, unlike new tasks or classes arriving {\em incrementally}, in many wireless applications, the task (or the number of labels) may remain the same, but the data distributions change over time.
Take beam management for example: given that the \ac{bs} and \ac{ue} have fixed numbers of antennas, the distribution of the strongest beam can change over time and frequency due to time-varying frequency-selective channels and the distribution of various \acp{mpc} in the environment.
As such, regardless of the {\em distribution} of the initial labels in a client's dataset, it is bound to {\em change} due to the removal of old samples to make space for continually arriving new samples from {\em unknown classes}.

% \noindent
% \bred{\prg 4 ($\approx 0.3$ textheight :: explain the above limitations and time-varying properties may require special attention)} 
% \begin{itemize}
%     \item Time varying dataset and ephemeral utilization of some training samples can lead to severe performance degradation compared to \ac{ml} with stationary dataset. Draw connection from eqn (\ref{localObj_Static}) and (\ref{globalObj_Static}) 
%     \item Use Fig. \ref{Stat_vs_Varying_DataPerform} to explain that time-varying dataset may require special attention/customization 
%     \item Explain that since the existing \ac{fl} algorithms mainly considered static datasets, those may perform poorly in practical cases
%     \item Besides, the theoretical analysis of the existing works were mainly based on the static dataset assumptions.
%     \item We need to investigate further 
% \end{itemize}
Time-varying datasets and the ephemeral use of some training samples can lead to severe performance degradation compared to general model training on a stationary dataset, which seeks to minimize the loss function in (\ref{localObj_Static}).  
Due to storage limitations and changes in clients' label distributions in subsequent training rounds, their optimal local models may no longer remain optimal in the new datasets, a phenomenon known as {\em concept drift} \cite{hosseinalipour2023parallel}, leading to instability in prediction performance.
Therefore, when $\mathcal{D}_u$ changes over time, we need to pay special attention and customize the learning algorithms.
Besides, since existing \ac{fl} algorithms primarily focus on static datasets, they may perform poorly in resource-constrained settings.
As such, any change in the training datasets due to new sample arrivals and old sample departures to accommodate the fixed memory storage of the clients necessitates new theoretical analysis and new \ac{fl} learning algorithms.
Moreover, the \ac{ml} model size can vary by applications, leading some clients to store only a few training samples in their remaining storage.
As such, keeping the dataset static by ignoring newly arrived samples and training the model on only a few samples may not sufficiently capture changes in the data distribution over time, which can degrade test performance.

\subsection{State-of-the-Art FL Algorithms}

% \noindent
% \bred{\prg $1$ ($\approx 0.18$ textheight :: Introduce the general \ac{fl} algorithms often customized/applied for/in wireless networks)}
% \begin{itemize}
%     \item The so called \ac{fedavg} algorithm \cite{mcmahan17Communication} considers distributed clients train a global model in their local datasets for a equal numbers of local rounds and offload the trained model parameters to the central server
%     \item \ac{fedavg} is not practical in many cases; straggler effect due to system heterogeneity. Besides, \ac{fedavg} suffers from client-drift under non-IID data distributions \cite{karimireddy20scaffold}
% \end{itemize}
\noindent
The so-called \ac{fedavg} algorithm \cite{mcmahan17Communication} paved the way for privacy-preserving \ac{fl}.
This seminal work assumes that distributed clients train a global model on their local datasets for an equal number of local rounds and then offload the trained model parameters to the central server.
However, this is not practical in many cases where system heterogeneity can introduce stragglers. 
Besides, under non-IID data distribution, client $u$'s $\mathcal{D}_u$ statistically differ from client $u'$'s $\mathcal{D}_{u'}$ which introduces client-drift in \ac{fedavg} \cite{karimireddy20scaffold}.

% \noindent
% \bred {\prg $2$ ($\approx 0.35$ textheight :: Discuss how the above problems with \ac{fedavg} is solved)}
% \begin{itemize}
    % \item FedProx \cite{li2020federated} was proposed to mitigate straggler effect by allowing partially locally trained model aggregation, and a proximal term was added to the local objective function in order to keep the local model parameter close to the initially received global model.
    % \item Karimireddy \textit{et al.} proposed \ac{scaffold} that mitigates client-drift using a local control variate \cite{karimireddy20scaffold}. More specifically, in order to control the drift, a correctional variate is added to the client's updated gradient in finding the local updated model.   
    % \item The problem of system heterogeneity was further explored in \ac{fednova} \cite{wang2020Tackling}: aggregates normalized trained gradients of the clients  
%     \item Partial client participation and normalized gradient aggregation policy was proposed in \ac{afa} \cite{yang22Anarchic}  
%     \item Recently, Ye \textit{et al.} proposed \ac{feddisco} that considers label/class distribution discrepancies into account in model aggregation weights \cite{ye23fedDisco}
% \end{itemize}

Following \ac{fedavg}, many works \cite{li2020federated,karimireddy20scaffold,wang2020Tackling,yang22Anarchic,ye23fedDisco} proposed ways to address the issues of system and data heterogeneity. 
FedProx \cite{li2020federated} mitigates the straggler effect by allowing partially locally trained model aggregation and adding a proximal term to the local objective function in order to keep the local model parameter close to the initially received global model.
\cite{karimireddy20scaffold} mitigates client drifts using local control variates. 
% More specifically, to control drift, a correctional variate is added to the client's updated gradient to obtain the updated local model. 
The problem of system heterogeneity was further explored in \ac{fednova} \cite{wang2020Tackling}.
This pioneering work advocates aggregating clients' normalized trained gradients rather than their raw trained gradients to find the global model.    
The normalized gradients are also used in the global aggregation policy in \ac{afa} to handle the data and system heterogeneity \cite{yang22Anarchic}.
Recently, \ac{feddisco} also considered normalized gradients with label/class distribution discrepancies to find global aggregation weights \cite{ye23fedDisco}.

\subsection{State-of-the-Art FL in Wireless Networks}
% \noindent 
% \bred{\prg 1 ($\approx 0.3$ textheight :: General FL (full-device participation) in wireless networks)}
% \begin{itemize}
%     \item There are a plethora of recent studies \cite{chen2020joint,amiri2020federated,tran2019federated,pervej2023resource,zhou2022joint,huang2022wireless} that customize the above general \ac{fl} algorithms that are trained in wireless networks. 
%     \item Considering the wireless fading channels between the clients and \ac{bs}, joint optimization of the intertwined wireless networking parameters and \ac{fl} parameters are widely popular \cite{chen2020joint,amiri2020federated,tran2019federated,pervej2023resource,zhou2022joint}.
%     \item While \ac{fl} is mainly developed for protecting user's data privacy, some users may have insensitive data that can be offloaded to the \ac{cs}; enables the \ac{cs} to perform training 
%     \item Huang \textit{et al.} exploited a similar hybrid training strategy, where some data privacy-sensitive clients perform local model training and some privacy-insensitive clients offload their data to the \ac{bs} that allows the \ac{bs} to perform centralized training \cite{huang2022wireless}. 
% \end{itemize}
% \vspace{0.02\textheight}

\noindent
Many recent studies \cite{chen2020joint,amiri2020federated,tran2019federated,pervej2023resource,zhou2022joint} customize the above general \ac{fl} algorithms and train them in wireless networks. 
Most notably, the intertwined wireless networking and \ac{fl} parameters are often optimized jointly when the \ac{ml} model is exchanged between the clients and \ac{cs} using the time-varying wireless fading channels between the clients and \ac{bs}\footnote{These studies assume the \ac{bs} acts as the \ac{cs} or the \ac{bs} works as the medium to transport the model parameters between the clients and \ac{cs}.}     \cite{chen2020joint,amiri2020federated,tran2019federated,pervej2023resource,zhou2022joint}.
While these studies assume the resources are optimized for performing \ac{fl} training in a coordinated fashion, where the models are exchanged in every \ac{fl} training round, variations of these assumptions also exist \cite{zhao2024ensemble,li2024learning, liu2022resource}.  
% \cite{zhao2024ensemble} proposed a model-ensemble-based \ac{fl} framework where clients with similar data distributions are kept in a cluster, and each such cluster acts as a parameter server to train a cluster model using the associated clients. 
% During the training phase, the clusters train their respective models without any global aggregation. However, an ensemble model is generated from the trained cluster models during the inference phase.
% It is also common to allocate resources to facilitate FL in wireless networks \cite{li2024learning, liu2022resource}. 
% Clients' transmit powers can be optimized to maximize signal-to-interference-plus-noise ratio to ensure model parameters are received at the BS \cite{li2024learning}. 
% Besides, resource costs for model training and benefits from such training---in terms of model accuracy---can be assessed in order to proportionately allocate network and computing resources to achieve a certain level of accuracy \cite{liu2022resource}. 

Due to practical resource constraints in wireless networks and on the client side, not all clients may participate in model training in every global round, motivating partial client selection for model training \cite{pervej2025resource,zhang2022joint,saha2022data,yao2023gomore,chen2024efficient}. 
In our earlier work \cite{pervej2025resource}, we proposed a resource-aware \ac{hfl} with partial client participation, in which clients were selected to optimize a weighted objective function that balances the number of local \ac{sgd} steps and the associated energy cost.  
\cite{zhang2022joint} used a heuristic method to jointly configure client scheduling, local training rounds, and radio resource allocations to train a \ac{fedavg}-based algorithm in a wireless network. 
Data quality was quantified by data volume and label heterogeneity in \cite{saha2022data}, which was then leveraged to select a subset of clients for model training. 
Partial client participation was also optimized in \cite{yao2023gomore}, where clients' trained local models were used to update the global model when client uplink transmissions were error-free, whereas the previous round's global model was reused when they were erroneous.
Recently, \cite{chen2024efficient} not only considered partial client selection for \ac{fl} in resource-constrained wireless networks but also proposed partial model aggregations: only a few layers of the entire model are exchanged and aggregated.

% \bred{\prg 3 ($\approx 0.3$ textheight :: briefly discuss other popular \ac{fl} algorithms for resource-constrained wireless networks)}
% \begin{itemize}
%     % \item Model pruning \cite{liu2018rethinking} and model quantization \cite{polino2018model} are other two popular directions to handle the constraints in wireless networks
%     % \item Clarify that pruning and compression are similar ideas but differ \bblue{significantly}s in theory 
%     % \item In model pruning, some of the neurons of the \ac{ml} models are pruned to reduce the training overhead, whereas in model quantization typically all parameters are quantized and training happens with the quantized parameters 
%     \item Naturally, some recent works \cite{jiang2022model,chen2024adaptive,pervej2024hierarchical,shah2021model,wang2023performance,kim2023green} utilized these concepts for performing \ac{fl} in wireless networks
%     \item In particular, \cite{chen2024adaptive,pervej2024hierarchical} propose leveraging model pruning and optimizing the pruning ratio based on available resources. 
%     \item Besides, \cite{wang2023performance} and \cite{kim2023green} advocate choosing the quantization levels according to the available wireless and computation resources.
% \end{itemize}

Model pruning \cite{liu2018rethinking} and quantization \cite{polino2018model} are the other two prominent avenues to alleviate clients' limited computational power and communication overheads. 
While model pruning and model quantization are intended to mitigate resource constraints and leverage similar ideas, they differ largely in theory. 
In model pruning, some of the neurons of the \ac{ml} models are pruned to reduce the training overhead. 
In contrast, all parameters are quantized in model quantization, and the training happens with the quantized parameters. 
Naturally, some recent works \cite{chen2024adaptive,pervej2024hierarchical,wang2023performance,kim2023green} utilized these concepts for performing \ac{fl} in wireless networks.
In particular, \cite{chen2024adaptive,pervej2024hierarchical} propose leveraging model pruning and optimizing the pruning ratio based on available resources at the client and network sides. 
Besides, \cite{wang2023performance} and \cite{kim2023green} advocate choosing the quantization levels according to the available wireless and computation resources.

Lastly, gradient quantization \cite{alistarh2017qsgd} is also widely used to reduce communication overheads. 
Clients offload quantized gradients (or model differences) to the \ac{cs}, which then use these quantized updates to aggregate the global model \cite{reisizadeh2020fedpaq}. 
This novel gradient (or model differences) quantization concept, along with joint wireless and federated learning parameter optimization, has also been well explored in the literature \cite{liu2023Communication,hou2025lightweight,mao2022communication}.

\subsection{Research Gaps and Our Contributions}
% \noindent
% \bred{\prg 1 ($\approx 0.25$ textheight :: briefly summarize the research gaps)}
% \begin{itemize}
%     \item Point out that the existing works mainly considers static dataset with heterogeneous computation power 
%     \item Point out that limited storage capacity mandates data management; some training samples can be ephemeral
%     \item New theoretical and empirical studies are needed
%     \item Inspired by the above factors, we propose our \ac{osafl} algorithm to address these shortcomings.
% \end{itemize}
% \vspace*{0.1 \textheight}
\noindent
The above studies established how system heterogeneity and constrained wireless network resources necessitate jointly optimizing resources on both the client and network sides. 
% One fundamental parameter is how many local training steps a client can perform before it needs to send the trained model to the \ac{cs}.
% Based on available resources and network conditions, this number can differ significantly.
However, the above studies mostly consider static datasets, which may certainly not be the case in many practical wireless applications with resource-constrained clients. 
With the continual rise of data-sensing applications, proper data management and new algorithms to address {\em shifts} in data distribution are necessary.
In this paper, we, therefore, propose an \ac{osafl} algorithm to address these shortcomings.
Our key contributions are 
\begin{itemize}
    \item We design a new algorithm---called \ac{osafl}---that specifically accounts for the time-varying and resource-constrained characteristics of the underlying wireless network and the clients, assuming that the clients perform their local training steps based on their available resources and remove old training samples to make space for new ones.
    \item To handle system and data heterogeneity, we use quantized normalized gradients in the global model aggregation policy. Based on our theoretical convergence analysis, we optimize clients' {\em aggregation scores} that facilitate the convergence of the proposed algorithm. Our analysis reveals that a client's contribution should be proportional to key parameters, such as their local rounds, participation probability, changes in loss functions due to data drift, and other parameters that fall under common assumptions for theoretical analysis.            
    \item Although, to the best of our knowledge, there are no exact existing baselines, we use existing popular algorithms and modify them for apples-to-apples performance comparison of \ac{osafl} on the image classification task with $3$ popular datasets with $3$ different \ac{ml} models, namely a shallow \sqz, a moderate \ac{cnn}, and a bulky \ac{resnet18}. 
    Our extensive simulation analysis shows that the proposed \ac{osafl} algorithm outperforms these baselines in almost all scenarios.
\end{itemize}

The rest of the paper is organized as follows.
We first discuss the preliminaries of \ac{fl} in resource-constrained wireless networks with time-varying datasets in Section \ref{prelim}.
Then, we introduce our \ac{osafl} algorithm in Section \ref{sec_OSAFL_Alg}, followed by an extensive theoretical analysis of it in Section \ref{sec_conv_osafl}.
Section \ref{sec_results} presents our empirical results and discussions. 
Finally, Section \ref{sec_conclusion} concludes the paper.

\section{FL in Constrained Wireless Networks with Time-Varying Datasets: Preliminaries}
\label{prelim}
\noindent
In this work, we consider that wireless devices act as clients, and the \ac{cs} is embedded into the clients' serving \ac{bs}.

\subsection{Dataset Acquisition for Model Training}
\noindent
Since wireless devices have limited resources, we assume that clients can only dedicate a small chunk of their storage to store the \ac{ml} model and training samples. 
Without any loss of generality, denote the maximum number of training data samples client $u$ can store by $\mathrm{D}_u$. 
Besides, each client has its own initial dataset $\mathcal{D}_u^{t=0}$, where $|\mathcal{D}_u^{t}| = \mathrm{D}_u$, $|\cdot|$ represents the cardinality of a set and $t$ represents the $t^{\mathrm{th}}$ \ac{fl} round. %\footnote{By the term \ac{fl} round, we mean the \emph{global} round of an \ac{fl} algorithm, which is essentially a single step of any \ac{fl} algorithm.}.  
The clients have non-IID label distributions; i.e., the label distributions across clients differ.
Furthermore, we assume that each client can have at most $E_u$ new training samples between two consecutive \ac{fl} rounds. 
More specifically, each client has a probability of obtaining new samples. 
In many wireless applications, these new samples may come from historical observations and/or from continual sensing using the client's onboard sensor(s). 
While we assume that the arrival probabilities of new data are fixed, the instantaneous label distributions are {\em stochastic} and can change during each training round. 
Moreover, the distributions within a client itself can change from one round to another.
For simplicity, we assume that there are $E_u$ number of possible sample arrival slots, each with probability $p_{u,\mathrm{ac}}$, between two \ac{fl} rounds.
In other words, new sample arrival per slot is modeled as an independent \emph{Bernoulli} distribution with \emph{success probability} $p_{u,\mathrm{ac}}$. 
As such, the total number of new training samples between two \ac{fl} rounds can be modeled as a \emph{Binomial} distribution with parameters $(E_u, p_{u,\mathrm{ac}})$. 

% \begin{figure}[!t]
%     \centering
%     \includegraphics[width=0.46\textwidth, height=0.22\textheight]{Figures/Caching/Training-Round-vs-Popularity_GlobalR_100_Files_100_GenreSkew_0.30_kGenre_2_epsMin_0.40_epsMax_0.90.pdf}
%     \caption{Change in (label) distributions in different training rounds for a particular user: video caching dataset \cite{pervej2025resource}}
%     \label{chngInPopDist}
% \end{figure}

As each client can only store $\mathrm{D}_u$ training samples, when a new training sample arrives, the client must remove an old training sample to make space for the newly arrived sample\footnote{One may model this data removal process using predefined rules and policies, which are beyond the scope of this paper.}.  
% This removal process can follow different techniques. 
% For example, a simple approach, which is also used in this work, is removing the oldest training sample in the dataset following a \emph{\ac{fifo}} policy\footnote{ 
% Other policies can also be incorporated easily.}.
We consider $E_u < \mathrm{D}_u$, and the dataset is only updated before the start of a new global round. 
In practice, the arrived sample can be held in a temporary buffer, and the dataset can be updated only once before a new global round starts. 
Therefore, the training dataset $\mathcal{D}_u^t$ remains unchanged until the next, i.e., $(t+1)^{\mathrm{th}}$ \ac{fl} round begins.
Moreover, since model training only happens periodically in each global round, and each such round has a fixed duration, which is discussed in the sequel, we assume that each client has sufficient time to process these newly arrived samples to prepare as labeled data, which they use for the model training.

\subsection{FL with Time Varying Datasets}
% \bred{(1 paragraph, $\approx 0.25$ textheight)}
% \begin{itemize}
    % \item Define the global objective function for time varying dataset
    % \begin{align}
    % \label{globalObjective_FedAvg}
    %     f(\mathbf{w}^t| \mathcal{D}^t) \coloneqq \sum\nolimits_{u=0}^{U-1} \alpha_u f_u(\mathbf{w}^t | \mathcal{D}_u^t),
    % \end{align}
    % where $f_u(\mathbf{w}|\mathcal{D}_u^t)$ is the local objective function, %which is defined as
    % \begin{align}
    % \label{localObj}
    %     f_u(\mathbf{w}|\mathcal{D}_u^t) \coloneqq \frac{1}{|\mathcal{D}_u^t|} \sum_{(\mathbf{x}, y) \in \mathcal{D}_u^t} l(\mathbf{w}|(\mathbf{x}, y)), 
    % \end{align}
    % where $l(\mathbf{w}|(\mathbf{x},y)$ is the loss associated to training sample $(\mathbf{x},y)$.
    % \item Clarify how this objective function differs from general \ac{fl} algorithms
% \end{itemize}
\noindent
Let us denote the global model during global round $t$ by $\mathbf{w}^t$.
Since dataset $\mathcal{D}_u^t$ is time-varying, with the \ac{fedavg} algorithm \cite{mcmahan17Communication}, the \ac{cs} aims to minimize the following objective function in each \ac{fl} round.
\begin{align}
\label{globalObjective_FedAvg}
    f^t(\mathbf{w}^t) \coloneqq f(\mathbf{w}^t| \mathcal{D}^t) \coloneqq \sum\nolimits_{u=0}^{U-1} \alpha_u f_u^t (\mathbf{w}^t),
\end{align}
where $\mathcal{D}^t \coloneqq \bigcup_{u=0}^{U-1} \mathcal{D}_u^t$ and $f_u^t(\mathbf{w}^t)$ is the local objective function of client $u$, which is defined as
\begin{align}
\label{localObj}
    f_u^t (\mathbf{w}^t) \coloneqq f_u (\mathbf{w}^t|\mathcal{D}_u^t) = (1/|\mathcal{D}_u^t|) \sum\nolimits_{(\mathbf{x}, y) \in \mathcal{D}_u^t} l(\mathbf{w}^t|(\mathbf{x}, y)). 
\end{align}
Notice that the local and global objective functions with the dynamic datasets differ from the objective function in the general static case, as shown in (\ref{localObj_Static}) and (\ref{globalObj_Static}), respectively.

As we can see in (\ref{globalObjective_FedAvg}), clients' local loss functions affect the global loss function.
Besides, the updated global model also depends on the clients' updated model.    
However, since both the clients and the network have many resource constraints, and the wireless links are also time-varying, it is essential to optimize clients' (a) local training rounds, (b) \ac{cpu} clock cycles, and (c) transmission power for training any FL algorithm in resource-constrained wireless networks, which are discussed in the sequel.

\subsection{Joint Resource Optimization Under Resource Constraints}
\label{resourceOptim}
\noindent
We assume that the clients have a fixed deadline, denoted by $\mathrm{t_{th}}$, and a limited energy budget, denoted by $\mathrm{e}_{u,\mathrm{bd}}$, to perform local model computation and trained model offloading. 
As such, the clients need to explicitly consider the overheads for local model training and offloading in order to determine the number of local \ac{sgd} rounds, denoted by $\kappa_u^t$, that they can perform during global round $t$ to minimize (\ref{localObj}).
Each client calculates the associated computation time overhead as \cite{pervej2025resource} 
\begin{align}
    \mathrm{t}_{u,\mathrm{cp}}^t &\coloneqq (n\bar{n} c_u s_u \times \kappa_u^t)/\bar{f}_u^t,
\end{align} 
where $n$ is the number of mini-batches, $\bar{n}$ is the mini-batch size, $c_u$ is the number of \ac{cpu} cycles to compute $1$-bit data, $s_u$ is the data sample size in bits, and $\bar{f}_u^t$ is the \ac{cpu} cycle. 
Similarly, they calculate the energy overhead as \cite{pervej2025resource} 
\begin{align}
\label{eq:compute_energy_cost}
    \mathrm{e}_{u,\mathrm{cp}}^t &\coloneqq 0.5 \rho n \bar{n} c_u s_u \left(\bar{f}_u^t\right)^2 \times \kappa_u^t,
\end{align}
where $\rho$ is the effective capacitance of the \ac{cpu} chip.
Besides, they calculate the offloading time as \cite{pervej2025resource} 
\begin{align}
    \mathrm{t}_{u,\mathrm{up}}^t \coloneqq \Upsilon/ \left[\omega \log_2 \left( 1 + [\Xi_{u}^t \Gamma_u^t p_u^t]/[\omega \xi^2]\right)\right],
\end{align}
where $\Upsilon$ is the resultant uplink payload size from the model/gradient update, $\omega$ is the bandwidth, $\Xi_u^t$ is the large-scale path loss, $\Gamma_u^t$ is the log-Normal shadowing, $p_u^t$ is the  transmission power, and $\xi^2$ is the noise power spectral density. 

It is worth noting that the above data rate calculation makes two assumptions: (a) small-scale fading is {\em averaged out} due to diversity in modern wireless networks, and (b) $\Gamma_u^t$ is known at the time we need to make the optimization decisions (see below), either because it stays constant for the offloading duration, or because one may extrapolate. 
While transmission of model/gradient updates in the uplink may require multiple transmission slots, i.e., larger than channel coherence time, the above simplifications are widely used \cite{pervej2024hierarchical,pervej2025resource, hosseinalipour2023parallel}, and our key contributions are in the \ac{fl} algorithm presented in the sequel. 
Nevertheless, interested readers may refer to \cite{pervej2023resource} for such an in-depth per-transmission-slot-based analysis.

Thus, the energy overhead for payload offloading is
\begin{align}
    \mathrm{e}_{u,\mathrm{up}}^t \coloneqq \mathrm{t}_{u,\mathrm{up}}^t \cdot p_u^t.
\end{align} 

\subsubsection{Problem Formulation}
Given the fixed deadline and energy constraints, each client aims to jointly find the maximum possible local rounds and the corresponding optimal \ac{cpu} frequency and uplink transmission power in an energy-efficient way by solving the following optimization problem.
\begin{subequations}
\label{localIterOptim_Orig}
\begin{align}
    \underset{ \kappa_u^t, f_u^t, p_u^t }  {\tt{max} ~ } & ~~  \frac{\epsilon \cdot \kappa_u^t}{0.5 \rho n \bar{n} c_u s_u \left(\bar{f}_u^t\right)^2} + \frac {(1-\epsilon) \cdot \omega \log_2 \left(\rs 1 + \frac{\Xi_{u}^t \Gamma_u^t p_u^t}{\omega \xi^2} \rs \right) } {p_u^t} \tag{\ref{localIterOptim_Orig}} \\
    {\tt{s.t.}} & \quad 0 \leq \kappa_u^t \leq \kappa, \qquad \kappa_u^t \in \mathbb{Z}^{+}, \\
    &\quad 0 \leq p_u^t \leq p_{u,\mathrm{max}}, \\
    &\quad 0 \leq \bar{f}_u^t \leq \bar{f}_{u,\mathrm{max}}, \\
    &\quad \mathrm{e}_{u,\mathrm{cp}}^t + \mathrm{e}_{u,\mathrm{up}}^t \leq \mathrm{e_{bd}}, \label{eq:energy_cons} \\
    &\quad \mathrm{t}_{u,\mathrm{cp}}^t + \mathrm{t}_{u,\mathrm{up}}^t \leq \mathrm{t_{th}}, \label{eq:deadline_cons}
\end{align}
\end{subequations}
where $\epsilon \in [0,1]$ is a weighting parameter that balances the local training energy utility, defined as the fraction of the total local training rounds to the energy expense per local round\footnote{While it is possible to define this metric as the fraction of local training rounds to the corresponding energy expense, as calculated in (\ref{eq:compute_energy_cost}), such a definition will cancel out the $\kappa_u^t$ terms appearing both in the numerator and denominator. 
Thus, the objective function may not maximize the number of local training rounds but only minimize energy consumption for training and offloading, which does not serve the purpose of performing many rounds of local training before sending updates to the \ac{cs}.}, with the corresponding energy efficiency of offloading the trained model.
Besides, the constraints ensure local rounds, transmission power, \ac{cpu} frequency, total energy overhead, and total time overhead are within the allowable upper limits. 

\begin{Remark} 
By solving (\ref{localIterOptim_Orig}), each client intuitively aims to maximize the number of local training rounds, $\kappa_u^t$, thereby generally improving their corresponding model's predictive performance, while minimizing energy costs by jointly optimizing $\kappa_u^t$, $\bar{f}_u^t$, and $p_u^t$.
This problem, however, is a mixed-integer nonlinear problem and thus non-convex. 
In this work, we assume that each client has perfect \ac{csi} available and can iteratively solve (\ref{localIterOptim_Orig}). 
We stress that this problem can be solved in different ways, which is not particularly the key contribution of this work. 
\end{Remark}

\subsubsection{Problem Transformations and Iterative Solution}
The original problem is difficult to solve due to its non-convex nature:
joint solutions often require relaxation of the integer $\kappa_u^t$ variable to a continuous one and/or linearization of the non-convex/non-linear constraints.
However, one can observe that, when the other two optimization variables are known, the objective function is linear on $\kappa_u^t$, while monotonically decreasing on CPU frequency $\bar{f}_u^t$ and transmit power $p_u^t$.
As such, we seek a simple iterative solution to find the maximum possible local iteration without violating the constraints.
More specifically, we start with $\kappa_u^t=1$ and find the (sub-)optimal CPU frequency and transmit power as follows.

\noindent
\textbf{Optimize CPU Frequency given Local Rounds and Transmission Power:} 
Given the $\kappa_u^{t}$ and an initial feasible transmission power $p_u^{t,i}$, we optimize the \ac{cpu} frequency by transforming  (\ref{localIterOptim_Orig}) as
\begin{subequations}
\label{cpuFreqOptim_Sub_Prob}
\begin{align}
    &\underset{ \bar{f}_u^t }  { \tt{max} ~~ } \frac{\epsilon \cdot \kappa_u^{t}}{0.5 \rho n \bar{n} c_u s_u \left(\bar{f}_u^t\right)^2} + \frac {\omega\left(1-\epsilon\right) \log_2 \Big( 1 + \frac{\Xi_{u}^t \Gamma_u^t p_u^{t,i}} {\omega \xi^2}\Big) } {p_u^{t,i}}  \tag{\ref{cpuFreqOptim_Sub_Prob}} \\
    & {\tt{subject ~ to}} \quad 0 \leq \bar{f}_u^t \leq \bar{f}_{u,\mathrm{max}},\\ 
    &\quad 0.5 \rho n \bar{n} c_u s_u \kappa_u^{t} \left(\bar{f}_u^t\right)^2 + \frac{\Upsilon \cdot p_u^{t,i}} {\omega \log_2 \Big( 1 + \frac{\Xi_{u}^t \Gamma_u^t p_u^{t,i}}{\omega \xi^2} \Big) } \leq \mathrm{e_{bd}}, \\
    &\quad \frac{n\bar{n} c_u s_u \kappa_u^{t}}{\bar{f}_u^t} + \frac{\Upsilon} {\omega \log_2 \left( 1 + [\Xi_{u}^t \Gamma_u^t p_u^{t,i} ] / [\omega \xi^2] \right)} \leq \mathrm{t_{th}}.
\end{align}
\end{subequations}

\begin{Lemma}
\label{lemma_CPUfreqGiven_kappa_p}
Given $\kappa_u^{t}$ and $p_u^{t,i}$, the optimal solution of (\ref{cpuFreqOptim_Sub_Prob}) is
\begin{align}
\label{optimal_CPU_freq}
    \bar{f}_u^{t^*} = \frac {n\bar{n} c_u s_u \kappa_u^{t} \times \omega \log_2 \left( 1 + [\Xi_{u}^t \Gamma_u^t p_u^{t,i} ] / [\omega \xi^2]\right)} { \mathrm{t_{th}} \times \omega \log_2 \left( 1 + [\Xi_{u}^t \Gamma_u^t p_u^{t,i}]/[\omega \xi^2]\right) - \Upsilon}.
\end{align}
\end{Lemma}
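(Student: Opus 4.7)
\textbf{Proof Proposal for Lemma \ref{lemma_CPUfreqGiven_kappa_p}.}
The plan is to reduce the optimization to a one-dimensional monotonicity argument and then identify which constraint is binding at the optimum. First, I would observe that with $\kappa_u^{t^*}$ and $p_u^{t,i}$ fixed, the entire second term of the objective in (\ref{cpuFreqOptim_Sub_Prob}) is a constant with respect to $\bar{f}_u^t$, so maximizing (\ref{cpuFreqOptim_Sub_Prob}) is equivalent to maximizing only $g(\bar{f}_u^t) := \epsilon \kappa_u^{t^*} / \bigl[0.5 \nu n \bar{n} c_u s_u (\bar{f}_u^t)^2\bigr]$. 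Since $g$ is strictly decreasing in $\bar{f}_u^t$ on $(0,\infty)$ (its derivative is manifestly negative), the maximum is attained at the smallest feasible value of $\bar{f}_u^t$.

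Second, I would classify the three constraints according to the direction in which they bound $\bar{f}_u^t$. The box constraint $0 \leq \bar{f}_u^t \leq \bar{f}_{u,\mathrm{max}}$ and the energy constraint, which is quadratic and increasing in $\bar{f}_u^t$, both give \emph{upper} bounds on $\bar{f}_u^t$; only the time constraint provides a \emph{lower} bound, because $n\bar{n} c_u s_u \kappa_u^{t^*}/\bar{f}_u^t$ is strictly decreasing in $\bar{f}_u^t$. Rearranging the time constraint directly,
\begin{align*}
\bar{f}_u^t \;\geq\; \frac{n\bar{n} c_u s_u \kappa_u^{t^*}}{\mathrm{t_{th}} - \dfrac{N({\tt FPP}+1)}{\omega \log_2\!\left(1 + \frac{\Xi_u^t \Gamma_u^t p_u^{t,i}}{\omega \xi^2}\right)}}.
\end{align*}
Multiplying numerator and denominator of the right-hand side by $\omega \log_2(1 + \Xi_u^t \Gamma_u^t p_u^{t,i}/(\omega\xi^2))$ yields exactly the expression stated in (\ref{optimal_CPU_freq}). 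By the monotonicity argument, setting $\bar{f}_u^{t^*}$ equal to this lower bound maximizes the objective.

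Third, I would briefly argue feasibility: the upper-bound constraints (energy and $\bar{f}_{u,\mathrm{max}}$) are implicitly assumed to admit a nonempty feasible set, since otherwise (\ref{cpuFreqOptim_Sub_Prob}) has no solution; when the set is nonempty, the candidate value given by (\ref{optimal_CPU_freq}) automatically satisfies the upper bounds (otherwise the lower and upper bounds would be inconsistent). I would also note that the denominator in (\ref{optimal_CPU_freq}) must be positive, which amounts to the offloading time being strictly less than $\mathrm{t_{th}}$ -- a necessary condition for any feasible $\bar{f}_u^t > 0$ to exist and a sanity check one should perform at the problem level (and which is already ensured by the Lemma \ref{lemma_LocalItrGiven_p_f} solution being nontrivial). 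The main (mild) obstacle I anticipate is just cleanly verifying that the energy constraint is slack at the candidate point; this follows because the energy constraint upper-bounds $\bar{f}_u^t$ whereas we are sitting at the \emph{smallest} feasible value, so no further case analysis is required.
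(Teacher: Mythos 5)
Your proposal is correct and follows essentially the same route as the paper's proof: both establish that the objective is strictly decreasing in $\bar{f}_u^t$, identify the deadline constraint as the only lower bound (with the energy and box constraints providing upper bounds), and conclude the optimum sits at that lower bound, which rearranges to (\ref{optimal_CPU_freq}). Your added remarks on feasibility and the slackness of the upper-bound constraints mirror the infeasibility conditions the paper states explicitly.
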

The detailed proof of this lemma and the subsequent ones are left in the supplementary materials.
We stress that if $\sqrt{ \frac{ \mathrm{e_{bd}} \times \omega \log_2 \left( 1 + \frac{\Xi_{u}^t \Gamma_u^t p_u^{t,i}}{\omega \xi^2}\right)  - \Upsilon \cdot p_u^{t,i} } { 0.5 \rho n \bar{n} c_u s_u \kappa_u^{t} \times \omega \log_2 \left( 1 + \frac{\Xi_{u}^t \Gamma_u^t p_u^{t,i}}{\omega \xi^2}\right)} } < \frac {n\bar{n} c_u s_u \kappa_u^{t} \times \omega \log_2 \left( 1 + \frac{\Xi_{u}^t \Gamma_u^t p_u^{t,i}}{\omega \xi^2}\right) } { \mathrm{t_{th}} \times \omega \log_2 \left( 1 + \frac{\Xi_{u}^t \Gamma_u^t p_u^{t,i}}{\omega \xi^2}\right) - \Upsilon}$, \eqref{cpuFreqOptim_Sub_Prob} is infeasible.
Besides, if $\frac {n\bar{n} c_u s_u \kappa_u^{t} \times \omega \log_2 \left( 1 + \frac{\Xi_{u}^t \Gamma_u^t p_u^{t,i}}{\omega \xi^2}\right) } { \mathrm{t_{th}} \times \omega \log_2 \left( 1 + \frac{\Xi_{u}^t \Gamma_u^t p_u^{t,i}}{\omega \xi^2}\right) - \Upsilon} > \bar{f}_{u,\mathrm{max}}$ or $\sqrt{ \frac{ \mathrm{e_{bd}} \times \omega \log_2 \left( 1 + \frac{\Xi_{u}^t \Gamma_u^t p_u^{t,i}}{\omega \xi^2}\right)  - \Upsilon \cdot p_u^{t,i} } { 0.5 \rho n \bar{n} c_u s_u \kappa_u^{t} \times \omega \log_2 \left( 1 + \frac{\Xi_{u}^t \Gamma_u^t p_u^{t,i}}{\omega \xi^2}\right)} } < 0$, the problem is infeasible.

\noindent
\textbf{Optimize Transmission Power given Local Rounds and CPU Frequency:}
Given the local rounds $\kappa_u^{t}$ and \ac{cpu} frequency $\bar{f}_u^{t^*}$, we optimize the transmission power by transforming (\ref{localIterOptim_Orig}) as
\begin{subequations}
\label{txPowerOptim_Sub_Prob}
\begin{align}
    &\underset{ p_u^t }  { \tt{max} ~ } \frac{\epsilon \cdot \kappa_u^{t}}{0.5 \rho n \bar{n} c_u s_u \left(\bar{f}_u^{t^*}\right)^2} + \frac {\omega \left(1-\epsilon\right) \log_2 \left( 1 + \frac{\Xi_{u}^t \Gamma_u^t p_u^t}{\omega \xi^2}\right) } {p_u^t}  \tag{\ref{txPowerOptim_Sub_Prob}} \\
    &{\tt{subject ~ to}} \quad 0 \leq p_u^t \leq p_{u,\mathrm{max}}, \\
    &\quad 0.5 \rho n \bar{n} c_u s_u \kappa_u^{t} \left(\bar{f}_u^{t^*}\right)^2 + \frac{\Upsilon \cdot p_u^t} {\omega \log_2 \left( 1 + \frac{\Xi_{u}^t \Gamma_u^t p_u^t}{\omega \xi^2}\right) } \leq \mathrm{e_{bd}}, \\
    &\quad p_u^t \geq \Bigg[ \omega \xi^2 \Bigg( 2^{\Big[\frac{\Upsilon\bar{f}_u^{t^*} } { \omega \left(\mathrm{t_{th}} \bar{f}_u^{t^*} - n\bar{n} c_u s_u \kappa_u^{t} \right) } \Big]} - 1 \Bigg) \Bigg] / \left(\Xi_{u}^t \Gamma_u^t \right).
\end{align}
\end{subequations}

\begin{Lemma}
\label{lemmaTxPowerGiven_Kappa_CPUFreq}
Given $\kappa_u^t$ and $\bar{f}_u^{t^*}$, the optimal solution (\ref{txPowerOptim_Sub_Prob}) is
\begin{align}
\label{eq:optPower}
    p_u^{t^*} = \Bigg[ \omega \xi^2 \Bigg( 2^{\Big[\frac{\Upsilon\bar{f}_u^{t^*} } { \omega \left(\mathrm{t_{th}} \bar{f}_u^{t^*} - n\bar{n} c_u s_u \kappa_u^{t} \right) } \Big]} - 1 \Bigg) \Bigg] / \left(\Xi_{u}^t \Gamma_u^t \right).
\end{align}
\end{Lemma}

It should be noted that if $p_u^{t^*}> p_{u,\mathrm{max}}$ or forces $0.5 \rho n \bar{n} c_u s_u \kappa_u^{t} \left(\bar{f}_u^{t^*}\right)^2 + \frac{\Upsilon \cdot p_u^{t^*}} {\omega \log_2 \left( 1 + \frac{\Xi_{u}^t \Gamma_u^t p_u^t}{\omega \xi^2}\right) }$ to be larger than the energy budget $\mathrm{e_{bd}}$, then \eqref{txPowerOptim_Sub_Prob} is infeasible.

Thus, given the closed-form solutions, we iteratively obtain solutions to \eqref{cpuFreqOptim_Sub_Prob} and \eqref{txPowerOptim_Sub_Prob}.
Given the chosen $\kappa_u^t$, if problems \eqref{cpuFreqOptim_Sub_Prob} and \eqref{txPowerOptim_Sub_Prob} are feasible, we increase the value of $\kappa_u^t$ and then solve \eqref{cpuFreqOptim_Sub_Prob} and \eqref{txPowerOptim_Sub_Prob} iteratively again. 
We keep repeating this until the problem becomes infeasible or $\kappa_u^t=\kappa$. 
If the problem is infeasible for $\kappa_u^t$, the feasible solution is to use the previous value of $\kappa_u^t$ and the corresponding CPU frequency and transmit power.  
The detailed steps are summarized in Algorithm \ref{iterativeAlgorithm_LRO}.

\begin{Remark}
Resource optimization is necessary to train any \ac{fl} algorithms in resource-constrained wireless networks to ensure proper resource utilization and reduce the number of stragglers.
Although such resource optimization incurs additional computational overheads, it is indeed necessary to protect users' data privacy.
Therefore, joint resource optimization, as shown in optimization problem (\ref{localIterOptim_Orig}), is essentially a prerequisite for training privacy-preserving \ac{fl} algorithms in resource-constrained wireless networks.
\end{Remark}

\begin{algorithm}[!ht]
\small
\fontsize{8}{8}\selectfont 
\SetAlgoLined 
\DontPrintSemicolon
\KwIn{Initial feasible $p_u^{t,i=0}$; total iteration $I$}
Set $\kappa_u^{t^*}=1$, $\bar{f}_u^{t^*}=\bar{f}_\mathrm{max}$, $p_u^t=p_{u,\mathrm{max}}$ \;
\For{$\kappa_u^t=1$ to $\kappa$}{
    Get $\bar{f}_u^{t}$ and $p_u^{t}$ using {\tt GetFP$\big(\kappa_u^t, p_u^{t,i}, I\big)$} \;
    \If{$\bar{f}_u^{t} \neq {\tt nan}$ and $p_u^{t} \neq {\tt nan}$}{
        $\kappa_u^{t^*} \gets \kappa_u^t, \bar{f}_u^{t^*} \gets \bar{f}_u^t$, $p_u^{t^*} \gets p_u^t$ \; 
    }
}
\KwOut{Optimized local round $\kappa_u^{t^*}$, \ac{cpu} frequency $f_u^{t^*}$ and transmission power $p_u^{t^*}$}
\caption{\tt Iterative Solution for Local Resource Optimization}
\label{iterativeAlgorithm_LRO}
\end{algorithm}
\begin{algorithm}[!ht]
\small \fontsize{8}{8}\selectfont 
\For{$i=0$ to $I-1$}{
    Use $p_u^{t,i}$ to get $\bar{f}_u^{t^*}$ using (\ref{optimal_CPU_freq}) \;
    Use $\bar{f}_u^{t^*}$ to get $p_u^{t^*}$ using (\ref{eq:optPower}) \;
    \If{(\ref{optimal_CPU_freq}) or (\ref{eq:optPower}) is infeasible}{
    Return $p_u^{t}={\tt nan}$ and $\bar{f}_u^{t}={\tt nan}$ \;   
    }
}
\KwOut{Optimized local \ac{cpu} frequency $f_u^{t^*}$ and transmission power $p_u^{t^*}$}
\caption{\tt GetFP$\big(\kappa_u^t, p_u^{t,i}, I\big)$}
\label{alg:iterCPUpower}    
\end{algorithm}
\begin{algorithm}[!ht]
\small
% \fontsize{8}{8}\selectfont
\SetAlgoLined 
\DontPrintSemicolon
\KwIn{Initial global model $\mathbf{w}^0$, client set $\mathcal{U}$, total global round $T$, local learning rate $\eta_\mathrm{lo}$, and global learning rate $\eta_\mathrm{gl}$}
\For{$t=0$ to $T-1$}{
    \For{$u$ in $\mathcal{U}$ in parallel}{
        Client receives the latest global model from the CS \;
        Client synchronizes the local model: $\mathbf{w}_{u}^{t,0} \gets \mathbf{w}^{t}$ \;
        Client solves resource optimization problem  (\ref{localIterOptim_Orig}) using Algorithm \ref{iterativeAlgorithm_LRO} to determine $\mathrm{1}_u^t$ and local round $\kappa_u^t$  \label{alg_optim_line} \;
        \uIf{$\mathrm{1}_u^t=1$}{
        Client performs $\kappa_u^t$ SGD steps: $\mathbf{w}_u^{t,\kappa_u^t} = \mathbf{w}_u^{t,0} - \eta_\mathrm{lo} \sum\nolimits_{\tau=0}^{\kappa_u^t-1} g_u^t \left(\mathbf{w}_u^{t,\tau} \right)$ \;
        Client calculates normalized accumulated gradients $\mathbf{d}_u^t \coloneqq (\eta_\mathrm{lo} / \kappa_u^t) \sum\nolimits_{\tau=0}^{\kappa_u^t-1} g_u^t \left(\mathbf{w}_u^{t,\tau} \right)$ \;
        Client quantizes $\mathbf{d}_u^t$ using the stochastic quantizer $Q$ defined in (\ref{eq:stoQuantizer}) and sends $Q(\mathbf{d}_u^t)$ and $\kappa_u^t$ to the central server\;
        }
    }
    \Ac{cs} calculates scores $\left\{\delta_u^t\right\}_{u=0}^{U-1}$ based on convergence analysis using (\ref{eq:sub_optim_score}) and the global step size scheduler $\eta_\mathrm{ef}^t$ using (\ref{eq:eta_ef}) \tcp*{discussed in the sequel} 
    \Ac{cs} performs global aggregation: $\mathbf{w}^{t+1} = \mathbf{w}^t - \eta_\mathrm{gl} \eta_\mathrm{ef}^t \sum\nolimits_{u=0}^{U-1} \alpha_u^t \cdot \mathrm{1}_u^t \cdot Q \left( \mathbf{d}_u^t \right)$ \;
    }
\KwOut{Trained global model $\mathbf{w}^T$}
\caption{Proposed Online-Score-Aided FL}
\label{osafl_Algorithm}
\end{algorithm}

\section{Proposed Online-Score-Aided FL Algorithm for Wireless Applications}
\label{sec_OSAFL_Alg}

% \subsection{Proposed Online-Score-Aided FL (OSAFL) Algorithm}
\noindent
When the client's dataset $\mathcal{D}_u^t$ is time-varying, a mere modification of the global objective function (\ref{globalObj_Static}) may not be sufficient to capture the intricate impact of the ephemeral training samples in the global model's performance. 
As such, we present our proposed \ac{osafl} algorithm in what follows.

At the beginning of a global round, each client receives the updated global model $\mathbf{w}^t$ from the \ac{cs} and synchronizes the local model as 
\begin{align}
\label{localModelSync}
    \mathbf{w}_u^{t,0} \gets \mathbf{w}^{t}, 
\end{align}
The clients then solve problem (\ref{localIterOptim_Orig}) to determine local training rounds and other optimization parameters, followed by performing $\kappa_u^t$ local mini-batch \ac{sgd} steps on their latest available local datasets $\mathcal{D}_u^t$ to minimize (\ref{localObj}).
As such, they update their local models as 
\begin{align}
\label{UEsLocalSGDrounds}
    \mathbf{w}_u^{t,\kappa_u^t} = \mathbf{w}_u^{t,0} - \eta_\mathrm{lo} \sum\nolimits_{\tau=0}^{\kappa_u^t-1} g_u^t \left(\mathbf{w}_u^{t,\tau} \right),
\end{align}
where $g_u^t(\mathbf{w}_u^{t,\tau}) \coloneqq g_u \left(\mathbf{w}_u^{t,\tau}|\mathcal{D}_u^t\right)$ and is the unbiased stochastic gradient of client $u$ and $\eta_\mathrm{lo}$ is the local learning rate.

% \bred{\prg 2 ($\approx 0.25$ textheight)}
% \begin{itemize} 
%     \item Clarify that clients offload normalized accumulated gradients $\mathbf{d}_u^t \coloneqq \frac{1}{\kappa_u^t}\sum\nolimits_{\tau=0}^{\kappa_u^t-1} g_u \left(\mathbf{w}_u^{t,\tau} | \mathcal{D}_u^t \right) = \frac{\mathbf{w}_u^{t,0} - \mathbf{w}_u^{t,\kappa_u^t}}{\eta \kappa_u^t}$ upon finishing $\kappa_u^t$ \ac{sgd} steps
%     \item Explain that normalized gradient offloading is widely used under system and data heterogeneity \cite{wang2020Tackling,yang22Anarchic,ye23fedDisco,hosseinalipour2023parallel}
% \end{itemize}
Once the clients finish their local training, they calculate the normalized accumulated gradients as 
\begin{equation}
    \mathbf{d}_u^t \coloneqq (\eta_\mathrm{lo} / \kappa_u^t) \sum\nolimits_{\tau=0}^{\kappa_u^t-1} g_u^t \left( \mathbf{w}_u^{t,\tau} \right) = \big[\mathbf{w}_u^{t,0} - \mathbf{w}_u^{t,\kappa_u^t} \big]/ \kappa_u^t.
\end{equation}
The clients then send their quantized normalized gradient $Q\left(\mathbf{d}_u^t\right)$, where $Q(\cdot)$ is a stochastic quantizer. 
For $\mathbf{d} \in \mathbb{R}^N$ and $\mathbf{d} \neq \mathbf{0}$, this is defined as \cite{alistarh2017qsgd}  
\begin{align}
\label{eq:stoQuantizer}
    Q(\mathbf{d}) \coloneqq \Vert \mathbf{d}\Vert_2 \cdot \mathrm{sign}(d_i) \cdot \zeta_i (\mathbf{d},s), ~\forall i \in [N],
\end{align}
where $\zeta_i(\mathbf{d},s)$ is a random variable and satisfies $\zeta_i(\mathbf{d},s)= 
\begin{cases}
    [l+1]/s, & \mathrm{w.p.~} \big(\vert d_i\vert / \Vert \mathbf{d}\Vert_2 \big)s - l, \\
    l/s, & \mathrm{otherwise},
\end{cases}$,
where $d_i$ is the $i^\mathrm{th}$ entry of the vector $\mathbf{d}$, $s$ is the tuning parameter defining the total number of quantization levels, and $l \in [0,s)$ is an integer such that $\big(\vert d_i\vert / \Vert \mathbf{d}\Vert_2\big) \in \big[l/s, (l+1)/s \big]$.
We note that the wireless payload size for this quantized normalized gradient is  
\begin{align} 
\label{eq:model_payload}
    \Upsilon=N(1 + \lceil \log_2(s) \rceil)+ 32,
\end{align}
where $32$ bits are needed to offload the $\Vert \mathbf{d}_u^t\Vert_2$ with single precision, $N$ bits for the signs, and $N\left \lceil \log_2(s) \right \rceil$ for the quantization levels. 

% \bred{\prg 3: ($\approx 0.45$ textheight) }
% \begin{itemize}
    % \item The \ac{cs} takes a global \ac{sgd} step with the normalized accumulated gradients 
    % \begin{align}
    % \label{globalUpdateRule}
    %     \mathbf{w}^{t+1} = \mathbf{w}^t - \Tilde{\eta} \eta \sum\nolimits_{u=0}^{U-1} \alpha_u \tilde{\mathbf{d}}_u^t,
    % \end{align}
    % where $\tilde{\mathbf{d}}_u^t \coloneqq \Delta_u^t \mathbf{d}_u^t$ and $\Delta_u^t$ is the \emph{score} of client $u$
    % \item \ac{cs} put different weights on the normalized accumulated gradients that we call \emph{score}
    % \item Clarify that this \emph{score} calculation is kept as a general strategy that can be customized for different applications and our strategy is presented in the sequel
    % \item Clarify the global objective function in \ac{osafl} using (\ref{globalObj_OSAFL}) and use Algorithm \ac{osafl} to summarize the key steps
    % \begin{align}
    % \label{globalObj_OSAFL}
    %     f(\mathbf{w}^t) \coloneqq \sum\nolimits_{u=0}^{U-1} \alpha_u \Delta_u^t f_u (\mathbf{w}^t | \mathcal{D}^t)
    % \end{align}
% \end{itemize}

The \ac{cs} takes a global \ac{sgd} step with step size $\eta_\mathrm{gl}$ using the normalized accumulated gradients as 
\begin{align}
\label{globalUpdateRule}
    \mathbf{w}^{t+1} %&= \mathbf{w}^t - \eta_{\mathrm{gl}} \eta_{\mathrm{ef}}^t \sum\nolimits_{u=0}^{U-1} \alpha_u \delta_u^t \cdot \mathrm{1}_u^t \cdot Q\left( \mathbf{d}_u^t \right) \nonumber\\   
    &= \mathbf{w}^t - \eta_{\mathrm{gl}} \eta_{\mathrm{ef}}^t \sum\nolimits_{u=0}^{U-1} \alpha_u^t \cdot \mathrm{1}_u^t \cdot Q\left(\mathbf{d}_u^t\right),
\end{align}
where $\eta_\mathrm{ef}^t >0$ is a scalar to control the update step size, $\alpha_u^t \coloneqq \alpha_u \delta_u^t$, $\alpha_u \coloneqq \mathrm{D}_u /\sum_{u=0}^{U-1} \mathrm{D}_u$, and $\delta_u^t \geq 0$ is the \emph{score} 
%\footnote{We calculate this score entirely differently than existing methods like in \ac{fednova}\cite{wang2020Tackling}, which totally depends on the number of training rounds and data portion of the clients: the aggregation weights in \ac{fednova} \cite{wang2020Tackling} is $\frac{\alpha_u \kappa_u}{\sum_{u \in \mathcal{U}} \alpha_u \kappa_u}$, where $\alpha_u \coloneqq \frac{\mathrm{D}_u}{\sum_{u \in \mathcal{U}} \mathrm{D}_u}$ and $\kappa_u$ is the fixed number of local steps.} 
of client $u$ during time $t$.
Besides, $\mathrm{1}_u^t$ is an binary indicator function 
\begin{align}
    \mathrm{1}_u^t=
    \begin{cases}
        1, & \text{with probability } v_u^t,\\
        0, & \text{otherwise},
    \end{cases},
\end{align}
which is tied to the resource optimization problem (\ref{localIterOptim_Orig}) and time-varying wireless links. 
More specifically, the solution to (\ref{localIterOptim_Orig}) determines whether a client can participate in model training during a \ac{fl} round $t$.

In the update rule, \ac{osafl} uses two key parameters, the score function $\delta_u^t$ and the global step size controller $\eta_\mathrm{ef}^t$.
Intuitively, since data distribution shifts and other errors under different resource constraints affects local training and the offloading of trained normalized gradients, we want to jointly optimize the \emph{score} function and the step size controller to facilitate the algorithm's convergence.
Due to this update policy, \ac{osafl} minimizes the following surrogate global objective function instead of (\ref{globalObjective_FedAvg}).
\begin{align}
    f^t (\mathbf{w}^t) \coloneqq \sum\nolimits_{u=0}^{U-1} \alpha_u^t f_u^t (\mathbf{w}^t).
\end{align}
Therefore, \ac{osafl} seeks a {\em sequence of global models} $\mathbf{W}^{*}=\left\{{\mathbf{w}^0}^{*}, {\mathbf{w}^1}^{*}, \dots, {\mathbf{w}^{T-1}}^{*}\right\}$ so that each ${\mathbf{w}^t}^*$ in this sequence minimizes the above global loss function in that round $t$.

Algorithm \ref{osafl_Algorithm} summarizes the key steps of the proposed \ac{osafl} algorithm.

\section{Theoretical Analysis of OSAFL}
\label{sec_conv_osafl}
\noindent
% Our assumptions and theoretical analyses are presented in this section.

\subsection{Assumptions}
% \bred{($\approx 0.3$ textheight)}
\noindent
We make the following standard assumptions \cite{wang2020Tackling, yang22Anarchic, ye23fedDisco, reisizadeh2020fedpaq, pervej2023resource, pervej2024hierarchical} that are needed for the theoretical analysis. 

% \begin{itemize}
%     \item Clarify that we make only a few standard assumptions \cite{wang2020Tackling, yang22Anarchic, ye23fedDisco} for the theoretical assumptions
% \end{itemize}

\begin{Assumption}[Smoothness]
    The local loss functions are $\beta$-Lipschitz smooth. That is, for some $\beta > 0$, $\Vert \nabla f_u^t \left(\mathbf{w} \right) - \nabla f_u^t \left(\mathbf{w}' \right) \Vert \leq \beta \Vert \mathbf{w} - \mathbf{w}' \Vert$, for all $\mathbf{w}$, $\mathbf{w}' \in \mathbb{R}^N$ and $u \in \mathcal{U}$.
\end{Assumption}
\begin{Assumption}[Unbiased gradient with bounded variance]
    The stochastic gradient at each client is an unbiased estimate of the client's true gradient, i.e., $\mathbb{E}_{\zeta \sim \mathcal{D}_u^t} [g_u^t \left(\mathbf{w}  \right)] = \nabla f_u^t \left(\mathbf{w} \right)$, where $\mathbb{E}[\cdot]$ is the expectation operator. 
    Besides, the stochastic gradient has a bounded variance, i.e., $\mathbb{E}_{\zeta \sim \mathcal{D}_u^t} \left[\Vert g_u^t \left(\mathbf{w} \right) - \nabla f_u^t \left(\mathbf{w} \right) \Vert^2\right] \leq \sigma^2$, for some $\sigma \geq 0$ and for all $u \in \mathcal{U}$. 
\end{Assumption}

\begin{Assumption}[Bounded gradient dissimilarity]
The divergence between the local and global gradients is bounded, i.e., $\left\Vert \nabla f_u^t \left(\mathbf{w}  \right) - \nabla f^t \left(\mathbf{w} \right) \right \Vert^2 \leq \varpi^2$, for some $\varpi \geq 0$, $\forall u \in \mathcal{U}$.
\end{Assumption}

\begin{Assumption}[Unbiased quantizer and bounded variance]
The stochastic quantizer is unbiased, i.e., $\mathbb{E}_Q[Q(\mathbf{d})] = \mathbf{d}$ and has bounded variance $\mathbb{E}_Q \left[ \left\Vert Q(\mathbf{d}) - \mathbf{d} \right\Vert^2 \right] \leq q \left\Vert \mathbf{d}\right\Vert^2$, for some positive real constant $q$.    
\end{Assumption}

\subsection{Convergence Analysis}
\noindent
We consider the expected average global gradient norm as an indicator of convergence of the proposed \ac{osafl} algorithm.

\begin{Theorem}
\label{convgRate}
Suppose the above assumptions hold. 
When the learning rates satisfy $\eta_\mathrm{gl}\eta_\mathrm{lo} < \frac{1}{\beta \eta_\mathrm{ef}^t}$, and $\eta_\mathrm{lo} < \text{min} \left\{ \frac{1}{8\beta\kappa}, \sqrt{\frac{4 - 16 \sum_{u=0}^{U-1} \alpha_u ( \delta_u^t (1 - v_u^t) )^2 - 33 \kappa \sum_{u=0}^{U-1} v_u^t (1+q-v_u^t) \left(\alpha_u \delta_u^t\right)^2}{192 \kappa^2 \beta^2 \sum_{u=0}^{U-1} \alpha_u ( v_u^t \delta_u^t)^2}} \right\}$, and $4 - 16 \sum_{u=0}^{U-1} \alpha_u ( \delta_u^t (1 - v_u^t) )^2 - 33 \kappa \sum_{u=0}^{U-1} v_u^t (1+q-v_u^t) \left(\alpha_u \delta_u^t\right)^2 \geq 0$, we have
\begin{align}
\label{eq:convRate_Per_Iter}
   &\eta_\mathrm{ef}^t \left\Vert \nabla f^t \left(\mathbf{w}^{t} \right) \right\Vert^2  
    \leq \frac{2 \big[ f^t(\mathbf{w}^t) - \mathbb{E} \left[ f^{t+1}(\mathbf{w}^{t+1}) \right]\big]}{\eta_\mathrm{gl} \eta_\mathrm{lo}} + \frac{2\Phi^t}{\eta_\mathrm{gl} \eta_\mathrm{lo}} + \nonumber\\
    &\mquad \Omega_\sigma (\sigma, \pmb{v,\kappa^t,\delta^t}) + \Omega_\varpi (\sigma, \pmb{v,\kappa^t,\delta^t}),    
\end{align}
where $\Phi^t \triangleq  \mathbb{E} \left[f^{t+1}(\mathbf{w}^{t+1}) \right] - \mathbb{E} \left[f^{t} (\mathbf{w}^{t+1})\right]$, $\Omega_\sigma (\sigma, \pmb{v,\kappa^t,\delta^t}) \coloneqq 2(2+q) \beta \eta_\mathrm{gl} \eta_\mathrm{lo} \left(\sigma \eta_\mathrm{ef}^t \right)^2 \sum_{u=0}^{U-1} v_u^t \alpha_u^2 \frac{ (\delta_u^t)^2}{\kappa_u^t} + 8 \eta_\mathrm{ef}^t \beta^2 \sigma^2 \eta_\mathrm{lo}^2 \sum_{u=0}^{U-1} \alpha_u \kappa_u^t (v_u^t\delta_u^t)^2 + 24 \sigma^2 \left(\eta_\mathrm{ef}^t \right)^2 \beta^3 \eta_\mathrm{lo}^3 \sum_{u=0}^{U-1} v_u^t(1+q-v_u^t) \left(\alpha_u \delta_u^t \kappa_u^t \right)^2$, and $\Omega_\varpi (\sigma, \pmb{v,\kappa^t,\delta^t}) \coloneqq 4 \eta_\mathrm{ef}^t \varpi^2 \sum_{u=0}^{U-1} \alpha_u ( \delta_u^t (1 - v_u^t) )^2 + 48 \eta_\mathrm{ef}^t \left(\beta \varpi \right)^2 \eta_\mathrm{lo}^2  \sum_{u=0}^{U-1} \alpha_u (v_u^t\delta_u^t)^2 \left(\kappa_u^t\right)^2 + 6 \beta \eta_\mathrm{gl} \eta_\mathrm{lo} \left(\eta_\mathrm{ef}^t \varpi\right)^2 \sum_{u=0}^{U-1} v_u^t(1+q-v_u^t) \left(\alpha_u \delta_u^t\right)^2 \kappa_u^t + 144 \eta_\mathrm{gl} \left(\varpi \eta_\mathrm{ef}^t \right)^2 \beta^3 \eta_\mathrm{lo}^3 \sum_{u=0}^{U-1} v_u^t(1+q-v_u^t) \left(\alpha_u \delta_u^t\right)^2 \left(\kappa_u^t \right)^3$.
An immediate consequence of the above equation is 
\begin{align}
\label{convRate_Eqn}
    &\tilde{A} \sum\nolimits_{t=0}^{T-1} \eta_{\mathrm{ef}}^t \mathbb{E} \left[\left\Vert \nabla f^t (\mathbf{w}^t) \right\Vert^2 \right] 
    \leq \frac{2 \tilde{A} \left(f^0(\mathbf{w}^0) - \mathbb{E} \left[ f^{T}(\mathbf{w}^{T}) \right] \right)}{\eta_\mathrm{gl} \eta_\mathrm{lo} } + \nonumber\\
    &\squad \left(2 \tilde{A} /[\eta_\mathrm{gl} \eta_\mathrm{lo}] \right) \sum\nolimits_{t=0}^{T-1} \Phi^t + \tilde{A} \sum\nolimits_{t=0}^{T-1} \Omega_\sigma (\pmb{\sigma, v, \kappa^t, \delta^t}) + \nonumber\\
    &\squad \tilde{A} \sum\nolimits_{t=0}^{T-1} \Omega_\varpi (\pmb{\varpi, v, \kappa^t, \delta^t}),
\end{align}
where $\tilde{A} \coloneqq \left(\sum_{t=0}^{T-1} \eta_{\mathrm{ef}}^t\right)^{-1}$.
\end{Theorem}

The convergence bound in (\ref{convRate_Eqn}) reveals some key insights. 
The first term captures the changes in the global loss function when the algorithm runs for $T$ global rounds. 
The second term results from data {\em distribution shifts} across clients.
Besides, the third term arises from the bounded-variance assumption for stochastic gradients and also shows how noise from quantization and unknown client participation probabilities affect the convergence bound.
Finally, the last term is the consequence of heterogeneous statistical data distributions across clients in each global round, leading to bounded gradient divergence between the local and global loss functions.

When there is no data distribution shift, i.e., $\Phi^t = \mathbb{E}\left[f^{t+1}(\mathbf{w}^{t+1}) \right] - \mathbb{E}\left[f^{t} (\mathbf{w}^{t+1})\right]=0$, all clients participate, i.e., $v_u^t=1$, and $\kappa_u^t=\kappa$ for all clients and global rounds, we can immediately write
\begin{align} 
    &\tilde{A} \sum\nolimits_{t=0}^{T-1} \eta_{\mathrm{ef}}^t \mathbb{E} \left[\left\Vert \nabla f^t (\mathbf{w}^t) \right\Vert^2 \right] 
    \leq \frac{2 \tilde{A} \left(f^0(\mathbf{w}^0) - \mathbb{E} \left[ f^{T}(\mathbf{w}^{T}) \right] \right)}{\eta_\mathrm{gl} \eta_\mathrm{lo}} + \nonumber\\
    & \tilde{A} \sum\nolimits_{t=0}^{T-1} \tilde{\Omega}_\sigma (\pmb{\sigma, v, \kappa^t, \delta^t}) + \tilde{A} \sum\nolimits_{t=0}^{T-1} \tilde{\Omega}_\varpi (\pmb{\varpi, v, \kappa^t, \delta^t}),
\end{align}
where $\tilde{\Omega}_\sigma (\pmb{\sigma, v, \kappa^t, \delta^t}) \coloneqq  8 \kappa \eta_\mathrm{ef}^t \beta^2 \sigma^2 \eta_\mathrm{lo}^2 \sum_{u=0}^{U-1} \alpha_u (\delta_u^t)^2 + \frac{2(2+q) \beta \eta_\mathrm{gl} \eta_\mathrm{lo} \left(\sigma \eta_\mathrm{ef}^t \right)^2}{\kappa} \sum_{u=0}^{U-1} \alpha_u^2 (\delta_u^t)^2 + 24 q \left(\kappa \sigma \eta_\mathrm{ef}^t \right)^2 \left(\beta \eta_\mathrm{lo}\right)^3 \sum_{u=0}^{U-1} \left(\alpha_u \delta_u^t \right)^2$ and $\tilde{\Omega}_\varpi (\pmb{\varpi, v, \kappa^t, \delta^t}) \coloneqq 48 \eta_\mathrm{ef}^t \left(\kappa \beta \varpi \eta_\mathrm{lo} \right)^2 \sum_{u=0}^{U-1} \alpha_u (\delta_u^t)^2 + 6 q \kappa \beta \eta_\mathrm{gl} \eta_\mathrm{lo} \left(\eta_\mathrm{ef}^t \varpi\right)^2 \sum_{u=0}^{U-1} \left(\alpha_u \delta_u^t\right)^2 + 144 q \eta_\mathrm{gl} \left(\varpi \eta_\mathrm{ef}^t \right)^2 \left(\kappa \beta \eta_\mathrm{lo}\right)^3 \sum_{u=0}^{U-1} \left(\alpha_u \delta_u^t\right)^2$.

\begin{Remark}(Special Case)
When $\eta_{\mathrm{ef}}^t=1$, we have 
\begin{align}
    &\frac{1}{T} \sum_{t=0}^{T-1} \mathbb{E} \left[\left\Vert \nabla f^t (\mathbf{w}^t) \right\Vert^2 \right] 
    \leq \frac{2\left(f^0(\mathbf{w}^0) - \mathbb{E} \left[ f^{T}(\mathbf{w}^{T}) \right] \right)}{\eta_\mathrm{gl} \eta_\mathrm{lo} T} + \nonumber\\
    &\quad [2/(\eta_\mathrm{gl} \eta_\mathrm{lo} T)] \sum\nolimits_{t=0}^{T-1} \Phi^t + [1/T] \sum\nolimits_{t=0}^{T-1} \hat{\Omega}_\sigma (\pmb{\sigma, v, \kappa^t, \delta^t}) + \nonumber\\
    &\quad [1/T] \sum\nolimits_{t=0}^{T-1} \hat{\Omega}_\varpi (\pmb{\varpi, v, \kappa^t, \delta^t}),
\end{align}
where $\hat{\Omega}_\sigma (\pmb{\sigma, v, \kappa^t, \delta^t}) \coloneqq 2(2+q) \beta \eta_\mathrm{gl} \eta_\mathrm{lo} \sigma^2 \sum_{u=0}^{U-1} v_u^t \alpha_u^2 \frac{ (\delta_u^t)^2} {\kappa_u^t} + 8 \beta^2 \sigma^2 \eta_\mathrm{lo}^2 \sum_{u=0}^{U-1} \alpha_u \kappa_u^t (v_u^t\delta_u^t)^2 + 24 \sigma^2 \beta^3 \eta_\mathrm{lo}^3 \sum_{u=0}^{U-1} v_u^t (1 + q - v_u^t) \left(\alpha_u \delta_u^t \kappa_u^t \right)^2$ and $\hat{\Omega}_\varpi (\pmb{\varpi, v, \kappa^t, \delta^t}) \coloneqq 4 \varpi^2 \sum_{u=0}^{U-1} \alpha_u ( \delta_u^t (1 - v_u^t) )^2 + 48 \left(\beta \varpi \right)^2 \eta_\mathrm{lo}^2  \sum_{u=0}^{U-1} \alpha_u (v_u^t\delta_u^t)^2 \left(\kappa_u^t\right)^2 + 6 \beta \eta_\mathrm{gl} \eta_\mathrm{lo} \varpi^2 \sum_{u=0}^{U-1} v_u^t (1 + q - v_u^t) \left(\alpha_u \delta_u^t\right)^2 \kappa_u^t + 144 \eta_\mathrm{gl} \varpi^2 \beta^3 \eta_\mathrm{lo}^3 \sum_{u=0}^{U-1} v_u^t (1 + q - v_u^t) \left(\alpha_u \delta_u^t\right)^2 \left(\kappa_u^t \right)^3$.

For sufficiently large $T$, if the learning rates satisfy $\eta_\mathrm{gl}=U$ and $\eta_\mathrm{lo}=1/\sqrt{UT}$, we have
\begin{align}
    &\frac{1}{T} \sum_{t=0}^{T-1} \mathbb{E} \left[ \left\Vert \nabla f^t (\mathbf{w}^t) \right\Vert^2 \right] 
    \leq \mathcal{O} \bigg(\rs \frac{1}{\sqrt{UT}} \rs \bigg) + \mathcal{O} \bigg(\rs \frac{1} {\sqrt{UT}} \rs \times \rs \sum_{t=0}^{T-1} \Phi^t \rs \bigg) + \nonumber\\
    &\qquad \mathcal{O} \left( \frac{\sqrt{U}}{T^{1.5}} \times q \sigma^2 \sum_{t=0}^{T-1} \sum_{u=0}^{U-1} \frac{v_u^t}{\kappa_u^t} \left( \alpha_u \delta_u^t \right)^2 \right) + \nonumber\\
    &\qquad \mathcal{O} \left(\frac{1}{T} \times \varpi^2 \sum_{t=0}^{T-1}\sum_{u=0}^{U-1} \alpha_u \left[ \delta_u^t(1-v_u^t)\right]^2 \right) + \nonumber\\
    &\qquad \mathcal{O} \left( \frac{\sqrt{U}}{T^{1.5}} \times \varpi^2 \sum_{t=0}^{T-1} \sum_{u=0}^{U-1} v_u^t(1+q-v_u^t) \kappa_u^t \left( \alpha_u \delta_u^t \right)^2  \right), 
\end{align}
which shows how data distribution drift, client participation probability, noise from quantization and stochastic gradients, and divergence between the (local and global) loss functions affect the rate.
Under ideal condition, i.e., all $v_u^t=1$ and $\kappa_u^t=\kappa$, this bound boils down to
\begin{align}
    &\frac{1}{T} \sum_{t=0}^{T-1} \mathbb{E} \left[ \left\Vert \nabla f^t (\mathbf{w}^t) \right\Vert^2 \right] 
    \leq \mathcal{O} \bigg(\rs \frac{1}{\sqrt{UT}} \rs \bigg) + \mathcal{O} \bigg(\rs \frac{1} {\sqrt{UT}} \rs \times \rs \sum_{t=0}^{T-1} \Phi^t \rs \bigg) + \nonumber\\
    &\qquad \mathcal{O} \left( \frac{\sqrt{U}}{T^{1.5}} \times q\left[\frac{\sigma^2}{\kappa} + \kappa \varpi^2\right] \sum_{t=0}^{T-1} \sum_{u=0}^{U-1} \left( \alpha_u \delta_u^t \right)^2 \right).
\end{align}
\end{Remark}

The key takeaway message from the above Remark is that the accumulated drift (due to data distribution shift) shall not grow faster than $\sqrt{UT}$.

\subsection{Choice of Online Score and Global Step Size Controller}
\noindent
From (\ref{eq:convRate_Per_Iter}), if we pick the dominant term assuming the local learning rate $\eta_\mathrm{lo}$ is small, we have 
\begin{align}
\label{eq:conv_bound_dom_terms}
    &\left\Vert \nabla f^t \left(\mathbf{w}^{t} \right) \right\Vert^2  
    \leq \frac{2 \big[ f^t(\mathbf{w}^t) - \mathbb{E} \left[ f^{t+1}(\mathbf{w}^{t+1}) \right]\big]}{\eta_\mathrm{ef}^t \eta_\mathrm{gl} \eta_\mathrm{lo}} + \frac{2\Phi^t}{\eta_\mathrm{ef}^t \eta_\mathrm{gl} \eta_\mathrm{lo}} + \nonumber\\
    &\qquad 2(2+q) \beta \eta_\mathrm{ef}^t \eta_\mathrm{gl} \eta_\mathrm{lo} \sigma^2 \sum\nolimits_{u=0}^{U-1} v_u^t \alpha_u^2  (\delta_u^t)^2/\kappa_u^t + \nonumber\\
    &\qquad 4 \varpi^2 \sum\nolimits_{u=0}^{U-1} \alpha_u ( \delta_u^t (1 - v_u^t) )^2 + \nonumber\\
    &\qquad 6 \beta \eta_\mathrm{ef}^t \eta_\mathrm{gl} \eta_\mathrm{lo} \varpi^2 \sum\nolimits_{u=0}^{U-1} v_u^t(1+q-v_u^t) \kappa_u^t \left(\alpha_u \delta_u^t\right)^2.
\end{align}
Thus, in order to facilitate the convergence the right-hand side of (\ref{eq:conv_bound_dom_terms}) needs to be minimized.
Thus, we seek proper $\delta_u^t$ and $\eta_\mathrm{ef}^t$ to minimize this right-hand side.

\subsubsection{Optimization of the Scores}
Since $\eta_\mathrm{gl} \eta_\mathrm{lo} \leq [1 /(\beta \eta_{\mathrm{ef}}^t)]$, plugging $\eta_\mathrm{gl} \eta_\mathrm{lo} = [1 /(\beta \eta_{\mathrm{ef}}^t)]$ into (\ref{eq:conv_bound_dom_terms}), we get
\begin{align}
    &\left\Vert \nabla f^t (\mathbf{w}^t) \right\Vert^2 
    \leq 2 \beta ( f^t(\mathbf{w}^t) - \mathbb{E} \left[ f^{t+1}(\mathbf{w}^{t+1}) \right] ) + 2 \beta \Phi^t + \nonumber\\
    &\quad 2 (2+q) \sigma^2 \sum_{u=0}^{U-1} v_u^t \alpha_u^2 \frac{ (\delta_u^t)^2}{\kappa_u^t} + 4 \varpi^2 \sum_{u=0}^{U-1} \alpha_u \left(\delta_u^t (1 - v_u^t) \right)^2 + \nonumber\\
    &\quad 6 \varpi^2 \sum\nolimits_{u=0}^{U-1} v_u^t(1+q-v_u^t) \kappa_u^t \big(\alpha_u \delta_u^t\big)^2 \rs \triangleq \rs \theta^t \left(\pmb{\delta}^t, \pmb{\kappa}^t, \pmb{v}\right).
\end{align} 
To this end, we aim to optimize $\{\delta_u^t\}_{u=0}^{U-1}$ so that $\theta^t \left(\pmb{\delta}^t, \pmb{\kappa}^t, \pmb{v}\right)$ is minimized. 
Thus, we formulate
\begin{subequations}
\label{approxProb}
\begin{align}
    &\underset{  \left\{ \delta_u^t \right\}_{u=0}^{U-1} }  {\tt{minimize}} \squad \theta^t \left(\pmb{\delta}^t, \pmb{\kappa}^t, \pmb{v}\right) \tag{\ref{approxProb}} \\
    & {\tt{subject ~ to}} \qquad \delta_u^t \geq 0, \\
    &\sum_{u=0}^{U-1} \alpha_u ( \delta_u^t )^2 \left[16(1-v_u^t)^2 + 33 \kappa \alpha_u v_u^t(1+q-v_u^t) \right] \leq 4 \label{eq:constraint_score_optim},
\end{align}
\end{subequations}
where the first constraint is to ensure a non-negative score and the second constraint is needed to satisfy a used condition to obtain (\ref{eq:convRate_Per_Iter}).

While the exact solution of (\ref{approxProb}) is hard to find due to unknown participation probabilities, unknown data distribution drift, and implicit relationship between the global loss function and the score function, we find an approximate solution using the stationary condition from the \ac{kkt} as \cite[Ch. $5$]{boyd2004convex}
\begin{align}
    &\frac{\partial}{\partial \delta_u^t} \Big[ \theta^t  \left(\pmb{\delta}^t, \pmb{\kappa}^t, \pmb{q}\right) - \sum_{u=0}^{U-1} \rs \psi_{u}^t \delta_u^t - \tilde{\psi}^t \Big(\sum_{u=0}^{U-1} \alpha_u ( \delta_u^t )^2 \big[16(1-v_u^t)^2 + \nonumber\\
    &\mquad 33 \kappa \alpha_u v_u^t(1+q-v_u^t) \big] - 4 \Big) \Big] = 0, 
\end{align}
where $\psi_u^t$ and $\tilde{\psi}^t$ are Lagrange multipliers.

Using the definitions of $\Phi^t$ and global loss functions, the above stationary condition reduces to
\begin{align}
    & 2 \beta \alpha_u \left( f_u^t(\mathbf{w}^t) - \mathbb{E} \left[f_u^{t} (\mathbf{w}^{t+1})\right] \right) + 4 (2+q) \sigma^2 v_u^t \alpha_u^2 \frac{\delta_u^t}{\kappa_u^t} + \nonumber\\
    & 8 \varpi^2 \alpha_u \delta_u^t (1 - v_u^t)^2 +  12 \varpi^2 v_u^t(1+q-v_u^t) \kappa_u^t \delta_u^t \big(\alpha_u \big)^2 - \psi_{u}^t - \nonumber\\
    & 2\tilde{\psi}^t \alpha_u \delta_u^t \big[16(1-v_u^t)^2 + 33 \kappa \alpha_u v_u^t(1+q-v_u^t) \big] = 0, 
\end{align}
where the expectation operator is due to the randomness in the future occurrence of $\mathbf{w}^{t+1}$, i.e., $\mathbb{E} \left[f^t(\mathbf{w}^{t+1}) \right] = \mathbb{E} \left[\sum_{u=0}^{U-1}\alpha_u \delta_u^t f_u^t(\mathbf{w}^{t+1}) \right] = \sum_{u=0}^{U-1}\alpha_u \delta_u^t \mathbb{E} \left[f_u^t(\mathbf{w}^{t+1}) \right]$.

Upon solving for $\delta_u^t$, we get
\begin{align}
\label{eq:opt_score}
    &\delta_u^t 
    =  \frac{\left[\psi_{u}^t + 2 \beta \alpha_u \left( \mathbb{E} \left[ f_u^{t} (\mathbf{w}^{t+1})\right] - f_u^t(\mathbf{w}^t) \right) \right] \cdot \kappa_u^t} {\digamma_2 },
\end{align}
where $\digamma_2 \coloneqq 4 (2+q) v_u^t \sigma^2 \alpha_u^2 + 8 \alpha_u \kappa_u^t \varpi^2 (1 - v_u^t)^2 + 12 v_u^t (1+q-v_u^t) \varpi^2 (\kappa_u^t)^2 \big(\alpha_u \big)^2 - 2\tilde{\psi}^t \alpha_u \kappa_u^t \big[16(1 - v_u^t)^2 + 33 \kappa \alpha_u v_u^t(1+q-v_u^t) \big]$.

From (\ref{eq:opt_score}), we observe that $\delta_u^t$ is entangled with, \emph{inter alia}, $\kappa_u^t$, $v_u^t$, $\alpha_u$, and the change in the loss function on dataset $\mathcal{D}_u^t$ evaluated with two consecutive global models, which arises from the cancellation of the loss functions between two global rounds and the data distribution drift, $\Phi^t$, term.
It is important to note that we do not know $\mathbf{w}^{t+1}$ until we choose a score function $\delta_u^t$ to update the global model as shown in (\ref{globalUpdateRule}).
Therefore, we may not calculate $f_u^t(\mathbf{w}^{t+1})$ to find the optimal weight $\delta_u^t$.
Besides, the participation probability $v_u^t$ is also unknown.
Nonetheless, (\ref{eq:opt_score}) reveals some insights: it is a function of $\kappa_u^t$, $v_u^t$, $\varpi$, $\alpha_u$, the Lagrange multipliers, and the changes in the loss function induced by the data drift. 
Based on (\ref{eq:opt_score}) and the above facts, we {\em approximately} calculate clients' scores by leveraging gradient similarities and client participation as 
\begin{align}
\label{eq:sub_optim_score}
    \delta_u^t &
    =   \begin{cases}
        \lambda_u^t/\sum_{u'=0}^{U-1} \lambda_{u'}^t, & \text{if } t=0,\\
        \left[\varsigma \mathfrak{v}_u^t + (1-\varsigma) \lambda_u^t \right] \kappa_u^t, & \text{otherwise},\\
        \end{cases}, 
\end{align}
where $\mathfrak{v}_u^t \triangleq \exp \big[1 - \mathfrak{v} [(\sum_{t'=0}^{t-1} \mathrm{1}_u^{t'})/ (\sum_{u=0}^{U-1} \sum_{t'=0}^{t-1} \mathrm{1}_u^{t'})] \big]$ and $\mathfrak{v} > 0$ is a hyperparameter.
Besides, $\lambda_u^t \triangleq [\chi + \tilde{\lambda}_u^t] / [\chi + 1]$, where $\chi \geq 1$ is a control parameter to ensure $0 \leq \lambda_u^t \leq 1$, and $\tilde{\lambda}_u^t \triangleq {\tt{cosine}}(Q(\mathbf{d}_u^t), \mathbf{d}^t)$ is the {\tt cosine} similarity between $Q(\mathbf{d}_u^t)$ and $\mathbf{d}^t \triangleq \sum_{u=0}^{U-1} [\alpha_u \kappa_u^t/ (\sum_{u'=0}^{U-1} \alpha_{u'} \kappa_{u'}^t) ] \cdot Q(\mathbf{d}_u^t) $.  
Therefore, at the first round \ac{osafl} chooses the client's score as the normalized {\tt cosine} similarities between an individual client's quantized normalized gradient and accumulated weighted normalized quantized gradient $\mathbf{d}^t$ and for the subsequent global rounds, it balances between the participation score $\mathfrak{v}_u^t$ and gradient similarity $\lambda_u^t$.
Moreover, \ac{osafl} assigns more weights to the clients that participate less in $\mathfrak{v}_u^t$.

Intuitively, the gradient similarity provides a measure of how data drift, non-IID data distributions, and other errors arising from resource constraints affect the consecutive loss functions. 
We thus take the weighted combination of the client's gradient similarity with their participation frequencies, and weight the combined output by their corresponding local training rounds to calculate their scores.
Besides, it is easy to see that the maximum value of $\lambda_u^t$ can be $1$ when ${\tt{cosine}}(Q(\mathbf{d}_u^t), \mathbf{d}^t)=1$. 
Therefore, if the suboptimal score is chosen following (\ref{eq:sub_optim_score}), to satisfy (\ref{eq:constraint_score_optim}), the following must hold.
\begin{align} 
    &\sum\nolimits_{u=0}^{U-1} \alpha_u \digamma_3 \leq 4U^2, \text{ if } t=0. \\
    &\sum\nolimits_{u=0}^{U-1} \alpha_u \left( [\varsigma(\mathfrak{v}_u^t-1) + 1] \kappa_u^t \right)^2 \digamma_3 \leq 4, \text{ if } t>0,    
\end{align}
where $\digamma_3 \coloneqq \left[16(1-v_u^t)^2 + 33 \kappa \alpha_u v_u^t(1+q-v_u^t) \right]$.

\subsubsection{Choice of $\eta_\mathrm{ef}^t$}
We now focus on gaining insights into how to choose the step-size scheduler $\eta_\mathrm{ef}^t$.
For given learning rates $\eta_\mathrm{gl}$ and $\eta_\mathrm{lo}$, the ideal value of $\eta_\mathrm{ef}^t$ should minimize the right-hand side of (\ref{eq:conv_bound_dom_terms}).
Using the definitions of the global loss functions and $\Phi^t$, the right-hand side 
\begin{align}
    &\left(2 \sum\nolimits_{u=0}^{U-1} \alpha_u \delta_u^t \big[ f_u^t(\mathbf{w}^t) - \mathbb{E} \left[ f_u^t(\mathbf{w}^{t+1}) \right]\big] \right) / (\eta_\mathrm{ef}^t \eta_\mathrm{gl} \eta_\mathrm{lo}) + \nonumber\\
    &\qquad 2(2+q) \beta \eta_\mathrm{ef}^t \eta_\mathrm{gl} \eta_\mathrm{lo} \sigma^2 \sum\nolimits_{u=0}^{U-1} v_u^t \alpha_u^2  (\delta_u^t)^2/\kappa_u^t + \nonumber\\
    &\qquad 6 \beta \eta_\mathrm{gl} \eta_\mathrm{lo} \eta_\mathrm{ef}^t \varpi^2 \sum\nolimits_{u=0}^{U-1} v_u^t(1+q-v_u^t) \kappa_u^t \left(\alpha_u \delta_u^t\right)^2 + \nonumber\\
    &\qquad 4 \varpi^2 \sum\nolimits_{u=0}^{U-1} \alpha_u ( \delta_u^t (1 - v_u^t) )^2 \coloneqq \rs \tilde{\theta}^t \rs \left(\pmb{\delta}^t, \pmb{\kappa}^t, \pmb{v}, \eta_\mathrm{ef}^t\right)\rs.\rs
\end{align}
Then, using a similar strategy to minimize $\tilde{\theta}^t\left(\pmb{\delta}^t, \pmb{\kappa}^t, \pmb{v}, \eta_\mathrm{ef}^t\right)$ subject to the constraint $\eta_\mathrm{ef}^t >0$, and followed by considering the stationary condition of the \ac{kkt} conditions, we have
\begin{align}
    &\frac{\partial }{\partial \eta_\mathrm{ef}^t} [\tilde{\theta}^t\left(\pmb{\delta}^t, \pmb{\kappa}^t, \pmb{v}, \eta_\mathrm{ef}^t\right) - \hat{\psi}^t \eta_\mathrm{ef}^t] = \nonumber\\
    &\big[-\bviol{2}\sum\nolimits_{u=0}^{U-1} \alpha_u \delta_u^t \left(f_u^t(\mathbf{w}^t) - \mathbb{E} \left[f_u^{t}(\mathbf{w}^{t+1})\right] \right) \big] / [\eta_\mathrm{gl} \eta_\mathrm{lo} (\eta_\mathrm{ef}^t)^2] + \nonumber\\
    & 2 (2+q) \beta \eta_\mathrm{gl} \eta_\mathrm{lo} \sigma^2 \sum\nolimits_{u=0}^{U-1} v_u^t \alpha_u^2 (\delta_u^t)^2 / \kappa_u^t + \nonumber\\
    & 6 \beta \eta_\mathrm{gl} \eta_\mathrm{lo} \varpi^2 \sum\nolimits_{u=0}^{U-1} v_u^t (1 + q - v_u^t) \kappa_u^t (\alpha_u \delta_u^t)^2 - \hat{\psi}^t = 0,
\end{align}
where $\hat{\psi}^t$ is the Lagrange multiplier. 

Now, solving for $\eta_\mathrm{ef}^t$, we get
\begin{align}
\label{eq:opt_eta_ef}
    \eta_\mathrm{ef}^t 
    &= \frac{  \sqrt{2 \sum\nolimits_{u=0}^{U-1} \alpha_u \delta_u^t \left( f_u^t(\mathbf{w}^t) - \mathbb{E} \left[f_u^{t}(\mathbf{w}^{t+1})\right] \right) } } {\sqrt{\digamma_4} } \nonumber\\
    &=\sqrt{\left(2 (f^t(\mathbf{w}^t) - \mathbb{E} [f^{t}(\mathbf{w}^{t+1}) ] \right) / \digamma_4 },
\end{align}
where $\digamma_4 \coloneqq 2 (2+q) \beta \eta_\mathrm{gl} \eta_\mathrm{lo} \sigma^2 \sum\nolimits_{u=0}^{U-1} v_u^t \alpha_u^2 (\delta_u^t)^2 / \kappa_u^t + 6 \beta \eta_\mathrm{gl} \eta_\mathrm{lo} \varpi^2 \sum\nolimits_{u=0}^{U-1} v_u^t (1 + q - v_u^t) \kappa_u^t (\alpha_u \delta_u^t)^2 - \hat{\psi}^t$.

It is clear from (\ref{eq:opt_eta_ef}) that, similar to $\delta_u^t$, the global step size controller $\eta_\mathrm{ef}^t$ also depends on many {\em unknowns}.
However, we recognize that the optimal $\eta_\mathrm{ef}^t$ is the fraction of the change in the global loss functions, calculated with the same dataset with two consecutive global models, and the scores of the clients multiplied by other unknown hyperparameters.
As such, we use the following global step scheduler that leverages clients' aggregated weights and similarities of the successive rounds' global gradients.
\begin{align}
\label{eq:eta_ef}
    \eta_\mathrm{ef}^t =
    \begin{cases}
        1 / [\sum_{u=0}^{U-1} \alpha_u^t], & \text{if $t=0$},\\
        \eta_\mathrm{sch}^t ( \{ \exp[\bar{\lambda}^t - \mathfrak{a}] \} / \{ \sum_{u=0}^{U-1} \alpha_u^t \} ), & \text{otherwise},
    \end{cases},
\end{align}
where $\eta_\mathrm{sch}^t$ is a linearly decayed scalar, $\bar{\lambda} \triangleq {\tt cosine}(\mathbf{d}^{t-1}, \mathbf{d}^{t})$, and $\mathfrak{a} > 0$ is a hyperparameter.
It is worth noting that $\eta_\mathrm{ef}^t$ regulates the effective change in the global model, as shown in the update rule in (\ref{globalUpdateRule}). 
Therefore, if $\eta_\mathrm{sch}^t$ is decayed drastically, it may slow down the convergence speed.

\begin{figure*}
\begin{subfigure}{0.33\textwidth}
    \includegraphics[trim=15 5 45 25, clip, width=\textwidth]{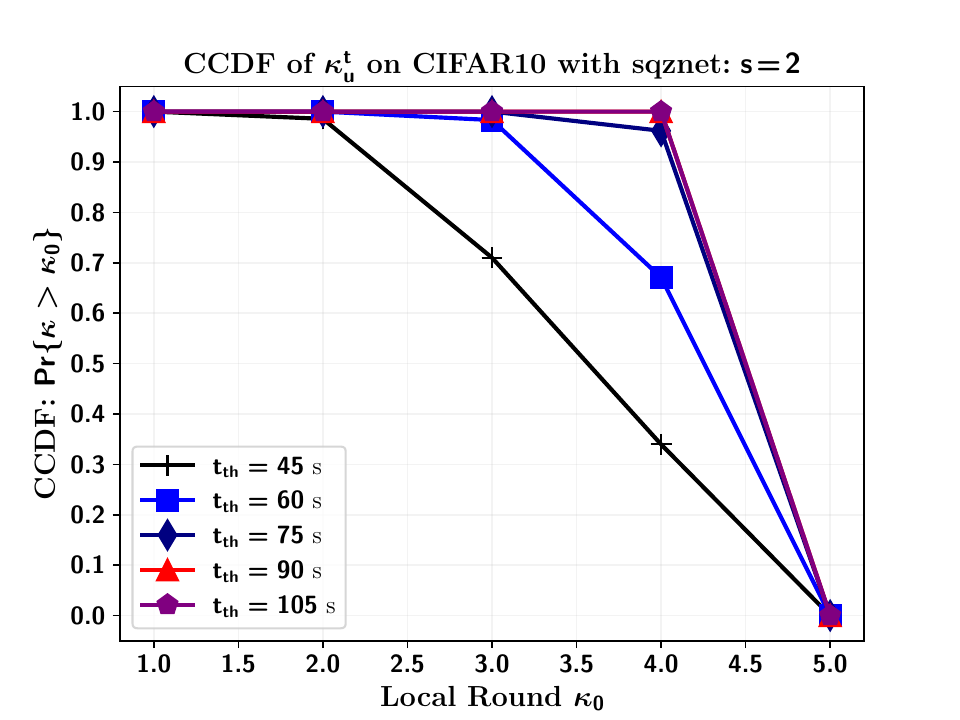}
    \caption{\sqz with $s=2$}
\end{subfigure}
\begin{subfigure}{0.33\textwidth}
    \includegraphics[trim=15 5 45 25, clip, width=\textwidth]{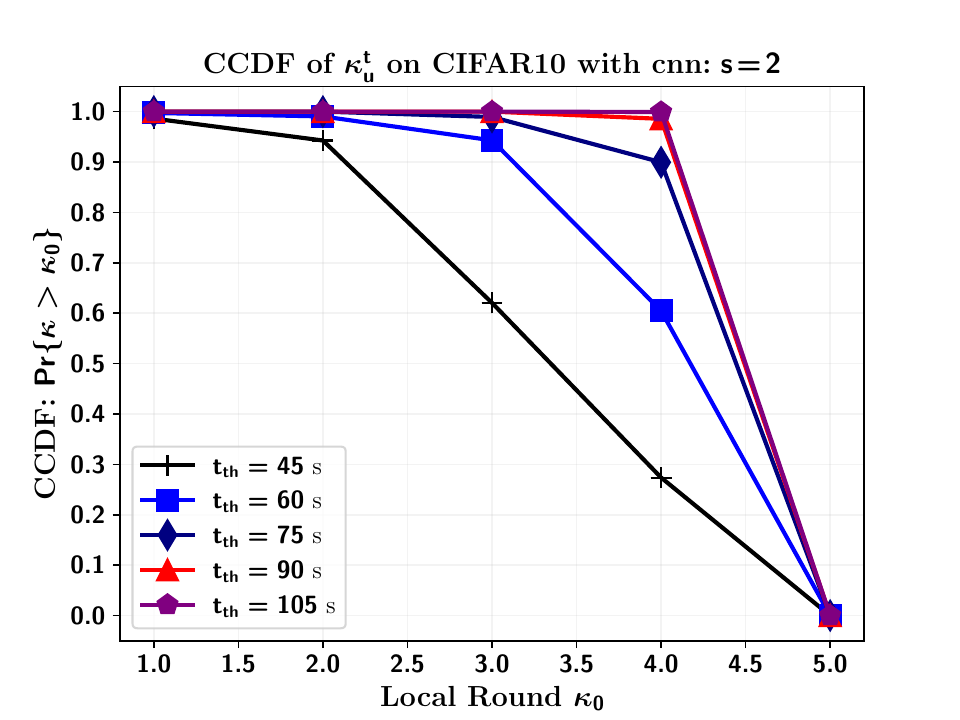}
    \caption{\ac{cnn} with $s=2$}
    \label{fig:cdfLocalItrVsQuant_cnn_s2}
\end{subfigure}
\begin{subfigure}{0.33\textwidth}
    \includegraphics[trim=15 5 45 25, clip, width=\textwidth]{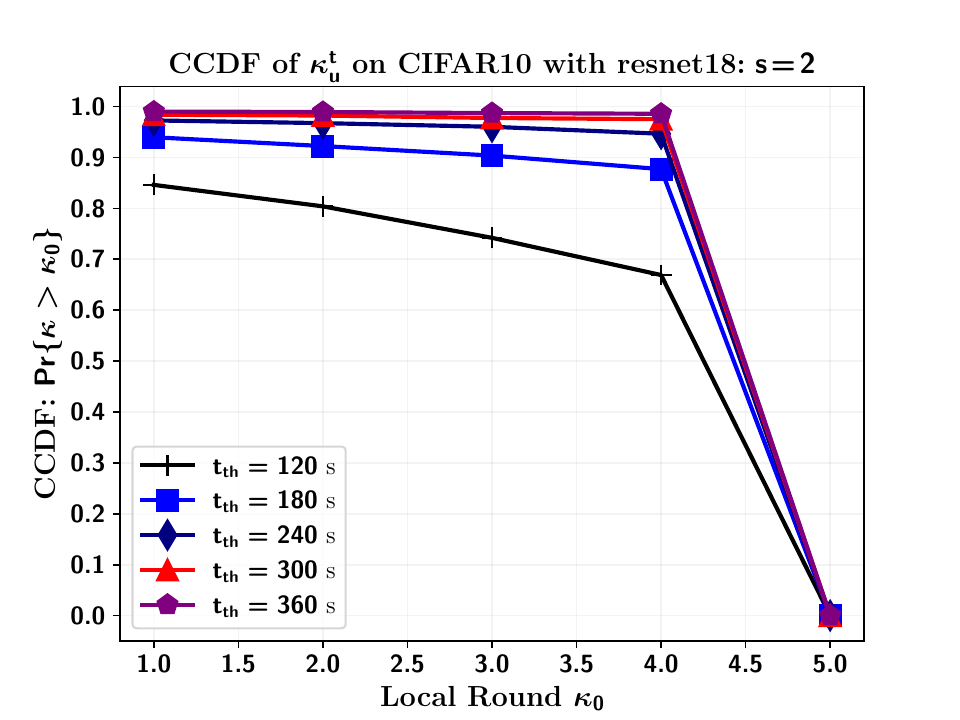}
    \caption{\ac{resnet18} with $s=2$}
    \label{fig:cdfLocalItrVsQuant_resnet_s2}
\end{subfigure}
\begin{subfigure}{0.33\textwidth}
    \includegraphics[trim=15 5 45 25, clip, width=\textwidth]{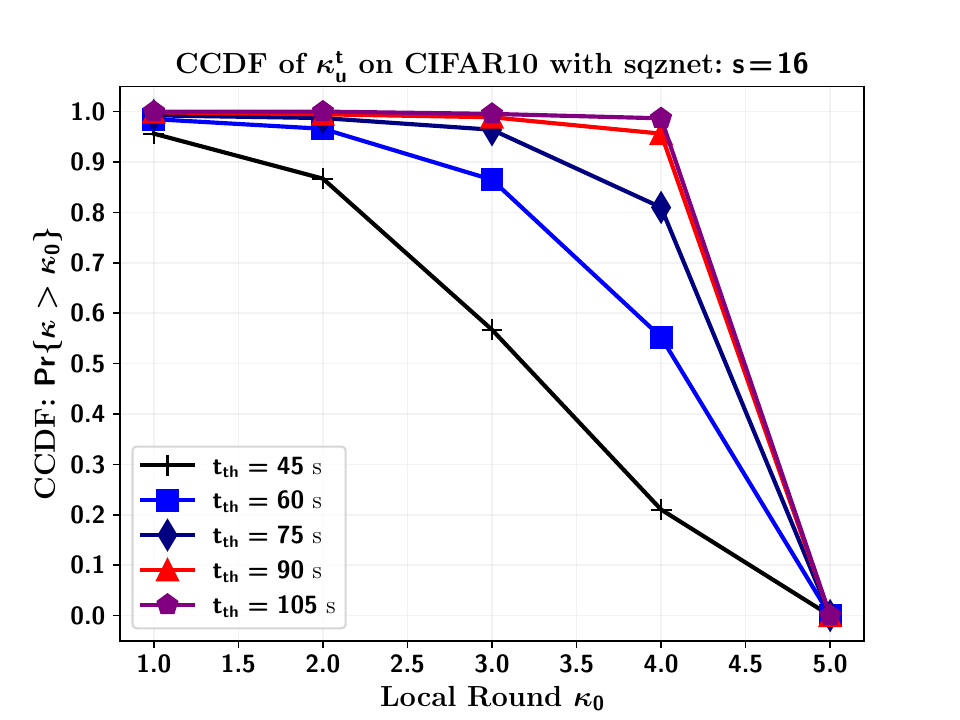}
    \caption{\sqz with $s=16$}
\end{subfigure}
\begin{subfigure}{0.33\textwidth}
    \includegraphics[trim=15 5 45 25, clip,width=\textwidth]{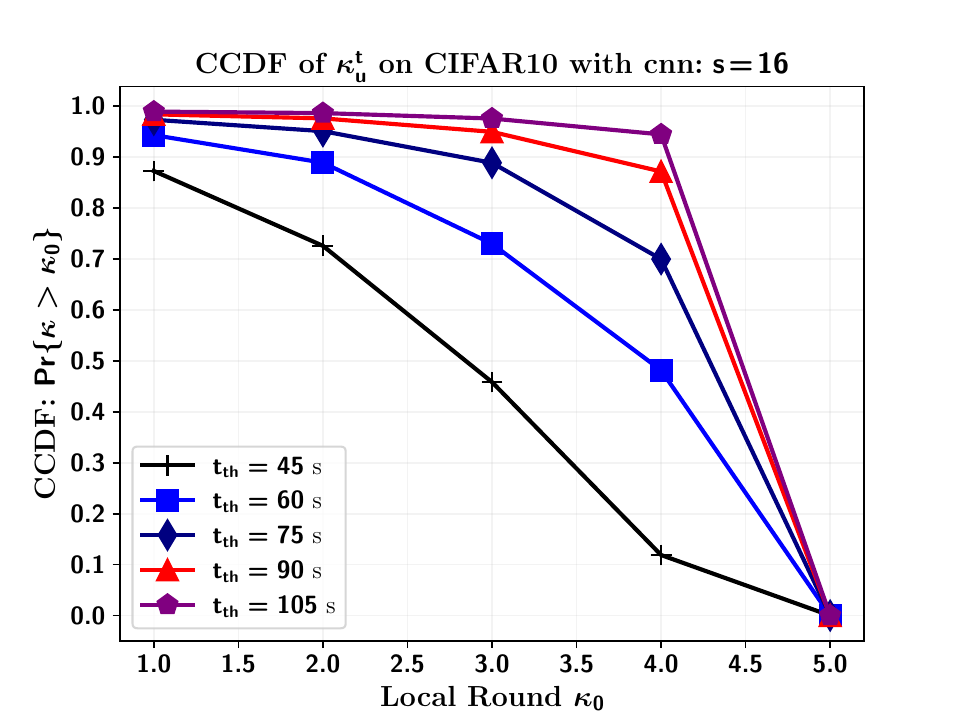}
    \caption{\ac{cnn} with $s=16$}
    \label{fig:cdfLocalItrVsQuant_cnn_s16}
\end{subfigure}
\begin{subfigure}{0.33\textwidth}
    \includegraphics[trim=1 5 45 25, clip, width=\textwidth]{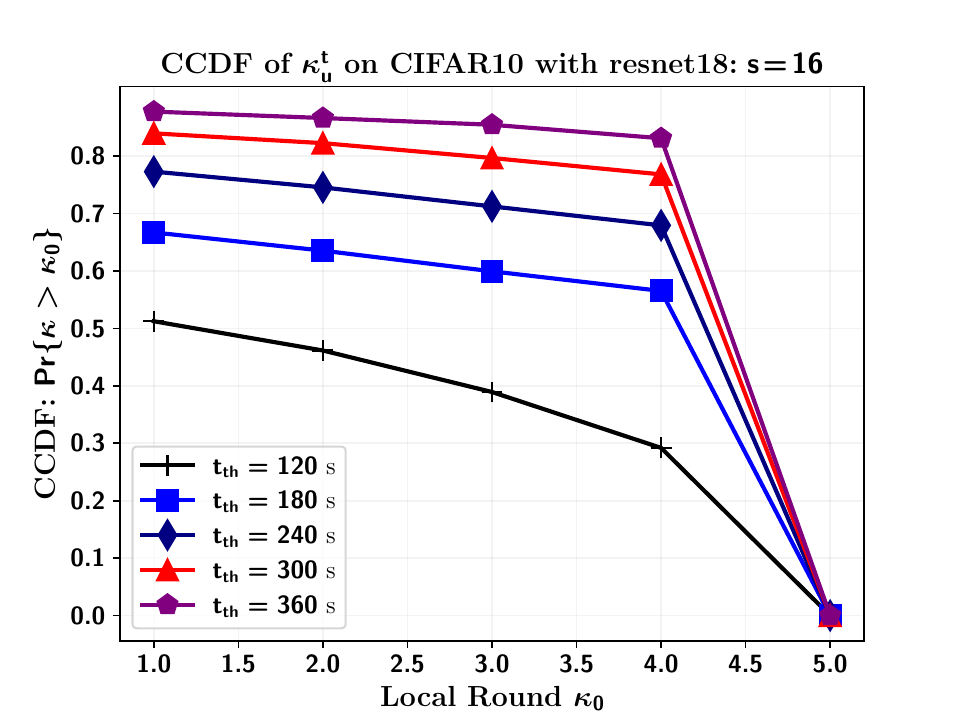}
    \caption{\ac{resnet18} with $s=16$}
    \label{fig:cdfLocalItrVsQuant_resnet_s16}
\end{subfigure}
\caption{CCDF of local iterations for different deadlines and quantization levels}
\label{fig:cdfLocalItrVsQuant}
\end{figure*}

\section{Simulation Results and Discussions}
\label{sec_results}

\noindent
In this section, we first discuss our simulation settings.
Then we present performance analysis for different parameters.
Finally, we discuss extensive performance comparisons with various baselines.

\subsection{Simulation Settings}
\subsubsection{Learning Task and Datasets}
 
While \ac{osafl} can be applicable to any wireless/other applications with time-varying datasets, obtaining real-world datasets for wireless applications is difficult, as wireless network operators do not publicly share such data.
Therefore, we use image classification as our primary learning task and evaluate the proposed \ac{osafl} algorithm and other baselines on $3$ popular datasets: CIFAR10 \cite{krizhevsky2009learning}, Fashion-MNIST\cite{xiao2017fashion}, and MNIST \cite{lecun2002gradient}.

\subsubsection{ML Models}
For the above task and datasets, we use three \ac{ml} models: (a) \ac{cnn}, (b) \sqz \cite{iandola2016squeezenet}, and (c) \ac{resnet18} \cite{he2016deep}.
The \ac{cnn} model has the following architecture: {\tt Conv2d $\rightarrow$ MaxPool2d $\rightarrow$ Conv2d $\rightarrow$ MaxPool2d $\rightarrow$ Linear $\rightarrow$ ReLU $\rightarrow$ Linear}.
With CIFAR10 dataset, these models have $727,626$; $1,206,090$; and $11,181,642$ trainable parameters, respectively. 
The reason to consider three model architectures with varying numbers of trainable parameters is to validate the performance of the proposed \ac{osafl} algorithm under different requirements.
Since batch normalization is problematic under non-IID data distribution in \ac{fl} \cite{li2021fedbn}, we replace the {\tt BatchNorm} layers in the original \sqz~ and \ac{resnet18} models with {\tt GroupNorm}.

\begin{figure*}
\begin{subfigure}{0.33\textwidth}
    \includegraphics[trim=12 5 30 25, clip, width=\textwidth]{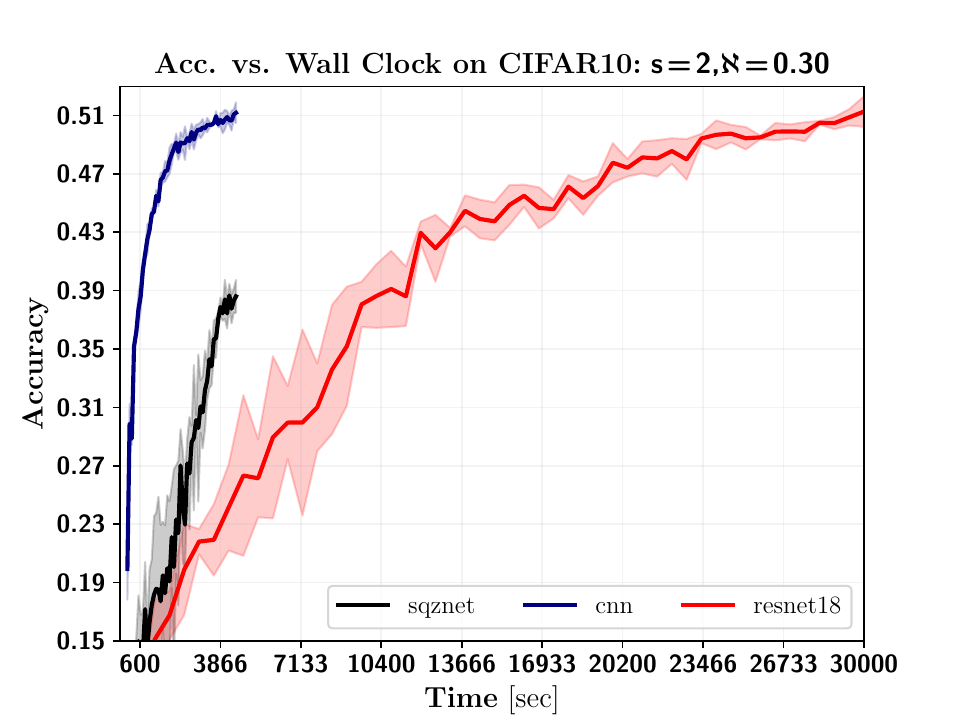}
    \caption{CIFAR10 with $s=2, \aleph=0.3$}
\end{subfigure}
\begin{subfigure}{0.33\textwidth}
    \includegraphics[trim=15 5 30 25, clip, width=\textwidth]{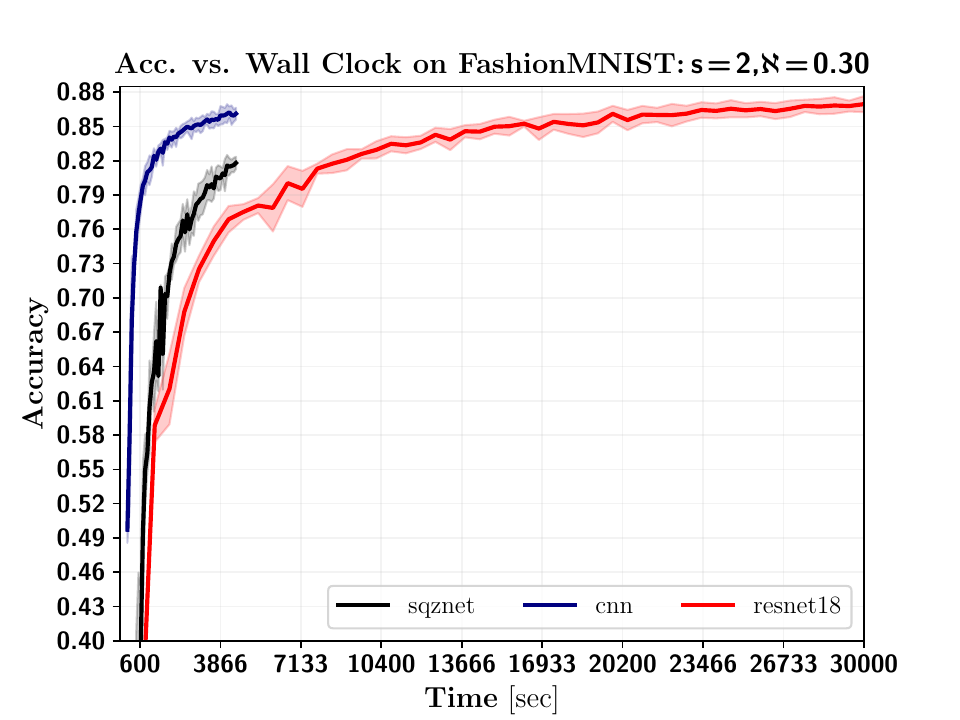}
    \caption{FashionMNIST with $s=2, \aleph=0.3$}
\end{subfigure}
\begin{subfigure}{0.33\textwidth}
    \includegraphics[trim=15 5 30 25, clip, width=\textwidth]{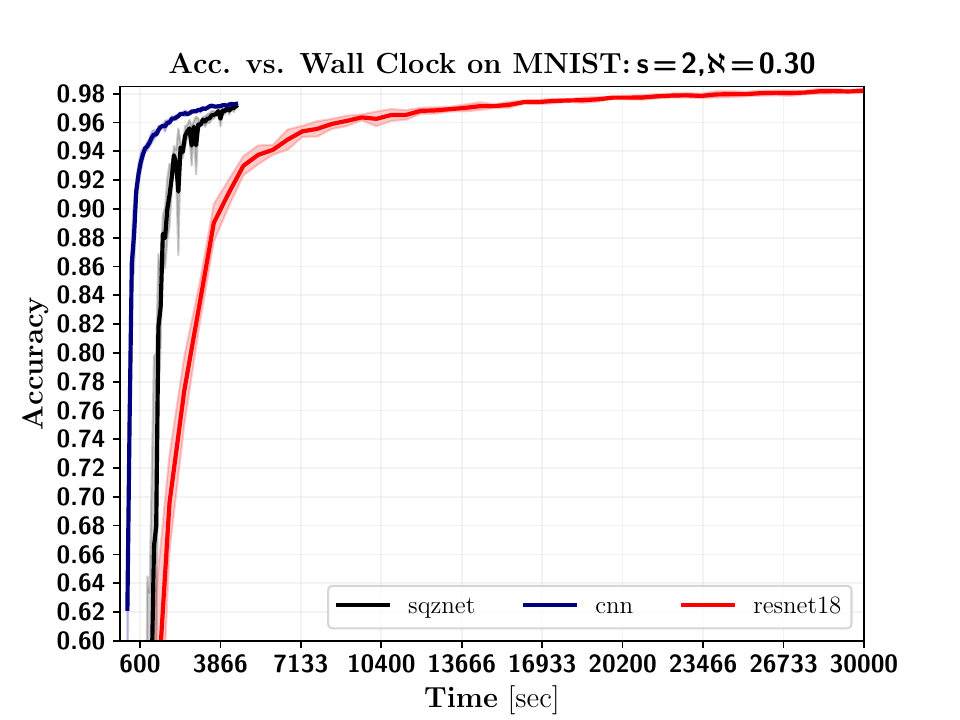}
    \caption{MNIST with $s=2, \aleph=0.3$}
\end{subfigure}
\caption{Wall clock vs test accuracies for different models on different datasets}
\label{fig:wallClock_Vs_Acc}
\end{figure*}
\begin{figure*}
\begin{subfigure}{0.33\textwidth}
    \includegraphics[trim=12 5 30 25, clip, width=\textwidth]{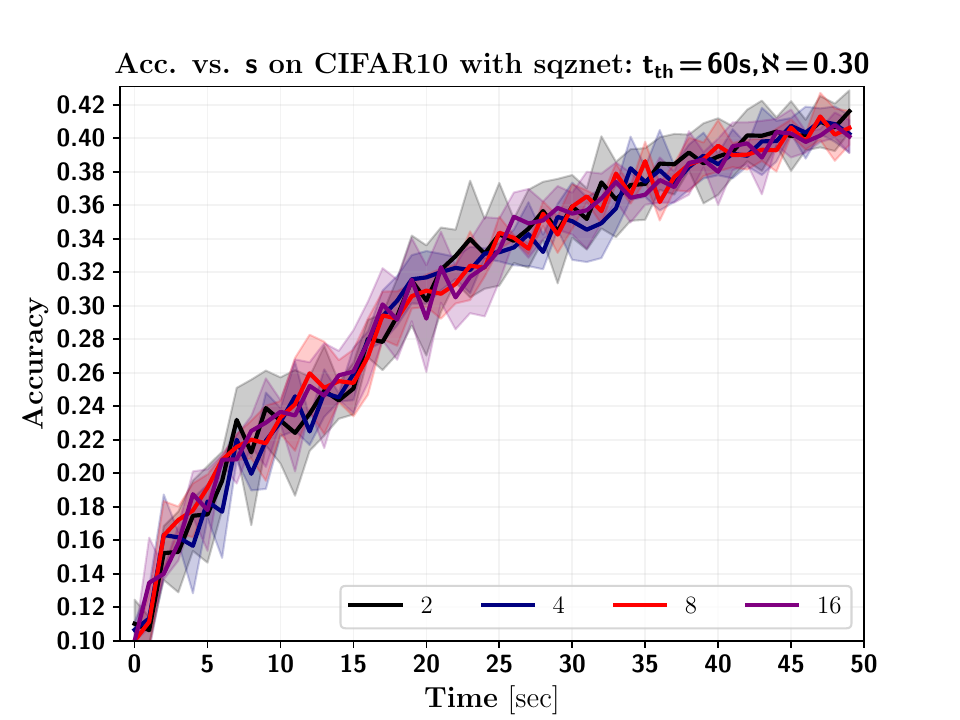}
    \caption{\sqz with $\aleph=0.3$}
    \label{fig:quantLevel_vs_acc_sqz}
\end{subfigure}
\begin{subfigure}{0.33\textwidth}
    \includegraphics[trim=15 5 42 25, clip, width=\textwidth]{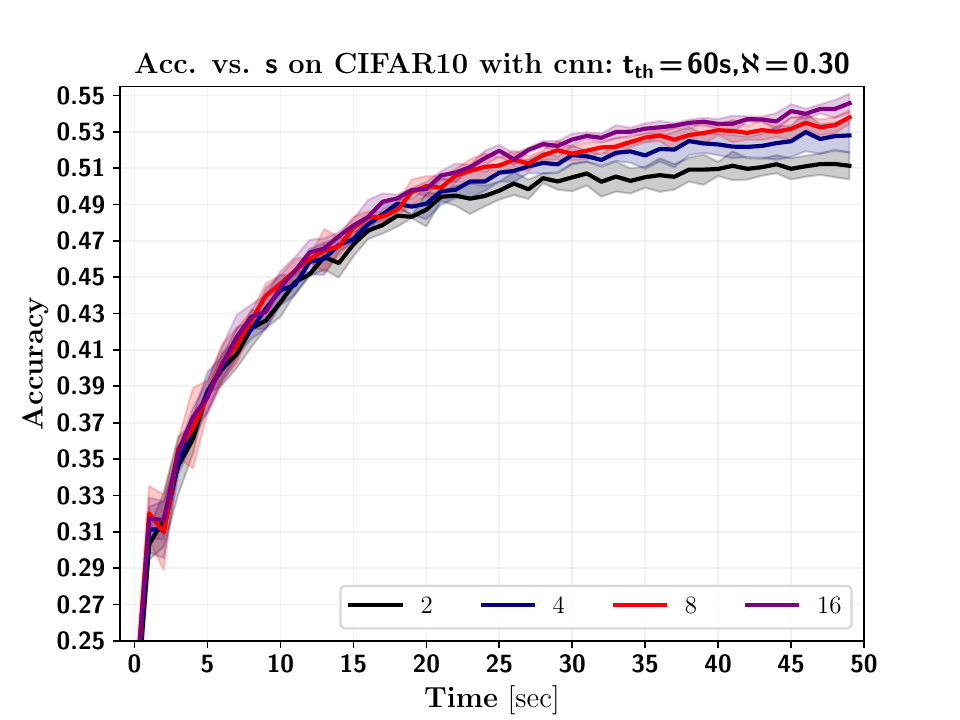}
    \caption{\ac{cnn} with $\aleph=0.3$}
    \label{fig:quantLevel_vs_acc_cnn}
\end{subfigure}
\begin{subfigure}{0.33\textwidth}
    \includegraphics[trim=15 5 45 25, clip, width=\textwidth]{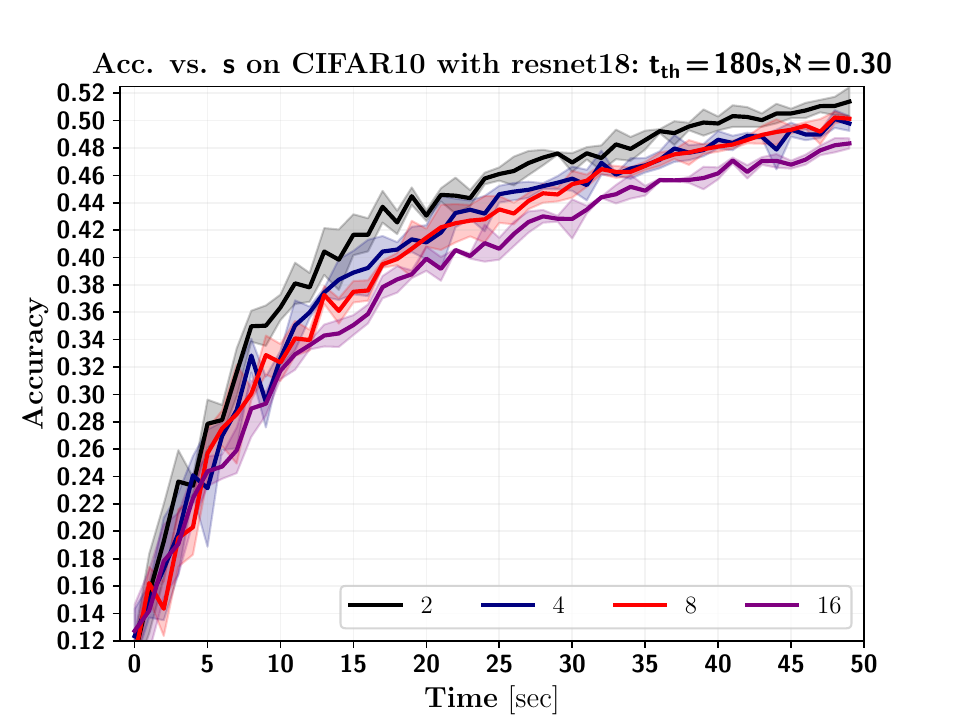}
    \caption{\ac{resnet18} with $\aleph=0.3$}
    \label{fig:quantLevel_vs_acc_resnet18}
\end{subfigure}
\caption{Quantization level vs test accuracies for different models on CIFAR10 dataset}
\label{fig:quantLevel_vs_acc}
\end{figure*}

\begin{figure*}[!t]
\begin{subfigure}{0.33\textwidth}
    \centering
    \includegraphics[trim=10 5 10 25,clip, width=\textwidth]{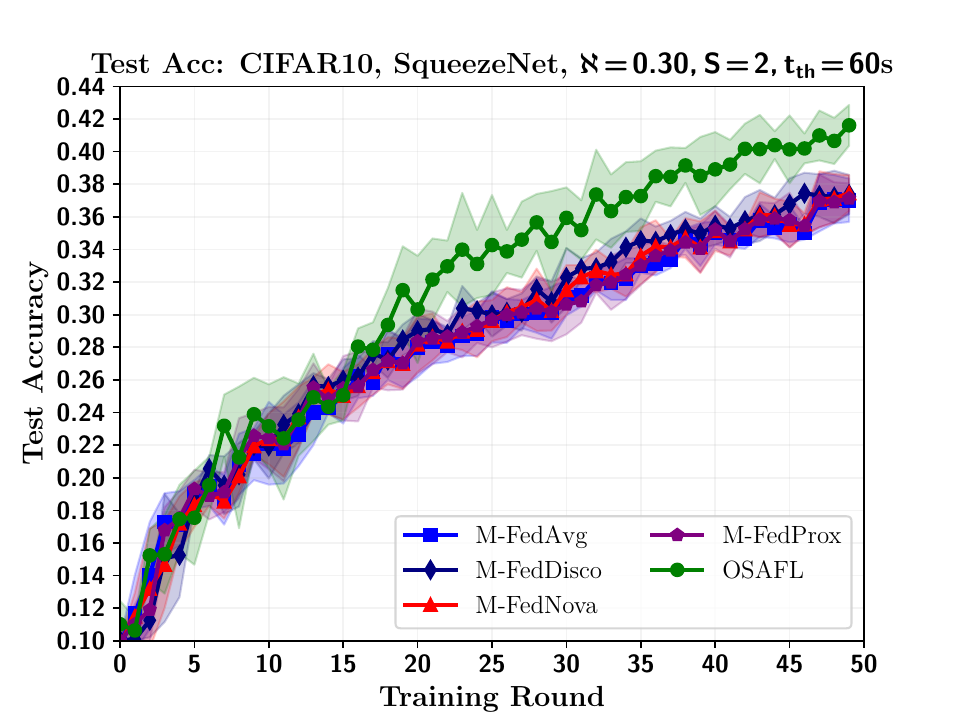}
    \caption{With \sqz on CIFAR10}
\end{subfigure}
\begin{subfigure}{0.33\textwidth}
    \centering
    \includegraphics[trim=10 5 10 25,clip, width=\textwidth]{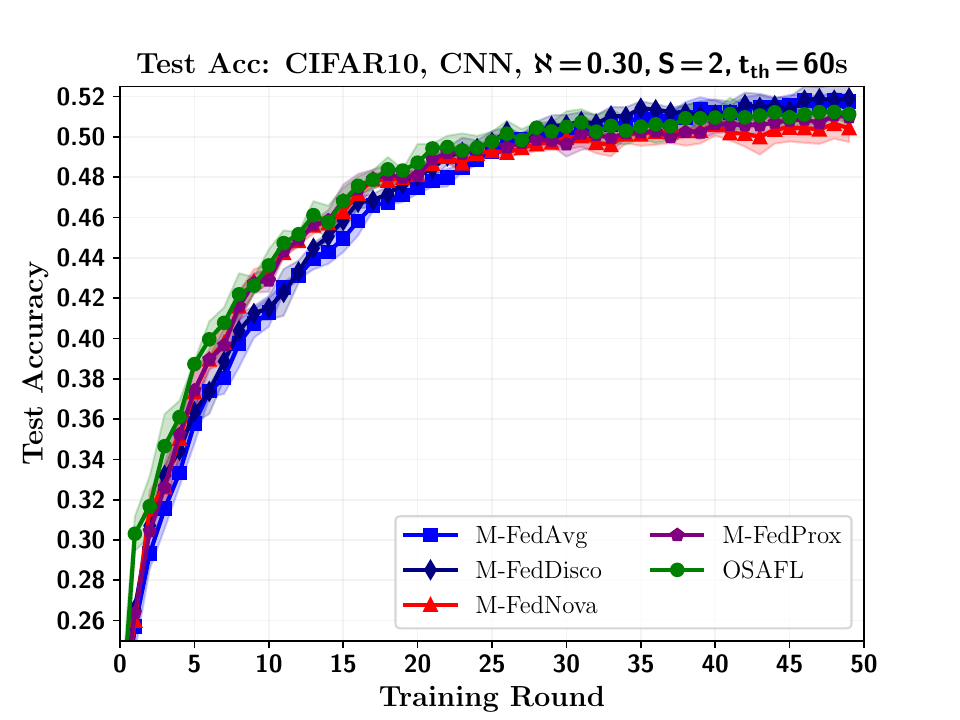}
    \caption{With \ac{cnn} on CIFAR10}
\end{subfigure}
\begin{subfigure}{0.33\textwidth}
    \centering
    \includegraphics[trim=10 5 10 25,clip, width=\textwidth]{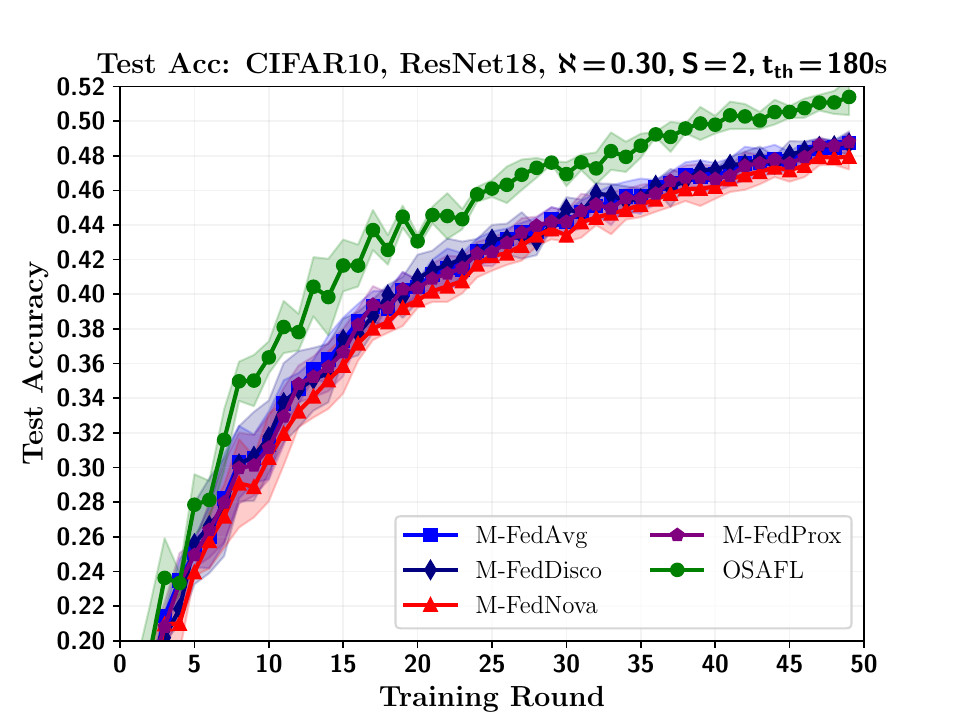}
    \caption{With \ac{resnet18} on CIFAR10}
\end{subfigure}
\begin{subfigure}{0.33\textwidth}
    \centering
    \includegraphics[trim=10 5 10 25,clip, width=\textwidth]{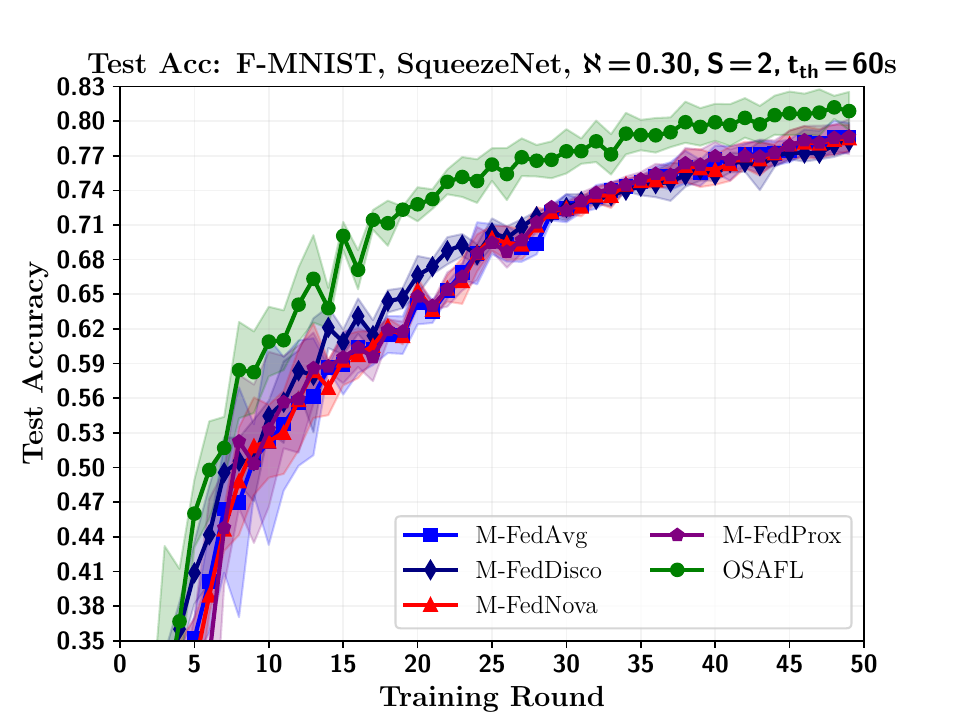}
    \caption{With \sqz on FashionMNIST}
\end{subfigure}
\begin{subfigure}{0.33\textwidth}
    \centering
    \includegraphics[trim=10 5 10 25,clip, width=\textwidth]{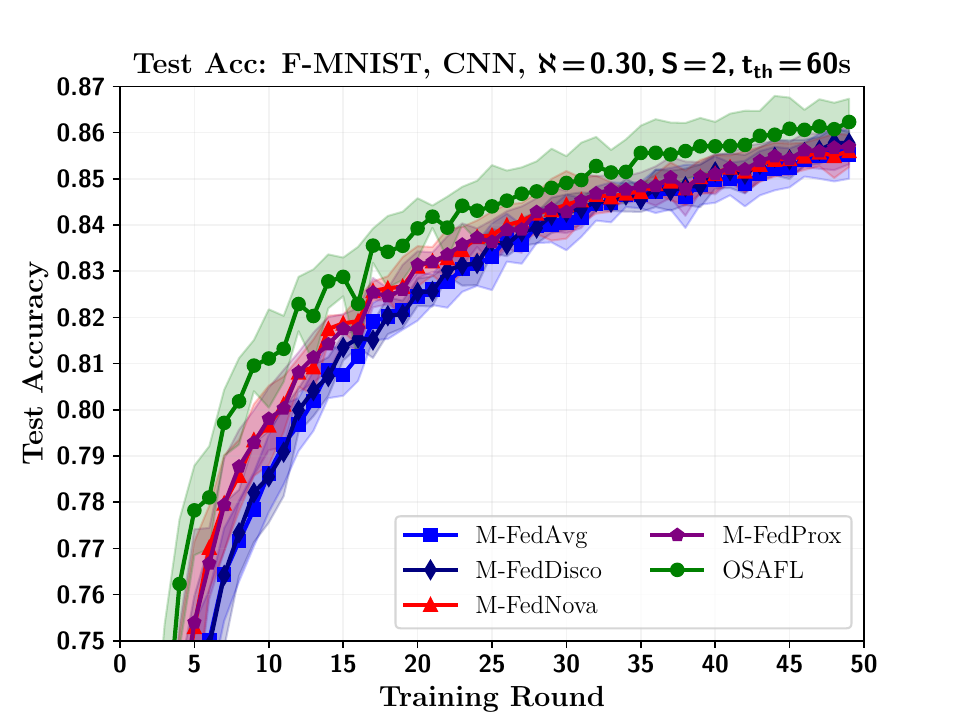}
    \caption{With \ac{cnn} on FashionMNIST}
\end{subfigure}
\begin{subfigure}{0.34\textwidth}
    \centering
    \includegraphics[trim=0 5 10 0,clip, width=\textwidth]{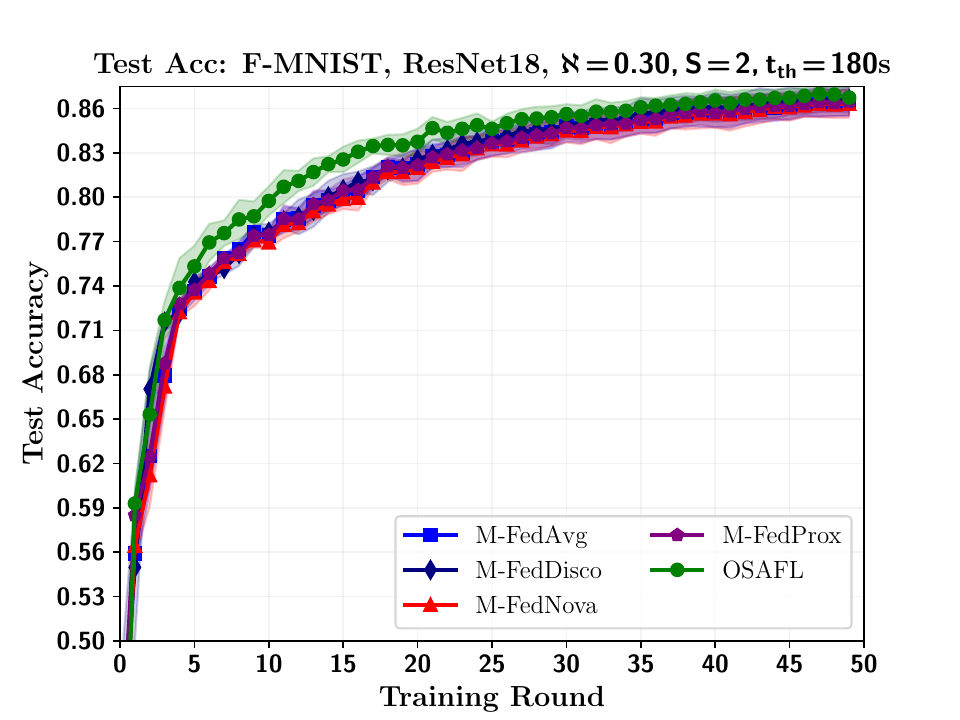}
    \caption{With \ac{resnet18} on FashionMNIST}
\end{subfigure}
\begin{subfigure}{0.33\textwidth}
    \centering
    \includegraphics[trim=10 5 10 25,clip, width=\textwidth]{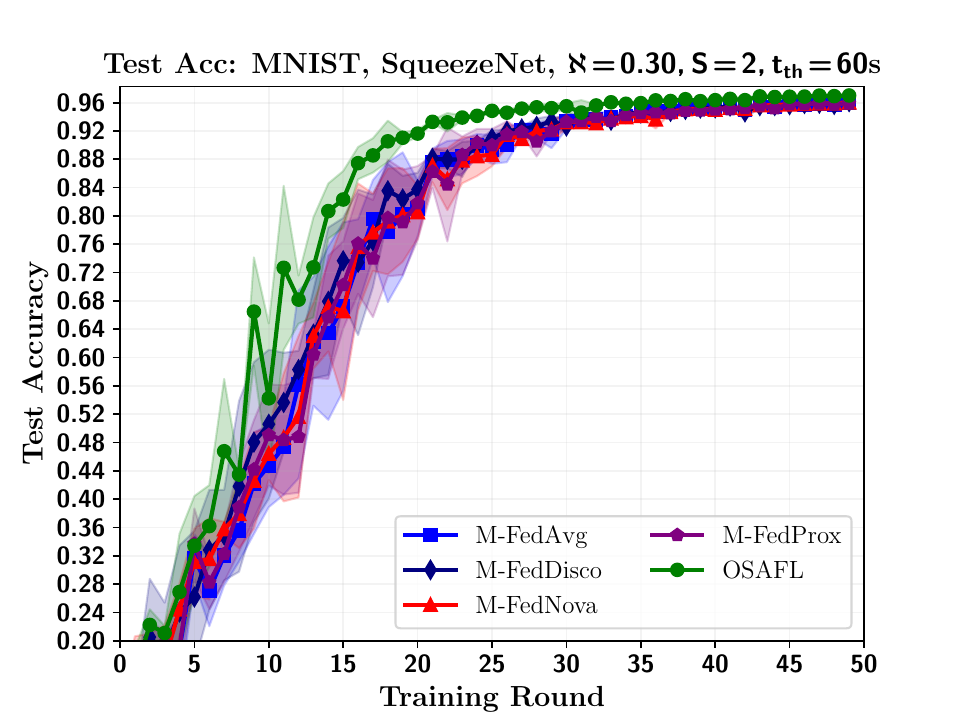}
    \caption{With \sqz on MNIST}
\end{subfigure}
\begin{subfigure}{0.33\textwidth}
    \centering
    \includegraphics[trim=10 5 10 25,clip, width=\textwidth]{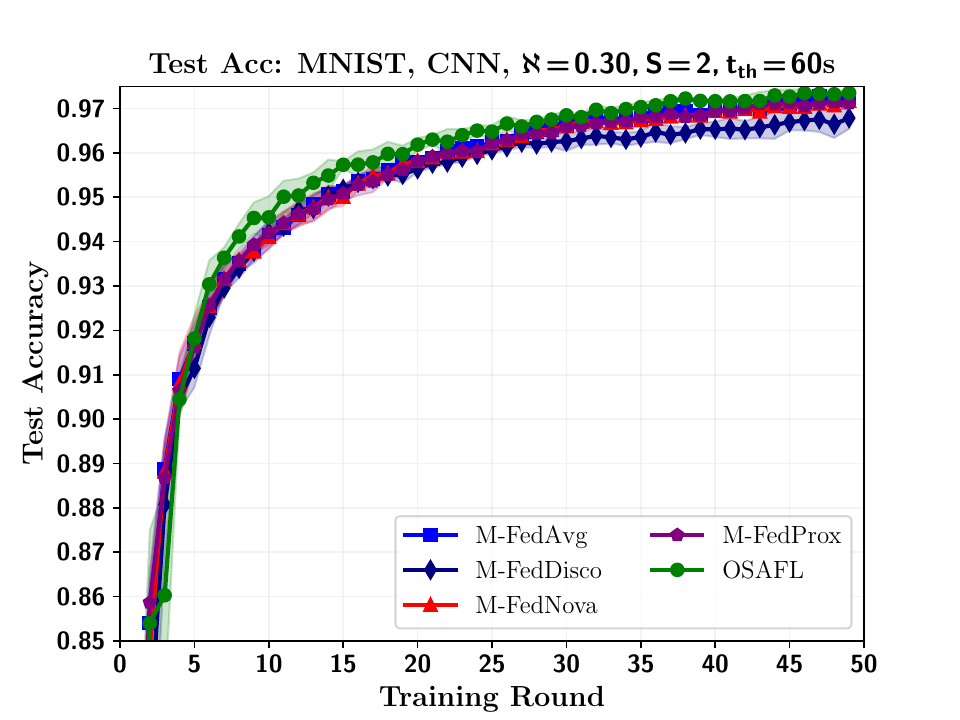}
    \caption{With \ac{cnn} on MNIST}
\end{subfigure}
\begin{subfigure}{0.34\textwidth}
    \centering
    \includegraphics[trim=10 5 10 25,clip, width=\textwidth]{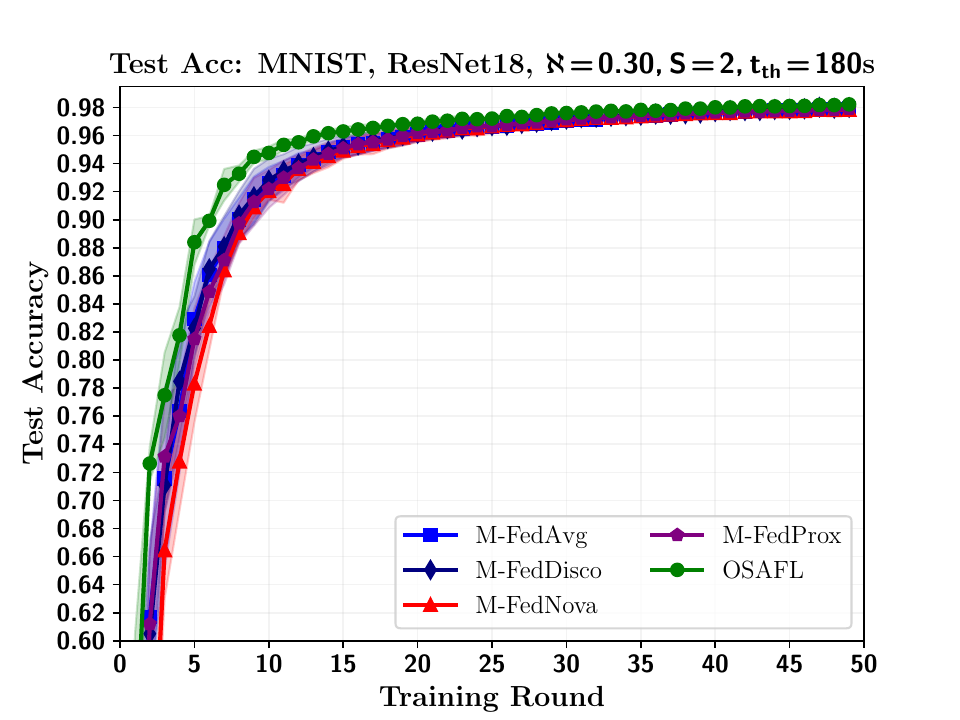}
    \caption{With \ac{resnet18} on MNIST}
\end{subfigure}
\caption{Test accuracy comparisons with different baselines on different datasets with the \ac{ml} models when $s=2$ and $\aleph=0.3$}
\label{fig:acc_comp_all_data_mod_s_2_alpha_0.3}
\end{figure*}
\begin{table*}
\centering
\caption{Test Performance Comparisons on \textbf{CIFAR10} with $s=2,\aleph=0.3$ \colorbox{green!30}{best}, \colorbox{red!30}{second best})}
\begin{tabular}{|C{0.8cm}|C{2cm}| C{1.84cm}|C{1.84cm}|C{1.84cm}|C{1.84cm}|C{1.84cm}| C{1.84cm}|} \hline
\multirow{2}{*}{\rs \textbf{Type}} &\multirow{2}{*}{\rs \textbf{Algorithms}} & \multicolumn{2}{c|}{\sqz, $\mathrm{t_{th}}=60$s } & \multicolumn{2}{c|}{\ac{cnn}, $\mathrm{t_{th}}=60$s} & \multicolumn{2}{c|}{\ac{resnet18}, $\mathrm{t_{th}}=180$s } \\ \cline{3-8}
& & \textbf{Test Acc.} $\uparrow$ & \textbf{Test Loss} $\downarrow$ & \textbf{Test Acc.} $\uparrow$ & \textbf{Test Loss} $\downarrow$ & \textbf{Test Acc.} $\uparrow$ & \textbf{Test Loss} $\downarrow$ \\ \hline
\rowcolor{blue!15} \rs \textbf{Genie} & 
SGD & $0.5658 \pm 0.0045$ & $1.2027 \pm 0.0184$ & $0.5934 \pm 0.0029$ & $1.2185 \pm 0.0157$ & $0.6393 \pm 0.0276$ & $1.3572 \pm 0.1063$ \\ \hline
\multirow{5}{*}{\makecell{\rotatebox{90}{\textbf{Federated}}}} & 
\textbf{OSA-FL (Ours)} & \cellcolor{green!30}  $0.4169 \pm 0.0132$ & \cellcolor{green!30} $1.5750 \pm 0.0451$ & $0.5135 \pm 0.0063$ & $1.4206 \pm 0.0264$ & \cellcolor{green!30} $0.5160 \pm 0.0086$ & \cellcolor{green!30} $1.3968 \pm 0.0178$ \\ \cline{2-8}
& M-FedAvg & $0.3719 \pm 0.0154$ & $1.7082 \pm 0.0159$ & \cellcolor{red!30} $0.5193 \pm 0.0030$ & \cellcolor{green!30} $1.3935 \pm 0.0160$ & $0.4872 \pm 0.0056$ & $1.4293 \pm 0.0109$ \\ \cline{2-8} 
& M-FedNova & \cellcolor{red!30} $0.3760 \pm 0.0119$ & $1.6969 \pm 0.0275$ & $0.5086 \pm 0.0063$ & $1.4563 \pm 0.0165$ & $0.4813 \pm 0.0051$ & $1.4388 \pm 0.0037$ \\ \cline{2-8}
& M-FedProx & $0.3745 \pm 0.0115$ & $1.6985 \pm 0.0108$ & $0.5123 \pm 0.0016$ & $1.4498 \pm 0.0220$ & \cellcolor{red!30} $0.4896 \pm 0.0031$ & $1.4306 \pm 0.0047$ \\ \cline{2-8}
& M-FedDisco & $0.3749 \pm 0.0129$ & \cellcolor{red!30} $1.6841 \pm 0.0257$ & \cellcolor{green!30} $0.5206 \pm 0.0072$ & \cellcolor{red!30} $1.4061 \pm 0.0271$ & $0.4888 \pm 0.0060$ & \cellcolor{red!30} $1.4232 \pm 0.0072$ \\ \hline 
\end{tabular}
\label{tab:accCompTable_cifar_s_2_alpha_0.3}
\end{table*}
\begin{table*}
\centering
\caption{Test Performance Comparisons on \textbf{FashionMNIST} with $s=2,\aleph=0.3$ (\colorbox{green!30}{best}, \colorbox{red!30}{second best})}
\begin{tabular}{|C{0.8cm}|C{2cm}| C{1.84cm}|C{1.84cm}|C{1.84cm}|C{1.84cm}|C{1.84cm}| C{1.84cm}|} \hline
\multirow{2}{*}{\rs \textbf{Type}} &\multirow{2}{*}{\rs \textbf{Algorithms}} & \multicolumn{2}{c|}{\sqz, $\mathrm{t_{th}}=60$s } & \multicolumn{2}{c|}{\ac{cnn}, $\mathrm{t_{th}}=60$s} & \multicolumn{2}{c|}{\ac{resnet18}, $\mathrm{t_{th}}=180$s } \\ \cline{3-8}
& & \textbf{Test Acc.} $\uparrow$ & \textbf{Test Loss} $\downarrow$ & \textbf{Test Acc.} $\uparrow$ & \textbf{Test Loss} $\downarrow$ & \textbf{Test Acc.} $\uparrow$ & \textbf{Test Loss} $\downarrow$ \\ \hline
\rowcolor{blue!15} \rs \textbf{Genie} & 
SGD & $0.8450 \pm 0.0378$ & $0.4338 \pm 0.1282$ & $0.8829 \pm 0.0020$ & $0.3303 \pm 0.0045$ & $0.9060 \pm 0.0030$ & $0.3127 \pm 0.0062$ \\ \hline
\multirow{5}{*}{\makecell{\rotatebox{90}{\textbf{Federated}}}} & 
\textbf{OSA-FL (Ours)} & \cellcolor{green!30} $0.8147 \pm 0.0119$ & \cellcolor{green!30} $0.4892 \pm 0.0214$ & \cellcolor{green!30} $0.8639 \pm 0.0060$ & \cellcolor{green!30} $0.3992 \pm 0.0188$ & \cellcolor{green!30} $0.8707 \pm 0.0063$ & \cellcolor{green!30} $0.3706 \pm 0.0167$ \\ \cline{2-8}
& M-FedAvg & $0.7871 \pm 0.0133$ & $0.5639 \pm 0.0248$ & $0.8557 \pm 0.0048$ & $0.4055 \pm 0.0142$ & $0.8645 \pm 0.0088$ & $0.3845 \pm 0.0208$ \\ \cline{2-8} 
& M-FedNova & $0.7863 \pm 0.0120$ & $0.5686 \pm 0.0211$ & $0.8562 \pm 0.0033$ & $0.4032 \pm 0.0149$ & $0.8637 \pm 0.0093$ & $0.3845 \pm 0.0227$ \\ \cline{2-8}
& M-FedProx & \cellcolor{red!30} $0.7880 \pm 0.0130$ & \cellcolor{red!30} $0.5636 \pm 0.0244$ & $0.8574 \pm 0.0040$ & $0.4069 \pm 0.0143$ & $0.8647 \pm 0.0095$ & $0.3840 \pm 0.0267$ \\ \cline{2-8}
& M-FedDisco & $0.7825 \pm 0.0092$ & $0.5828 \pm 0.0170$ & \cellcolor{red!30} $0.8589 \pm 0.0027$ & \cellcolor{red!30} $0.4012 \pm 0.0102$ & \cellcolor{red!30} $0.8667 \pm 0.0088$ & \cellcolor{red!30} $0.3794 \pm 0.0209$ \\ \hline %\cline{2-8}
\end{tabular}
\label{tab:accCompTable_fashion_s_2_alpha_0.3}
\end{table*}
\begin{table*}
\centering
\caption{Test Performance Comparisons on \textbf{MNIST} with $s=2,\aleph=0.3$ (\colorbox{green!30}{best}, \colorbox{red!30}{second best})}
\begin{tabular}{|C{0.8cm}|C{2cm}| C{1.84cm}|C{1.84cm}|C{1.84cm}|C{1.84cm}|C{1.84cm}| C{1.84cm}|} \hline
\multirow{2}{*}{\rs \textbf{Type}} &\multirow{2}{*}{\rs \textbf{Algorithms}} & \multicolumn{2}{c|}{\sqz, $\mathrm{t_{th}}=60$s } & \multicolumn{2}{c|}{\ac{cnn}, $\mathrm{t_{th}}=60$s} & \multicolumn{2}{c|}{\ac{resnet18}, $\mathrm{t_{th}}=180$s } \\ \cline{3-8}
& & \textbf{Test Acc.} $\uparrow$ & \textbf{Test Loss} $\downarrow$ & \textbf{Test Acc.} $\uparrow$ & \textbf{Test Loss} $\downarrow$ & \textbf{Test Acc.} $\uparrow$ & \textbf{Test Loss} $\downarrow$ \\ \hline
\rowcolor{blue!15} \rs \textbf{Genie} & 
SGD & $0.9864 \pm 0.0003$ & $0.0428 \pm 0.0022$ & $0.9787 \pm 0.0011$ & $0.0673 \pm 0.0052$ & $0.9918 \pm 0.0008$ & $0.0254 \pm 0.0008$ \\ \hline
\multirow{5}{*}{\makecell{\rotatebox{90}{\textbf{Federated}}}} & 
\textbf{OSA-FL (Ours)} & \cellcolor{green!30} $0.9708 \pm 0.0013$ & \cellcolor{green!30} $0.0954 \pm 0.0033$ & \cellcolor{green!30} $0.9744 \pm 0.0006$ & \cellcolor{green!30} $0.0846 \pm 0.0036$ & \cellcolor{green!30} $0.9822 \pm 0.0011$ & \cellcolor{green!30} $0.0559 \pm 0.0028$ \\ \cline{2-8}
& M-FedAvg  &  $0.9610 \pm 0.0012$ & $0.1272 \pm 0.0058$ &\cellcolor{red!30} $0.9728 \pm 0.0006$ & \cellcolor{red!30} $0.0912 \pm 0.0027$ & $0.9793 \pm 0.0020$ & $0.0656 \pm 0.0043$ \\ \cline{2-8} 
& M-FedNova & $0.9606 \pm 0.0024$ & $0.1276 \pm 0.0075$ & $0.9717 \pm 0.0011$ & $0.0915 \pm 0.0003$ & $0.9789 \pm 0.0018$ & $0.0659 \pm 0.0070$ \\ \cline{2-8}
& M-FedProx & $0.9605 \pm 0.0015$ & $0.1273 \pm 0.0056$ & $0.9715 \pm 0.0013$ & $0.0915 \pm 0.0023$ & $0.9790 \pm 0.0021$ & $0.0657 \pm 0.0059$ \\ \cline{2-8}
& M-FedDisco & \cellcolor{red!30} $0.9614 \pm 0.0010$ & \cellcolor{red!30} $0.1236 \pm 0.0039$ & $0.9681 \pm 0.0024$ & $0.1069 \pm 0.0061$ & \cellcolor{red!30} $0.9802 \pm 0.0028$ & \cellcolor{red!30} $0.0639 \pm 0.0081$ \\ \hline %\cline{2-8}
\end{tabular}
\label{tab:accCompTable_Mnist_s_2_alpha_0.3}
\end{table*}
\begin{table*}
\centering
\caption{Test Performance Comparisons on \textbf{CIFAR10} with $s=4,\aleph=0.3$ (\colorbox{green!30}{best}, \colorbox{red!30}{second best})}
\begin{tabular}{|C{0.8cm}|C{2cm}| C{1.84cm}|C{1.84cm}|C{1.84cm}|C{1.84cm}|C{1.84cm}| C{1.84cm}|} \hline
\multirow{2}{*}{\rs \textbf{Type}} &\multirow{2}{*}{\rs \textbf{Algorithms}} & \multicolumn{2}{c|}{\sqz, $\mathrm{t_{th}}=60$s } & \multicolumn{2}{c|}{\ac{cnn}, $\mathrm{t_{th}}=60$s} & \multicolumn{2}{c|}{\ac{resnet18}, $\mathrm{t_{th}}=180$s } \\ \cline{3-8}
& & \textbf{Test Acc.} $\uparrow$ & \textbf{Test Loss} $\downarrow$ & \textbf{Test Acc.} $\uparrow$ & \textbf{Test Loss} $\downarrow$ & \textbf{Test Acc.} $\uparrow$ & \textbf{Test Loss} $\downarrow$ \\ \hline
\rowcolor{blue!15} \rs \textbf{Genie} & 
SGD & $0.5658 \pm 0.0045$ & $1.2027 \pm 0.0184$ & $0.5934 \pm 0.0029$ & $1.2185 \pm 0.0157$ & $0.6393 \pm 0.0276$ & $1.3572 \pm 0.1063$  \\ \hline
\multirow{5}{*}{\makecell{\rotatebox{90}{\textbf{Federated}}}} & 
\textbf{OSA-FL (Ours)} & \cellcolor{green!30} $0.4135 \pm 0.0078$ & \cellcolor{green!30} $1.5626 \pm 0.0103$ &\cellcolor{red!30} $0.5309 \pm 0.0098$ & $1.3960 \pm 0.0333$ & \cellcolor{green!30} $0.5029 \pm 0.0065$ & $1.4489 \pm 0.0336$ \\ \cline{2-8}
& M-FedAvg  & $0.3624 \pm 0.0080$ & $1.7056 \pm 0.0159$ & $0.5297 \pm 0.0081$ & \cellcolor{red!30} $1.3753 \pm 0.0339$ & \cellcolor{red!30} $0.4919 \pm 0.0082$ & \cellcolor{green!30} $1.4255 \pm 0.0134$ \\ \cline{2-8} 
& M-FedNova & $0.3678 \pm 0.0127$ & $1.6987 \pm 0.0212$ & $0.5231 \pm 0.0061$ & $1.3932 \pm 0.0247$ & $0.4742 \pm 0.0057$ & $1.4479 \pm 0.0119$ \\ \cline{2-8}
& M-FedProx & $0.3679 \pm 0.0125$ & $1.6988 \pm 0.0228$ & $0.5246 \pm 0.0093$ & $1.4096 \pm 0.0392$ & $0.4870 \pm 0.0107$ & \cellcolor{red!30} $1.4284 \pm 0.0182$ \\ \cline{2-8}
& M-FedDisco & \cellcolor{red!30} $0.3752 \pm 0.0082$ & \cellcolor{red!30} $1.6887 \pm 0.0191$ & \cellcolor{green!30} $0.5328 \pm 0.0051$ & \cellcolor{green!30} $1.3742 \pm 0.0229$ & $0.4836 \pm 0.0063$ & $1.4319 \pm 0.0086$ \\ \hline %\cline{2-8}
\end{tabular}
\label{tab:accCompTable_cifar_s_4_alpha_0.3}
\end{table*}
\begin{table*}
\centering
\caption{Test Performance Comparisons on \textbf{CIFAR10} with $s=8,\aleph=0.3$ (\colorbox{green!30}{best}, \colorbox{red!30}{second best})}
\begin{tabular}{|C{0.8cm}|C{2cm}| C{1.84cm}|C{1.84cm}|C{1.84cm}|C{1.84cm}|C{1.84cm}| C{1.84cm}|} \hline
\multirow{2}{*}{\rs \textbf{Type}} &\multirow{2}{*}{\rs \textbf{Algorithms}} & \multicolumn{2}{c|}{\sqz, $\mathrm{t_{th}}=60$s } & \multicolumn{2}{c|}{\ac{cnn}, $\mathrm{t_{th}}=60$s} & \multicolumn{2}{c|}{\ac{resnet18}, $\mathrm{t_{th}}=180$s } \\ \cline{3-8}
& & \textbf{Test Acc.} $\uparrow$ & \textbf{Test Loss} $\downarrow$ & \textbf{Test Acc.} $\uparrow$ & \textbf{Test Loss} $\downarrow$ & \textbf{Test Acc.} $\uparrow$ & \textbf{Test Loss} $\downarrow$ \\ \hline
\rowcolor{blue!15} \rs \textbf{Genie} & 
SGD & $0.5658 \pm 0.0045$ & $1.2027 \pm 0.0184$ & $0.5934 \pm 0.0029$ & $1.2185 \pm 0.0157$ & $0.6393 \pm 0.0276$ & $1.3572 \pm 0.1063$ \\ \hline
\multirow{5}{*}{\makecell{\rotatebox{90}{\textbf{Federated}}}} & 
\textbf{OSA-FL (Ours)} & \cellcolor{green!30} $0.4135 \pm 0.0137$ & \cellcolor{green!30} $1.5635 \pm 0.0271$ & \cellcolor{red!30} $0.5380 \pm 0.0037$ & $1.3612 \pm 0.0195$ & \cellcolor{green!30} $0.5041 \pm 0.0030$ & $1.4427 \pm 0.0270$ \\ \cline{2-8}
& M-FedAvg & $0.3612 \pm 0.0110$ & $1.7105 \pm 0.0217$ & $0.5359 \pm 0.0024$ & \cellcolor{red!30} $1.3427 \pm 0.0146$ & $0.4853 \pm 0.0085$ & \cellcolor{green!30} $1.4337 \pm 0.0140$ \\ \cline{2-8} 
& M-FedNova & $0.3672 \pm 0.0149$ & $1.7029 \pm 0.0193$ & $0.5354 \pm 0.0045$ & $1.3612 \pm 0.0259$ & $0.4630 \pm 0.0088$ & $1.4833 \pm 0.0189$ \\ \cline{2-8}
& M-FedProx & $0.3651 \pm 0.0141$ & $1.7010 \pm 0.0214$ & $0.5303 \pm 0.0039$ & $1.3692 \pm 0.0165$ & $0.4840 \pm 0.0073$ & $1.4356 \pm 0.0152$ \\ \cline{2-8}
& M-FedDisco & \cellcolor{red!30} $0.3773 \pm 0.0146$ & \cellcolor{red!30} $1.6872 \pm 0.0234$ & \cellcolor{green!30} $0.5406 \pm 0.0011$ & \cellcolor{green!30} $1.3432 \pm 0.0137$ & \cellcolor{red!30} $0.4873 \pm 0.0063$ & \cellcolor{red!30} $1.4349 \pm 0.0115$ \\ \hline %\cline{2-8}
\end{tabular}
\label{tab:accCompTable_cifar_s_8_alpha_0.3}
\end{table*}
\begin{table*}
\centering
\caption{Test Performance Comparisons on \textbf{CIFAR10} with $s=16,\aleph=0.3$ (\colorbox{green!30}{best}, \colorbox{red!30}{second best})}
\begin{tabular}{|C{0.8cm}|C{2cm}| C{1.84cm}|C{1.84cm}|C{1.84cm}|C{1.84cm}|C{1.84cm}| C{1.84cm}|} \hline
\multirow{2}{*}{\rs \textbf{Type}} &\multirow{2}{*}{\rs \textbf{Algorithms}} & \multicolumn{2}{c|}{\sqz, $\mathrm{t_{th}}=60$s } & \multicolumn{2}{c|}{\ac{cnn}, $\mathrm{t_{th}}=60$s} & \multicolumn{2}{c|}{\ac{resnet18}, $\mathrm{t_{th}}=180$s } \\ \cline{3-8}
& & \textbf{Test Acc.} $\uparrow$ & \textbf{Test Loss} $\downarrow$ & \textbf{Test Acc.} $\uparrow$ & \textbf{Test Loss} $\downarrow$ & \textbf{Test Acc.} $\uparrow$ & \textbf{Test Loss} $\downarrow$ \\ \hline
\rowcolor{blue!15} \rs \textbf{Genie} & 
SGD & $0.5658 \pm 0.0045$ & $1.2027 \pm 0.0184$ & $0.5934 \pm 0.0029$ & $1.2185 \pm 0.0157$ & $0.6393 \pm 0.0276$ & $1.3572 \pm 0.1063$ \\ \hline
\multirow{5}{*}{\makecell{\rotatebox{90}{\textbf{Federated}}}} & 
\textbf{OSA-FL (Ours)} & \cellcolor{green!30} $0.4142 \pm 0.0038$ & \cellcolor{green!30} $1.5638 \pm 0.0158$ & \cellcolor{green!30} $0.5459 \pm 0.0050$ & $1.3378 \pm 0.0201$ & \cellcolor{green!30} $0.4844 \pm 0.0045$ & $1.4673 \pm 0.0191$ \\ \cline{2-8}
& M-FedAvg & $0.3651 \pm 0.0027$ & $1.7055 \pm 0.0232$ & $0.5400 \pm 0.0039$ & \cellcolor{red!30} $1.3372 \pm 0.0181$ & $0.4771 \pm 0.0097$ & $1.4582 \pm 0.0190$ \\ \cline{2-8} 
& M-FedNova & $0.3688 \pm 0.0045$ & $1.7006 \pm 0.0210$ & \cellcolor{red!30} $0.5428 \pm 0.0037$ & $1.3426 \pm 0.0187$ & $0.4474 \pm 0.0177$ & $1.5221 \pm 0.0366$ \\ \cline{2-8}
& M-FedProx & $0.3723 \pm 0.0041$ & $1.6994 \pm 0.0217$ & $0.5406 \pm 0.0047$ & $1.3493 \pm 0.0169$ & $0.4772 \pm 0.0095$ & \cellcolor{red!30} $1.4565 \pm 0.0233$ \\ \cline{2-8}
& M-FedDisco & \cellcolor{red!30} $0.3774 \pm 0.0087$ & \cellcolor{red!30} $1.6941 \pm 0.0228$ & $0.5409 \pm 0.0045$ & \cellcolor{green!30} $1.3356 \pm 0.0192$ & \cellcolor{red!30} $0.4825 \pm 0.0043$ & \cellcolor{green!30} $1.4436 \pm 0.0169$ \\ \hline
\end{tabular}
\label{tab:accCompTable_cifar_s_16_alpha_0.3}
\end{table*}
\begin{table*}
\centering
\caption{Test Performance Comparisons on \textbf{CIFAR10} with $s=2,\aleph=0.1$ (\colorbox{green!30}{best}, \colorbox{red!30}{second best})}
\begin{tabular}{|C{0.8cm}|C{2cm}| C{1.84cm}|C{1.84cm}|C{1.84cm}|C{1.84cm}|C{1.84cm}| C{1.84cm}|} \hline
\multirow{2}{*}{\rs \textbf{Type}} &\multirow{2}{*}{\rs \textbf{Algorithms}} & \multicolumn{2}{c|}{\sqz, $\mathrm{t_{th}}=60$s } & \multicolumn{2}{c|}{\ac{cnn}, $\mathrm{t_{th}}=60$s} & \multicolumn{2}{c|}{\ac{resnet18}, $\mathrm{t_{th}}=180$s } \\ \cline{3-8} 
& & \textbf{Test Acc.} $\uparrow$ & \textbf{Test Loss} $\downarrow$ & \textbf{Test Acc.} $\uparrow$ & \textbf{Test Loss} $\downarrow$ & \textbf{Test Acc.} $\uparrow$ & \textbf{Test Loss} $\downarrow$ \\ \hline
\rowcolor{blue!15} \rs \textbf{Genie} & 
SGD & $0.5620 \pm 0.0159$ & $1.2237 \pm 0.0488$ & $0.5856 \pm 0.0058$ & $1.2875 \pm 0.0081$ & $0.6297 \pm 0.0181$ & $1.4007 \pm 0.0645$ \\ \hline
\multirow{5}{*}{\makecell{\rotatebox{90}{\textbf{Federated}}}} & 
\textbf{OSA-FL (Ours)} & \cellcolor{green!30} $0.3603 \pm 0.0173$ & \cellcolor{green!30} $1.7102 \pm 0.0312$ & \cellcolor{red!30} $0.4927 \pm 0.0134$ & \cellcolor{red!30} $1.4660 \pm 0.0670$ & \cellcolor{green!30} $0.4737 \pm 0.0058$ & \cellcolor{green!30} $1.4824 \pm 0.0200$ \\ \cline{2-8}
& M-FedAvg & $0.3200 \pm 0.0100$ & $1.8257 \pm 0.0133$ & \cellcolor{green!30} $0.4936 \pm 0.0122$ & \cellcolor{green!30} $1.4500 \pm 0.0598$ & $0.4347 \pm 0.0083$ & $1.5515 \pm 0.0199$ \\ \cline{2-8} 
& M-FedNova & $0.3144 \pm 0.0140$ & $1.8296 \pm 0.0117$ & $0.4924 \pm 0.0148$ & $1.4724 \pm 0.0693$ & $0.4281 \pm 0.0058$ & $1.5764 \pm 0.0107$ \\ \cline{2-8}
& M-FedProx & $0.3174 \pm 0.0123$ & $1.8140 \pm 0.0034$ & $0.4875 \pm 0.0147$ & $1.4860 \pm 0.0763$ & $0.4337 \pm 0.0081$ & $1.5533 \pm 0.0183$ \\ \cline{2-8}
& M-FedDisco & \cellcolor{red!30} $0.3268 \pm 0.0107$ & \cellcolor{red!30} $1.8028 \pm 0.0332$ & $0.4868 \pm 0.0112$ & $1.4913 \pm 0.0707$ & \cellcolor{red!30} $0.4388 \pm 0.0066$ & \cellcolor{red!30} $1.5470 \pm 0.0290$ \\ \hline %\cline{2-8}
\end{tabular}
\label{tab:accCompTable_cifar_s_2_alpha_0.1}
\end{table*}
\begin{table*}
\centering
\caption{Test Performance Comparisons on \textbf{FashionMNIST} with $s=2,\aleph=0.1$ (\colorbox{green!30}{best}, \colorbox{red!30}{second best})}
\begin{tabular}{|C{0.8cm}|C{2cm}| C{1.84cm}|C{1.84cm}|C{1.84cm}|C{1.84cm}|C{1.84cm}| C{1.84cm}|} \hline
\multirow{2}{*}{\rs \textbf{Type}} &\multirow{2}{*}{\rs \textbf{Algorithms}} & \multicolumn{2}{c|}{\sqz, $\mathrm{t_{th}}=60$s } & \multicolumn{2}{c|}{\ac{cnn}, $\mathrm{t_{th}}=60$s} & \multicolumn{2}{c|}{\ac{resnet18}, $\mathrm{t_{th}}=180$s } \\ \cline{3-8}
& & \textbf{Test Acc.} $\uparrow$ & \textbf{Test Loss} $\downarrow$ & \textbf{Test Acc.} $\uparrow$ & \textbf{Test Loss} $\downarrow$ & \textbf{Test Acc.} $\uparrow$ & \textbf{Test Loss} $\downarrow$ \\ \hline
\rowcolor{blue!15} \rs \textbf{Genie} & 
SGD & $0.8647 \pm 0.0109$ & $0.3671 \pm 0.0242$ & $0.8779 \pm 0.0035$ & $0.3451 \pm 0.0100$ & $0.9046 \pm 0.0007$ & $0.3298 \pm 0.0057$ \\ \hline
\multirow{5}{*}{\makecell{\rotatebox{90}{\textbf{Federated}}}} & 
\textbf{OSA-FL (Ours)} & \cellcolor{green!30} $0.7718 \pm 0.0187$ & \cellcolor{green!30} $0.6145 \pm 0.0510$ & \cellcolor{green!30} $0.8438 \pm 0.0108$ & \cellcolor{green!30} $0.4358 \pm 0.0258$ & \cellcolor{green!30} $0.8440 \pm 0.0089$ & \cellcolor{green!30} $0.4322 \pm 0.0212$ \\ \cline{2-8}
& M-FedAvg & $0.7085 \pm 0.0194$ & $0.7636 \pm 0.0372$ & $0.8315 \pm 0.0102$ & $0.4557 \pm 0.0235$ & $0.8334 \pm 0.0102$ & $0.4626 \pm 0.0204$ \\ \cline{2-8} 
& M-FedNova & $0.7195 \pm 0.0135$ & $0.7570 \pm 0.0217$ & \cellcolor{red!30} $0.8372 \pm 0.0082$ & \cellcolor{red!30} $0.4439 \pm 0.0169$ & $0.8304 \pm 0.0098$ & $0.4702 \pm 0.0182$ \\ \cline{2-8}
& M-FedProx & $0.7242 \pm 0.0202$ & $0.7437 \pm 0.0512$ & $0.8361 \pm 0.0091$ & $0.4477 \pm 0.0188$ & $0.8336 \pm 0.0092$ & $0.4605 \pm 0.0178$ \\ \cline{2-8}
& M-FedDisco & \cellcolor{red!30} $0.7350 \pm 0.0116$ & \cellcolor{red!30} $0.7210 \pm 0.0472$
& $0.8358 \pm 0.0091$ & $0.4488 \pm 0.0180$ & \cellcolor{red!30} $0.8364 \pm 0.0098$ & \cellcolor{red!30} $0.4565 \pm 0.0240$ \\ \hline %\cline{2-8}
\end{tabular}
\label{tab:accCompTable_fashion_s_2_alpha_0.1}
\end{table*}
\begin{table*}
\centering
\caption{Test Performance Comparisons on \textbf{MNIST} with $s=2,\aleph=0.1$ (\colorbox{green!30}{best}, \colorbox{red!30}{second best})}
\begin{tabular}{|C{0.8cm}|C{2cm}| C{1.84cm}|C{1.84cm}|C{1.84cm}|C{1.84cm}|C{1.84cm}| C{1.84cm}|} \hline
\multirow{2}{*}{\rs \textbf{Type}} &\multirow{2}{*}{\rs \textbf{Algorithms}} & \multicolumn{2}{c|}{\sqz, $\mathrm{t_{th}}=60$s } & \multicolumn{2}{c|}{\ac{cnn}, $\mathrm{t_{th}}=60$s} & \multicolumn{2}{c|}{\ac{resnet18}, $\mathrm{t_{th}}=180$s } \\ \cline{3-8}
& & \textbf{Test Acc.} $\uparrow$ & \textbf{Test Loss} $\downarrow$ & \textbf{Test Acc.} $\uparrow$ & \textbf{Test Loss} $\downarrow$ & \textbf{Test Acc.} $\uparrow$ & \textbf{Test Loss} $\downarrow$ \\ \hline
\rowcolor{blue!15} \rs \textbf{Genie} & 
SGD & $0.9467 \pm 0.0548$ & $0.1362 \pm 0.1295$ & $0.9776 \pm 0.0007$ & $0.0703 \pm 0.0012$ & $0.9897 \pm 0.0003$ & $0.0302 \pm 0.0005$ \\ \hline
\multirow{5}{*}{\makecell{\rotatebox{90}{\textbf{Federated}}}} & 
\textbf{OSA-FL (Ours)} & \cellcolor{green!30} $0.9507 \pm 0.0062$ & \cellcolor{green!30} $0.1615 \pm 0.0222$ & \cellcolor{green!30} $0.9672 \pm 0.0010$ & \cellcolor{green!30} $0.1051 \pm 0.0048$ & \cellcolor{green!30} $0.9733 \pm 0.0020$ & \cellcolor{green!30} $0.0872 \pm 0.0093$ \\ \cline{2-8}
& M-FedAvg & $0.9322 \pm 0.0035$ & $0.2515 \pm 0.0106$ & $0.9626 \pm 0.0014$ & $0.1194 \pm 0.0072$ & $0.9675 \pm 0.0051$ & $0.1082 \pm 0.0185$ \\ \cline{2-8} 
& M-FedNova & $0.9321 \pm 0.0095$ & $0.2814 \pm 0.0604$ & \cellcolor{red!30} $0.9632 \pm 0.0021$ & $0.1202 \pm 0.0043$ & $0.9621 \pm 0.0096$ & $0.1239 \pm 0.0289$ \\ \cline{2-8}
& M-FedProx & $0.9300 \pm 0.0047$ & $0.2508 \pm 0.0118$ & $0.9628 \pm 0.0013$ & \cellcolor{red!30} $0.1175 \pm 0.0072$ & $0.9655 \pm 0.0074$ & $0.1113 \pm 0.0222$ \\ \cline{2-8}
& M-FedDisco & \cellcolor{red!30} $0.9406 \pm 0.0018$ & \cellcolor{red!30} $0.2181 \pm 0.0161$ & $0.9569 \pm 0.0051$ & $0.1405 \pm 0.0116$ & \cellcolor{red!30} $0.9686 \pm 0.0052$ & \cellcolor{red!30} $0.1014 \pm 0.0161$ \\ \hline %\cline{2-8}
\end{tabular}
\label{tab:accCompTable_mnist_s_2_alpha_0.1}
\end{table*}

\subsubsection{System Configurations}
Our system configurations for the resource-constrained clients and wireless network are given below. 
A total of $U=25$ clients are distributed uniformly randomly into the coverage area of a single \ac{bs}, while the \ac{cs} is embedded into the \ac{bs}. 
We assume the \ac{bs} operates in the $2.4$ GHz band and can serve each user with a bandwidth of $\omega = 540$ kHz.
The path loss model is adopted from \cite{pervej2025resource}. 
We stress that small-scale fading was not considered, assuming that modern networks have sufficient diversity to mitigate it.
Besides, $\epsilon=0.5$, $\rho = 2 \times 10^{-28}$, $\kappa=5$, $n=8$, $\bar{n}=16$, $c_u \in [25, 40]$, $\mathrm{e}_{u,\mathrm{bd}} \in [1.2, 2.5]$, $f_{u,\mathrm{max}} \in [1,1.8]$ GHz, $ p_{u,\mathrm{max}} \in [20,30] $ dBm, respectively.
The data size per sample, i.e., $s_u$, is calculated as ${\tt no\_ch} \times 128 \times 128 \times 32$, where ${\tt no\_ch}$ is the number of channels, $128$ is the height/width, and $32$ is for single-bit precision. 
Furthermore, we use $s=\{2,4,8,16\}$ quantization levels for the stochastic quantizer. 
The deadline $t_\mathrm{th}$ between two consecutive \ac{fl} rounds varies for different models, as they have different payload sizes.
For \sqz~ and \ac{cnn}, we considered $t_\mathrm{th} \in \{45, 60, 75, 90, 105\}$ seconds, while for the \ac{resnet18} model $t_\mathrm{th} \in \{102, 180, 240, 300, 360\}$ seconds.
We stress that we only vary the deadline to show how resource constraints affect performances of different \ac{fl} algorithms and whether increasing the deadline can help combat these constraints.

We use the symmetric Dirichlet distribution with concentration parameter $\aleph \in \{\pmb{0.1}$, $\pmb{0.3}$, $\pmb{0.9}\}$ to distribute the datasets across $U$ clients \cite{pervej2023resource}. 
For the continual data arrival process, we draw the client's request probabilities uniformly at random from $p_u \in [0.3, 0.8]$.
At every \ac{fl} round, $E_u = \left\lceil 40 \times p_u \right\rceil$ samples can arrive at the $u^\mathrm{th}$ client.
Besides, we assume the maximum number of data samples a client's data storage can hold is $\mathrm{D}_u = \left \lceil 400 \times p_u\right\rceil$.
As such, when a client receives $E_u$ new samples in its buffer during the training round $t$, these samples are stored in its buffer, and $E_u$ old samples from {\tt Top-K} classes are removed after the client completes $\kappa_u^t$ local rounds (during FL round $t$) to make space for the new samples in its buffer\footnote{However, our proposed algorithm can easily be extended to any other data deletion policies.}.
More specifically, we remove $E_u$ samples from the {\tt Top-4, Top-3}, and {\tt Top-2} classes, respectively for $\aleph=0.1$, $\aleph=0.3$, and $\aleph=0.9$, proportional to their label distributions in $\mathcal{D}_u^t$.

\subsubsection{Model Training Hyper-parameters}
Analogous to our theoretical analysis, we consider the \ac{sgd} optimizer for all algorithms. 
For the proposed \ac{osafl} algorithm, we set $\mathfrak{v}=10$, $\chi=1$, and $\varsigma=0.75$ to calculate the online score based on (\ref{eq:sub_optim_score}).
Besides, we set $\mathfrak{a}$ to $0.3$, $0.3$, and $0.5$ for the \sqz, \ac{cnn}, and \ac{resnet18} models, respectively.
The $\eta_\mathrm{sch}^t$ is decayed linearly from $1$ at round $t=0$ to $0.65$ at round $t=T-1$.

\subsection{Performance Analysis for Different Parameters}

\subsubsection{Impact of Deadline and Quantization Level on Local Round}
Both the deadline between two consecutive \ac{fl} rounds, $\mathrm{t_{th}}$, and the quantization level $s$ affect client participation.
Based on the optimization problem in (\ref{localIterOptim_Orig}) and the model payload size defined in (\ref{eq:model_payload}), it is quite evident that when the deadline $\mathrm{t_{th}}$ and other parameters are fixed, we need more bits to transmit the quantized accumulated gradient as the quantization level increases. 
As such, increasing $s$ would mean there is less energy and time left, if any, for local model training in order to satisfy the energy budget and deadline threshold constraints in (\ref{eq:energy_cons}) and (\ref{eq:deadline_cons}), respectively.
Therefore, given that the optimization problem is feasible with fixed $\mathrm{t_{th}}$ and other system parameters, clients are expected to perform fewer local rounds. 
Similarly, when $s$ is fixed, increasing $\mathrm{t_{th}}$ gives clients more time for local training, yielding higher values of $\kappa_u^t$.
Nonetheless, under resource constraints and poor channel conditions, some clients may not participate in model training, as the optimization problem (\ref{localIterOptim_Orig}) may be infeasible.

Our simulation results in Fig. \ref{fig:cdfLocalItrVsQuant} also show similar trends. 
More specifically, we observe that for a fixed deadline $\mathrm{t_{th}}$, the \ac{ccdf} decreases if $s$ increases.
For example, when $\mathrm{t_{th}}=60$ seconds, the probability of having local iterations larger than $4$ is around $67.09\%$ and $55.25\%$ with \sqz, and around $60.48\%$ and $48.16\%$ with \ac{cnn} when $s=2$ and $s=16$, respectively.
For \ac{resnet18}, these values are around $87.71\%$ and $56.53\%$, respectively, when $\mathrm{t_{th}}=180$ seconds.
It is also evident that for shallower \sqz~ and moderate \ac{cnn} models, increasing the deadlines increases the probability of having more local training rounds. In contrast, the probabilities do not vary significantly for the bulky \ac{resnet18} model.
This means that some clients cannot participate in model training due to extreme resource constraints when using bulky models.

\subsubsection{Impact of Model Choices on Test Accuracy and Wall Clock}
Since the wireless payload size directly affects whether clients can participate in the model training, we expect these different models to converge at different speeds.
While shallower models may enable more local rounds between consecutive FL rounds, they may not capture changes in the data distribution efficiently. 
Moreover, the use of bulky models, which often result in many stragglers, may not deliver the expected performance within a limited time. 
Besides, having a larger deadline means the wall clock to finish $T$ FL rounds will increase.
Fig. \ref{fig:wallClock_Vs_Acc} shows how the test accuracies of the three models vary across the three datasets considered. 
Across all datasets, we observe that the shallower \sqz~ and \ac{cnn} models achieve better performance faster than the bulky \ac{resnet18} model.
However, the shallowest \sqz~ model does not achieve comparable test accuracies after completing $T=50$ \ac{fl} rounds on CIFAR10 and FashionMNIST datasets. 
This is due to the fact that this client's model not only has fewer training parameters but also undergoes quantization of accumulated gradients due to resource constraints, which introduces additional noise, as shown in our convergence bound in (\ref{convRate_Eqn}).

\subsubsection{Impact of Quantization on Test Accuracy}
In this case, since the actual accumulated normalized gradients are not shared and our theoretical convergence bound in (\ref{convRate_Eqn}) shows that additional errors due to quantization, we expect slightly degraded performance across different quantization levels.
Now, while it is intuitive to expect that the error due to quantization generally increases as the number of quantization levels decreases (more extreme quantization), this intuition is indeed tricky in a resource-constrained wireless environment, given obvious system constraints and time-varying wireless links.
More specifically, decreasing the quantization levels results in a smaller wireless payload for clients to offload, as shown in (\ref{eq:model_payload}).
Thus, clients can allocate more resources to local computation, as shown in Fig. \ref{fig:cdfLocalItrVsQuant}.
Therefore, under resource constraints and time-varying wireless links, different quantization levels may not have a significant impact on shallower models, such as the \sqz.
This can also be observed in our simulation results in Fig. \ref{fig:quantLevel_vs_acc_sqz}.

However, if the model parameters are reasonable and changes in the quantization levels do not significantly alter the number of local rounds $\kappa_u^t$, then having more quantization levels should reduce quantization errors, thereby improving test accuracy.   
We also validate this intuition from the results in Figs. \ref{fig:cdfLocalItrVsQuant_cnn_s2}, \ref{fig:cdfLocalItrVsQuant_cnn_s16}, and \ref{fig:quantLevel_vs_acc_cnn}.
More specifically, we notice that with $\mathrm{t_{th}}$, the \ac{ccdf} does not change drastically when $s$ increases from $2$ to $16$. 
As such, increasing $s$ shall improve test accuracy, as shown in Fig. \ref{fig:quantLevel_vs_acc_cnn}.

Lastly, at the other extreme, when the model is bulky, such as the \ac{resnet18} model, increasing the quantization level would drastically increase clients' payload sizes, thereby reducing the number of local training rounds and increasing the number of stragglers; this is shown in Figs. \ref{fig:cdfLocalItrVsQuant_resnet_s2} and \ref{fig:cdfLocalItrVsQuant_resnet_s16}. 
As such, increasing $s$ while keeping the other parameters fixed will actually reduce test accuracy. 
This behavior is observed in Fig. \ref{fig:quantLevel_vs_acc_resnet18}.

\subsection{Performance Comparisons with Baselines}

\noindent
Since, to the best of our knowledge, there are no existing baselines that are exactly similar to our system model and assumptions, we use four existing and popular \ac{fl} algorithms, namely, $(1)$ \ac{fedavg} \cite{mcmahan17Communication}, $(2)$ FedProx \cite{li2020federated}, $(3)$ \ac{fednova} \cite{wang2020Tackling}, and $(4)$ \ac{feddisco} \cite{ye23fedDisco}, and modify those algorithms to fit our system model and assumption.
We call these baselines \ac{mfedavg}, \ac{mfedprox}, \ac{mfednova}, and \ac{mfeddisco}, respectively.
Besides, we stress that all of these modified algorithms are accompanied by the same resource optimization, as presented in Section \ref{resourceOptim}, and system configurations as in our proposed \ac{osafl} algorithm, for \emph{apple-to-apple} comparisons.
Moreover, we also implement a \emph{Genie}-aided centralized \ac{sgd} baseline, assuming that the training samples available to clients in each global round are centrally accessible to the Genie.
Finally, we used ablation studies to find the best hyperparameters for all baselines across all considered datasets and \ac{ml} models.

\subsubsection{Performance Comparisons for Fixed Quantization Level, Deadline, and Data Heterogeneity}
In practical resource-constrained environments, purely aggregating the clients' models based on dataset sizes, as in \ac{fedavg} and {\tt FedProx}, or aggregating based on normalized weights that only consider local training steps and dataset sizes, as in \ac{fednova}, or aggregating models based on discrepancy-aware weighting, as in \ac{feddisco}, may not be sufficient. 
With these naive extended baselines and our proposed joint resource optimization technique described in Section \ref{resourceOptim}, we expect poor convergence performances since these algorithms were not designed to handle our dynamic environment and data distribution shift.
Recalling the key findings from our theoretical analysis in Theorem \ref{convgRate}, we note that the continual arrival of local training data samples leads to non-IID distributions and time-varying datasets within a client, having different sample arrival probabilities with different label preferences (coming from the Dirichlet distribution for the labels), quantization errors, heterogeneous local rounds, and uncertain client participation probabilities need to be jointly considered to design the aggregation policy at the \ac{cs}. 
As such, our online-score-aided aggregation policy, although suboptimal, shall help the global model to converge faster.
Finally, since in \ac{fl} we do not have access to the entire dataset from all clients, it is expected that the convergence rate of any \ac{fl} algorithm will be slower than that of centralized \ac{sgd}.

Our simulation results in Fig. \ref{fig:acc_comp_all_data_mod_s_2_alpha_0.3} show the effectiveness of optimizing clients' scores. 
We observe that, with a fixed deadline, a fixed quantization level, and a dataset heterogeneity parameter, the proposed \ac{osafl} algorithm achieves better performance faster than the other modified baselines.
This rate depends on the considered model, dataset, and number of local rounds.
More specifically, with a shallow model like the \sqz, the convergence speed depends on the difficulty of the classification task.
For example, with \sqz, at \ac{fl} round $t=19$, our proposed \ac{osafl} algorithm achieves test accuracies that are $4.52\%$ $4.48\%$ $4.51\%$, and $3.06\%$ higher than those of \ac{mfedavg}, \ac{mfedprox}, \ac{mfednova}, and \ac{mfeddisco} algorithms on the CIFAR10 dataset. 
These differences are $10.86\%$, $10.57\%$, $10.93\%$, and $7.66\%$ on FashionMNIST, and are $10.8\%$, $11.93\%$, $10.91\%$, and $8.65\%$ on MNIST datasets, respectively.
With a relatively moderate model like the \ac{cnn}, these gaps can be smaller.
For example, these gaps are $1.22\%$, $0.35\%$, $0.35\%$, and  $0.66\%$ on CIFAR10; $1.39\%$, $0.95\%$, $0.88\%$, and $1.48\%$ on FashionMNIST; and $0.21\%$, $0.36\%$, $0.22\%$ and $0.46\%$ on MNIST, respectively.
Finally, allowing additional time to run a bulky \ac{resnet18} model may help \ac{fl} algorithms mitigate quantization error and thereby improve performance.
Nonetheless, the performance gaps depend on the complexity of the classification tasks.
Moreover, our simulation results in Fig. \ref{fig:acc_comp_all_data_mod_s_2_alpha_0.3} clearly demonstrate that the proposed \ac{osafl} algorithm converges faster across all considered datasets and models.

Given the above facts, we now report the best test accuracies and corresponding test losses for the baselines across various quantization and dataset heterogeneity parameters.

\subsubsection{Performance Comparisons with Different Quantization Levels}
Recall that increasing the quantization level increases wireless payload sizes, which also impacts clients' participation and their local training rounds.
As such, with a shallower model, we may not observe drastic performance changes when $s$ changes, whereas models like the \ac{cnn} can yield improved performance with larger $s$. 
Lastly, as we discussed above, with the bulky \ac{resnet18} model, there would be many stragglers as $s$ increases, thereby reducing performance in a resource-constrained environment. 
Nonetheless, in all configurations, we expect our proposed solution to provide better performance. 
The simulation results in Tables \ref{tab:accCompTable_cifar_s_2_alpha_0.3}-{\ref{tab:accCompTable_cifar_s_16_alpha_0.3}} validates these claims. 
More specifically, with \sqz, all \ac{fl} algorithms exhibit marginal performance changes as $s$ increases; with \ac{cnn}, performance improves as $s$ increases; and lastly, performance drops as $s$ increases with \ac{resnet18}.

\subsubsection{Performance Comparisons with Different Data Heterogeneity}
Label distributions across clients largely depend on the Dirichlet concentration parameter $\aleph$.
A smaller $\aleph$ means more skewed label distributions, while a larger $\aleph$ means more balanced label distributions. 
This also depends on the time-varying dataset $\mathcal{D}_u^t$ in our setting, given the continual arrival of data.
Nonetheless, with a larger $\aleph$, we have a more balanced data distribution, so the differences among clients' local gradients will not be very severe.
As such, the \ac{fl} algorithms shall have improved performance as $\aleph$ increases. 
This is also clearly observed across all models and datasets in Tables \ref{tab:accCompTable_cifar_s_2_alpha_0.1}-\ref{tab:accCompTable_Mnist_s_2_alpha_0.9}.
Moreover, except for the \ac{cnn} on CIFAR10 with $\aleph=0.9$ in Table \ref{tab:accCompTable_cifar_s_2_alpha_0.9}, the proposed \ac{osafl} algorithm achieves higher test accuracies across all considered $\aleph$, datasets, and models.

\begin{table*}
\centering
\caption{Test Performance Comparisons on \textbf{CIFAR10} with $s=2,\aleph=0.9$ (\colorbox{green!30}{best}, \colorbox{red!30}{second best})}
\begin{tabular}{|C{0.8cm}|C{2cm}| C{1.84cm}|C{1.84cm}|C{1.84cm}|C{1.84cm}|C{1.84cm}| C{1.84cm}|} \hline
\multirow{2}{*}{\rs \textbf{Type}} &\multirow{2}{*}{\rs \textbf{Algorithms}} & \multicolumn{2}{c|}{\sqz, $\mathrm{t_{th}}=60$s } & \multicolumn{2}{c|}{\ac{cnn}, $\mathrm{t_{th}}=60$s} & \multicolumn{2}{c|}{\ac{resnet18}, $\mathrm{t_{th}}=180$s } \\ \cline{3-8}
& & \textbf{Test Acc.} $\uparrow$ & \textbf{Test Loss} $\downarrow$ & \textbf{Test Acc.} $\uparrow$ & \textbf{Test Loss} $\downarrow$ & \textbf{Test Acc.} $\uparrow$ & \textbf{Test Loss} $\downarrow$ \\ \hline
\rowcolor{blue!15} \rs \textbf{Genie} & 
SGD & $0.5455 \pm 0.0364$ & $1.2753 \pm 0.1146$ & $0.6008 \pm 0.0059$ & $1.2129 \pm 0.0078$ & $0.6537 \pm 0.0218$ & $1.2872 \pm 0.0875$ \\ \hline
\multirow{5}{*}{\makecell{\rotatebox{90}{\textbf{Federated}}}} & 
\textbf{OSA-FL (Ours)} & \cellcolor{green!30} $0.4385 \pm 0.0060$ & \cellcolor{green!30} $1.5261 \pm 0.0113$ & $0.5253 \pm 0.0020$ & $1.4114 \pm 0.0280$ & \cellcolor{green!30} $0.5349 \pm 0.0057$ & \cellcolor{green!30} $1.3370 \pm 0.0183$ \\ \cline{2-8}
& M-FedAvg  & $0.3877 \pm 0.0012$ & $1.6711 \pm 0.0091$ & \cellcolor{red!30} $0.5323 \pm 0.0051$ & \cellcolor{green!30} $1.3673 \pm 0.0306$ & \cellcolor{red!30} $0.5128 \pm 0.0067$ & \cellcolor{red!30} $1.3670 \pm 0.0126$ \\ \cline{2-8} 
& M-FedNova & $0.3948 \pm 0.0028$ & $1.6478 \pm 0.0158$ & $0.5148 \pm 0.0071$ & $1.4358 \pm 0.0413$ & $0.5046 \pm 0.0030$ & $1.3760 \pm 0.0051$ \\ \cline{2-8}
& M-FedProx & \cellcolor{red!30} $0.3988 \pm 0.0040$ & \cellcolor{red!30} $1.6440 \pm 0.0170$ & $0.5174 \pm 0.0040$ & $1.4406 \pm 0.0335$ & $0.5120 \pm 0.0049$ & $1.3693 \pm 0.0098$ \\ \cline{2-8}
& M-FedDisco & $0.3916 \pm 0.0036$ & $1.6548 \pm 0.0188$ & \cellcolor{green!30} $0.5327 \pm 0.0048$ & \cellcolor{red!30} $1.3792 \pm 0.0330$ & $0.5101 \pm 0.0019$ & $1.3667 \pm 0.0048$ \\ \hline %\cline{2-8}
\end{tabular}
\label{tab:accCompTable_cifar_s_2_alpha_0.9}
\end{table*}
\begin{table*}
\centering
\caption{Test Performance Comparisons on \textbf{FashionMNIST} with $s=2,\aleph=0.9$ (\colorbox{green!30}{best}, \colorbox{red!30}{second best})}
\begin{tabular}{|C{0.8cm}|C{2cm}| C{1.84cm}|C{1.84cm}|C{1.84cm}|C{1.84cm}|C{1.84cm}| C{1.84cm}|} \hline
\multirow{2}{*}{\rs \textbf{Type}} &\multirow{2}{*}{\rs \textbf{Algorithms}} & \multicolumn{2}{c|}{\sqz, $\mathrm{t_{th}}=60$s } & \multicolumn{2}{c|}{\ac{cnn}, $\mathrm{t_{th}}=60$s} & \multicolumn{2}{c|}{\ac{resnet18}, $\mathrm{t_{th}}=180$s } \\ \cline{3-8}
& & \textbf{Test Acc.} $\uparrow$ & \textbf{Test Loss} $\downarrow$ & \textbf{Test Acc.} $\uparrow$ & \textbf{Test Loss} $\downarrow$ & \textbf{Test Acc.} $\uparrow$ & \textbf{Test Loss} $\downarrow$ \\ \hline
\rowcolor{blue!15} \rs \textbf{Genie} & 
SGD & $0.8747 \pm 0.0092$ & $0.3407 \pm 0.0232$ & $0.8844 \pm 0.0023$ & $0.3242 \pm 0.0046$ & $0.9086 \pm 0.0029$ & $0.3046 \pm 0.0071$ \\ \hline
\multirow{5}{*}{\makecell{\rotatebox{90}{\textbf{Federated}}}} & 
\textbf{OSA-FL (Ours)} & \cellcolor{green!30} $0.8324 \pm 0.0051$ & \cellcolor{green!30} $0.4546 \pm 0.0160$ & \cellcolor{green!30} $0.8687 \pm 0.0011$ & \cellcolor{red!30} $0.3878 \pm 0.0056$ & \cellcolor{green!30} $0.8808 \pm 0.0034$ & \cellcolor{green!30} $0.3519 \pm 0.0111$ \\ \cline{2-8}
& M-FedAvg & \cellcolor{red!30} $0.8106 \pm 0.0058$ & \cellcolor{red!30} $0.5109 \pm 0.0136$ & $0.8633 \pm 0.0015$ & \cellcolor{green!30} $0.3875 \pm 0.0062$ & \cellcolor{red!30} $0.8781 \pm 0.0048$ & \cellcolor{red!30} $0.3544 \pm 0.0126$ \\ \cline{2-8} 
& M-FedNova & $0.8102 \pm 0.0085$ & $0.5136 \pm 0.0167$ & $0.8655 \pm 0.0025$ & $0.3909 \pm 0.0076$ & $0.8766 \pm 0.0035$ & $0.3561 \pm 0.0093$ \\ \cline{2-8}
& M-FedProx & $0.8076 \pm 0.0103$ & $0.5161 \pm 0.0226$ & \cellcolor{red!30} $0.8661 \pm 0.0025$ & $0.3887 \pm 0.0094$ & $0.8776 \pm 0.0041$ & $0.3600 \pm 0.0185$ \\ \cline{2-8}
& M-FedDisco & $0.8037 \pm 0.0045$ & $0.5294 \pm 0.0089$ & $0.8631 \pm 0.0037$ & $0.3891 \pm 0.0106$ & $0.8770 \pm 0.0039$ & $0.3598 \pm 0.0117$
 \\ \hline %\cline{2-8}
\end{tabular}
\label{tab:accCompTable_fashion_s_2_alpha_0.9}
\end{table*}
\begin{table*}
\centering
\caption{Test Performance Comparisons on \textbf{MNIST} with $s=2,\aleph=0.9$ (\colorbox{green!30}{best}, \colorbox{red!30}{second best})}
\begin{tabular}{|C{0.8cm}|C{2cm}| C{1.84cm}|C{1.84cm}|C{1.84cm}|C{1.84cm}|C{1.84cm}| C{1.84cm}|} \hline
\multirow{2}{*}{\rs \textbf{Type}} &\multirow{2}{*}{\rs \textbf{Algorithms}} & \multicolumn{2}{c|}{\sqz, $\mathrm{t_{th}}=60$s } & \multicolumn{2}{c|}{\ac{cnn}, $\mathrm{t_{th}}=60$s} & \multicolumn{2}{c|}{\ac{resnet18}, $\mathrm{t_{th}}=180$s } \\ \cline{3-8}
& & \textbf{Test Acc.} $\uparrow$ & \textbf{Test Loss} $\downarrow$ & \textbf{Test Acc.} $\uparrow$ & \textbf{Test Loss} $\downarrow$ & \textbf{Test Acc.} $\uparrow$ & \textbf{Test Loss} $\downarrow$ \\ \hline
\rowcolor{blue!15} \rs \textbf{Genie} & 
SGD & $0.9860 \pm 0.0006$ & $0.0428 \pm 0.0034$ & $0.9803 \pm 0.0014$ & $0.0617 \pm 0.0045$ & $0.9915 \pm 0.0008$ & $0.0279 \pm 0.0030$ \\ \hline
\multirow{5}{*}{\makecell{\rotatebox{90}{\textbf{Federated}}}} & 
\textbf{OSA-FL (Ours)} & \cellcolor{green!30} $0.9753 \pm 0.0009$ & \cellcolor{green!30} $0.0765 \pm 0.0015$ & \cellcolor{green!30} $0.9756 \pm 0.0008$ & $0.0841 \pm 0.0026$ & \cellcolor{green!30} $0.9836 \pm 0.0007$ & \cellcolor{green!30} $0.0510 \pm 0.0039$ \\ \cline{2-8}
& M-FedAvg & \cellcolor{red!30} $0.9669 \pm 0.0016$ & \cellcolor{red!30} $0.1043 \pm 0.0079$ & $0.9744 \pm 0.0003$ & $0.0829 \pm 0.0027$ & $0.9816 \pm 0.0016$ & \cellcolor{red!30} $0.0555 \pm 0.0052$ \\ \cline{2-8} 
& M-FedNova & $0.9666 \pm 0.0013$ & $0.1070 \pm 0.0054$ & $0.9749 \pm 0.0002$ & \cellcolor{green!30} $0.0787 \pm 0.0013$ &\cellcolor{red!30} $0.9824 \pm 0.0015$ & $0.0565 \pm 0.0040$ \\ \cline{2-8}
& M-FedProx & $0.9662 \pm 0.0021$ & $0.1071 \pm 0.0087$ & \cellcolor{red!30} $0.9753 \pm 0.0012$ & \cellcolor{red!30} $0.0789 \pm 0.0052$ & $0.9817 \pm 0.0014$ & $0.0563 \pm 0.0052$ \\ \cline{2-8}
& M-FedDisco & $0.9668 \pm 0.0013$ & $0.1052 \pm 0.0051$ & $0.9727 \pm 0.0004$ & $0.0919 \pm 0.0025$ & $0.9816 \pm 0.0016$ & $0.0579 \pm 0.0056$  \\ \hline %\cline{2-8}
\end{tabular}
\label{tab:accCompTable_Mnist_s_2_alpha_0.9}
\end{table*}

\section{Conclusions}
\label{sec_conclusion}
\noindent
This paper proposed \ac{osafl}, a new algorithm for taming the resource constraints of distributed clients deployed in wireless networks that can continually sense the environment and gather new training data. 
We first theoretically analyzed how data distribution shifts within clients, unknown device participation probabilities due to time-varying wireless channels and resource constraints, gradient quantization errors, and gradient dissimilarities across different clients affect the convergence rate.
We then optimize the client aggregation weights, referred to as the client's score, and a global learning rate scheduler to improve the convergence rate. 
Finally, we validate the effectiveness of the proposed algorithm using various ML models with different trainable parameters across three popular datasets.

\section*{Acknowledgment}
\noindent
The authors gratefully acknowledge the support and resources from the Center for High Performance Computing at the University of Utah and the Center for Advanced Research Computing (CARC) at the University of Southern California.

\bibliographystyle{IEEEtran}
\bibliography{ref.bib}

@inproceedings{wang2020Tackling,
author = {Wang, Jianyu and Liu, Qinghua and Liang, Hao and Joshi, Gauri and Poor, H. Vincent},
booktitle = {Proc. {NeurIPS}},
pages = {7611--7623},
publisher = {Curran Associates, Inc.},
title = {Tackling the Objective Inconsistency Problem in Heterogeneous Federated Optimization},
volume = {33},
year = {2020}
}

@article{li2020federated,
author = {Li, Tian and Sahu, Anit Kumar and Zaheer, Manzil and Sanjabi, Maziar and Talwalkar, Ameet and Smith, Virginia},
booktitle = {Proc. MLSys},
pages = {429--450},
title = {Federated Optimization in Heterogeneous Networks},
volume = {2},
year = {2020}}

@article{pulkkinen2024model,
title={Model-based online learning for active ISAC waveform optimization},
author={Pulkkinen, Petteri and Koivunen, Visa},
journal={IEEE J. Sel. Topics Signal Proces.},
volume={18},
number={5},
pages={737-751},
year={2024},
publisher={IEEE}
}

@article{luo2018channel,
title={Channel state information prediction for 5G wireless communications: A deep learning approach},
author={Luo, Changqing and Ji, Jinlong and Wang, Qianlong and Chen, Xuhui and Li, Pan},
journal={IEEE Trans. Network Sci. Engg.},
volume={7},
number={1},
pages={227--236},
year={2018},
publisher={IEEE}
}

@book{boyd2004convex,
title={Convex optimization},
author={Boyd, Stephen P and Vandenberghe, Lieven},
year={2004},
publisher={Cambridge university press}
}

@InProceedings{ye23fedDisco,
title = {{F}ed{D}isco: Federated Learning with Discrepancy-Aware Collaboration},
author =       {Ye, Rui and Xu, Mingkai and Wang, Jianyu and Xu, Chenxin and Chen, Siheng and Wang, Yanfeng},
booktitle = {Proc. ICML},
pages = {39879--39902},
year = {2023},
volume = {202},
month = {23--29 Jul},
publisher = {PMLR},
}

@InProceedings{yang22Anarchic,
title = {Anarchic Federated Learning},
author = {Yang, Haibo and Zhang, Xin and Khanduri, Prashant and Liu, Jia},
booktitle = {Proc. ICML},
pages = {25331--25363},
year =	{2022},
volume = {162},
month = {17--23 Jul},
publisher =  {PMLR},
}

@InProceedings{mcmahan17Communication,
title = {{Communication-Efficient Learning of Deep Networks from Decentralized Data}},
author = {McMahan, Brendan and Moore, Eider and Ramage, Daniel and Hampson, Seth and Arcas, Blaise Aguera y},
booktitle = {Proc. AIStat.},
pages = 	{1273--1282},
year = 	 {2017},
volume = 	 {54},
month = 	 {20--22 Apr},
publisher =    {PMLR},
}

@InProceedings{karimireddy20scaffold,
title = {{SCAFFOLD}: Stochastic Controlled Averaging for Federated Learning},
author = {Karimireddy, Sai Praneeth and Kale, Satyen and Mohri, Mehryar and Reddi, Sashank and Stich, Sebastian and Suresh, Ananda Theertha},
booktitle = {Proc. ICML},
pages = {5132--5143},
year = {2020},
volume = {119},
month = {13--18 Jul},
publisher = {PMLR},
}

@article{niknam2020federated,
title={Federated learning for wireless communications: Motivation, opportunities, and challenges},
author={Niknam, Solmaz and Dhillon, Harpreet S and Reed, Jeffrey H},
journal={IEEE Commun. Magaz.},
volume={58},
number={6},
pages={46--51},
year={2020},
publisher={IEEE}
}

@article{wang2023performance,
author={Wang, Sihua and Chen, Mingzhe and Brinton, Christopher G. and Yin, Changchuan and Saad, Walid and Cui, Shuguang},
journal={IEEE Trans. Wireless Commun.}, 
title={Performance Optimization for Variable Bitwidth Federated Learning in Wireless Networks}, 
year={2024},
volume={23},
number={3},
pages={2340-2356},
doi={10.1109/TWC.2023.3297790}
}

@article{yao2023gomore,
author={Yao, Jiacheng and Yang, Zhaohui and Xu, Wei and Chen, Mingzhe and Niyato, Dusit},
journal={IEEE Wireless Commun. Lett.}, 
title={GoMORE: Global Model Reuse for Resource-Constrained Wireless Federated Learning}, 
year={2023},
volume={12},
number={9},
pages={1543-1547},
doi={10.1109/LWC.2023.3281881}
}

@article{pervej2024hierarchical,
title={Hierarchical federated learning in wireless networks: Pruning tackles bandwidth scarcity and system heterogeneity},
author={Pervej, Md Ferdous and Jin, Richeng and Dai, Huaiyu},
journal={IEEE Trans. Wireless Commun.},
year={2024},
publisher={IEEE}
}

@article{chen2024efficient,
title={Efficient wireless federated learning with partial model aggregation},
author={Chen, Zhixiong and Yi, Wenqiang and Shin, Hyundong and Nallanathan, Arumugam and Li, Geoffrey Ye},
journal={IEEE Trans. Commun.},
year={2024},
publisher={IEEE}
}

@article{pervej2023resource,
author={Pervej, Md Ferdous and Jin, Richeng and Dai, Huaiyu},
journal={IEEE J. Sel. Areas Commun.}, 
title={Resource Constrained Vehicular Edge Federated Learning With Highly Mobile Connected Vehicles}, 
year={2023},
volume={41},
number={6},
pages={1825-1844},
doi={10.1109/JSAC.2023.3273700}
}

@inproceedings{zhou2022joint,
  title={Joint optimization of energy consumption and completion time in federated learning},
  author={Zhou, Xinyu and Zhao, Jun and Han, Huimei and Guet, Claude},
  booktitle={Proc. IEEE ICDCS},
  pages={1005--1017},
  year={2022},
}

@article{saha2022data,
  title={Data-centric client selection for federated learning over distributed edge networks},
  author={Saha, Rituparna and Misra, Sudip and Chakraborty, Aishwariya and Chatterjee, Chandranath and Deb, Pallav Kumar},
  journal={IEEE Trans. Parallel Distrib. Sys.},
  volume={34},
  number={2},
  pages={675--686},
  year={2022},
  publisher={IEEE}
}

@article{zhang2022joint,
author={Zhang, Jiawei and Chen, Suhong and Zhou, Xiaochen and Wang, Xudong and Lin, Yi-Bing},
journal={IEEE Trans. Mobile Comput.}, 
title={Joint Scheduling of Participants, Local Iterations, and Radio Resources for Fair Federated Learning over Mobile Edge Networks}, 
year={2023},
volume={22},
number={7},
pages={3985-3999},
doi={10.1109/TMC.2022.3148208}}

@inproceedings{pervej2022mobility,
title={Mobility, communication and computation aware federated learning for internet of vehicles},
author={Pervej, Md Ferdous and Guo, Jianlin and Kim, Kyeong Jin and Parsons, Kieran and Orlik, Philip and Di Cairano, Stefano and Menner, Marcel and Berntorp, Karl and Nagai, Yukimasa and Dai, Huaiyu},
booktitle={Proc. IEEE Intel. Vehicles Symp.},
pages={750--757},
year={2022}
}

@article{pervej2025resource,
title={Resource-Aware Hierarchical Federated Learning in Wireless Video Caching Networks},
author={Pervej, Md Ferdous and Molisch, Andreas F},
journal={IEEE Trans. Wireless Commun.},
year={2025},
volume={24},
number={1},
pages={165-180},
doi={10.1109/TWC.2024.3489578},
publisher={IEEE}
}

@article{chen2020joint,
title={A joint learning and communications framework for federated learning over wireless networks},
author={Chen, Mingzhe and Yang, Zhaohui and Saad, Walid and Yin, Changchuan and Poor, H Vincent and Cui, Shuguang},
journal={IEEE Trans. Wireless Commun.},
volume={20},
number={1},
pages={269--283},
year={2020},
publisher={IEEE}
}

@article{amiri2020federated,
title={Federated learning over wireless fading channels},
author={Amiri, Mohammad Mohammadi and G{\"u}nd{\"u}z, Deniz},
journal={IEEE Trans. Wireless Commun.},
volume={19},
number={5},
pages={3546--3557},
year={2020},
publisher={IEEE}
}

@inproceedings{tran2019federated,
title={Federated learning over wireless networks: Optimization model design and analysis},
author={Tran, Nguyen H and Bao, Wei and Zomaya, Albert and Nguyen, Minh NH and Hong, Choong Seon},
booktitle={Proc. IEEE INFOCOM},
pages={1387--1395},
year={2019},
organization={IEEE}
}

@article{hosseinalipour2023parallel,
author={Hosseinalipour, Seyyedali and Wang, Su and Michelusi, Nicolò and Aggarwal, Vaneet and Brinton, Christopher G. and Love, David J. and Chiang, Mung},
journal={IEEE/ACM Trans. Network.}, 
title={Parallel Successive Learning for Dynamic Distributed Model Training Over Heterogeneous Wireless Networks}, 
year={2024},
volume={32},
number={1},
pages={222-237},
doi={10.1109/TNET.2023.3286987}
}

@article{krizhevsky2009learning,
title={Learning Multiple Layers of Features from Tiny Images},
author={Krizhevsky, A},
journal={Master's Thesis, University of Toronto},
year={2009},
url = {https://www.cs.utoronto.ca/~kriz/learning-features-2009-TR.pdf} }

@article{liu2018rethinking,
title={Rethinking the value of network pruning},
author={Liu, Zhuang and Sun, Mingjie and Zhou, Tinghui and Huang, Gao and Darrell, Trevor},
journal={arXiv preprint arXiv:1810.05270},
year={2018}
}

@inproceedings{polino2018model,
title={Model compression via distillation and quantization},
author={Antonio Polino and Razvan Pascanu and Dan Alistarh},
booktitle={Proc. ICLR},
year={2018},
url={https://openreview.net/forum?id=S1XolQbRW},
}

@article{chen2024adaptive,
author={Chen, Zhixiong and Yi, Wenqiang and Shin, Hyundong and Nallanathan, Arumugam},
journal={IEEE Trans. Wireless Commun.}, 
title={Adaptive Model Pruning for Communication and Computation Efficient Wireless Federated Learning}, 
year={2024},
volume={23},
number={7},
pages={7582-7598},
doi={10.1109/TWC.2023.3342626}}

@article{kim2023green,
author={Kim, Minsu and Saad, Walid and Mozaffari, Mohammad and Debbah, Mérouane},
journal={IEEE Trans. Wireless Commun.}, 
title={Green, Quantized Federated Learning Over Wireless Networks: An Energy-Efficient Design}, 
year={2024},
volume={23},
number={2},
pages={1386-1402},
doi={10.1109/TWC.2023.3289177}}

@ARTICLE{zhao2024ensemble,
author={Zhao, Zhongyuan and Wang, Jingyi and Hong, Wei and Quek, Tony Q. S. and Ding, Zhiguo and Peng, Mugen},
journal={IEEE Trans. Wireless Commun.}, 
title={Ensemble Federated Learning With Non-IID Data in Wireless Networks}, 
year={2024},
volume={23},
number={4},
pages={3557-3571},
keywords={Federated learning;Data models;Adaptation models;Computational modeling;Wireless networks;6G mobile communication;Task analysis;Network intelligence;federated learning;model ensemble;non-IID data;coalition formation game},
doi={10.1109/TWC.2023.3309376}}

@ARTICLE{liu2022resource,
author={Liu, Yi-Jing and Qin, Shuang and Sun, Yao and Feng, Gang},
journal={IEEE Trans. Wireless Commun.}, 
title={Resource Consumption for Supporting Federated Learning in Wireless Networks}, 
year={2022},
volume={21},
number={11},
pages={9974-9989},
doi={10.1109/TWC.2022.3181611}}

@ARTICLE{li2024learning,
author={Li, Boning and Perazzone, Jake and Swami, Ananthram and Segarra, Santiago},
journal={IEEE Trans. Wireless Commun.}, 
title={Learning to Transmit With Provable Guarantees in Wireless Federated Learning}, 
year={2024},
volume={23},
number={7},
pages={7439-7455},
doi={10.1109/TWC.2023.3341400}}

@article{iandola2016squeezenet,
title={SqueezeNet: AlexNet-level accuracy with $50x$ fewer parameters and $< 0.5$ MB model size},
author={Iandola, Forrest N and Han, Song and Moskewicz, Matthew W and Ashraf, Khalid and Dally, William J and Keutzer, Kurt},
journal={arXiv preprint arXiv:1602.07360},
year={2016}
}

@inproceedings{taniguchi2017online,
title={Online spatial concept and lexical acquisition with simultaneous localization and mapping},
author={Taniguchi, Akira and Hagiwara, Yoshinobu and Taniguchi, Tadahiro and Inamura, Tetsunari},
booktitle={Proc. IEEE/RSJ IROS},
year={2017},
organization={IEEE}
}

@article{zhang2022online,
title={Online learning of wearable sensing for human activity recognition},
author={Zhang, Yiwei and Gao, Bin and Yang, Daili and Woo, Wai Lok and Wen, Houlai},
journal={IEEE Internet Things J.},
volume={9},
number={23},
pages={24315--24327},
year={2022},
publisher={IEEE}
}

@book{molisch2023wireless,
title={Wireless Communications: From Fundamentals to Beyond {5G}},
edition={3rd},
author={Molisch, Andreas F},
year={2023},
publisher={IEEE Press - Wiley}
}

@article{alistarh2017qsgd,
  title={QSGD: Communication-efficient SGD via gradient quantization and encoding},
  author={Alistarh, Dan and Grubic, Demjan and Li, Jerry and Tomioka, Ryota and Vojnovic, Milan},
  journal={Advances in neural information processing systems},
  volume={30},
  year={2017}
}

@inproceedings{he2016deep,
  title={Deep residual learning for image recognition},
  author={He, Kaiming and Zhang, Xiangyu and Ren, Shaoqing and Sun, Jian},
  booktitle={Proceedings of the IEEE conference on computer vision and pattern recognition},
  pages={770--778},
  year={2016}
}

@inproceedings{ma2022continual,
  title={Continual Federated Learning Based on Knowledge Distillation.},
  author={Ma, Yuhang and Xie, Zhongle and Wang, Jue and Chen, Ke and Shou, Lidan},
  booktitle={Ijcai},
  pages={2182--2188},
  year={2022}
}

@inproceedings{qi2023better,
title={Better Generative Replay for Continual Federated Learning},
author={Daiqing Qi and Handong Zhao and Sheng Li},
booktitle={The Eleventh International Conference on Learning Representations },
year={2023},
url={https://openreview.net/forum?id=cRxYWKiTan}
}

@article{lecun2002gradient,
  title={Gradient-based learning applied to document recognition},
  author={LeCun, Yann and Bottou, L{\'e}on and Bengio, Yoshua and Haffner, Patrick},
  journal={Proceedings of the IEEE},
  volume={86},
  number={11},
  pages={2278--2324},
  year={2002},
  publisher={Ieee}
}

@article{xiao2017fashion,
  title={Fashion-mnist: a novel image dataset for benchmarking machine learning algorithms},
  author={Xiao, Han and Rasul, Kashif and Vollgraf, Roland},
  journal={arXiv preprint arXiv:1708.07747},
  year={2017}
}

@inproceedings{li2021fedbn,
title={Fed{BN}: Federated Learning on Non-{IID} Features via Local Batch Normalization},
author={Xiaoxiao Li and Meirui JIANG and Xiaofei Zhang and Michael Kamp and Qi Dou},
booktitle={Proc. ICLR},
year={2021},
url={https://openreview.net/forum?id=6YEQUn0QICG}
}

@inproceedings{reisizadeh2020fedpaq,
  title={Fedpaq: A communication-efficient federated learning method with periodic averaging and quantization},
  author={Reisizadeh, Amirhossein and Mokhtari, Aryan and Hassani, Hamed and Jadbabaie, Ali and Pedarsani, Ramtin},
  booktitle={International conference on artificial intelligence and statistics},
  pages={2021--2031},
  year={2020},
  organization={PMLR}
}

@article{mao2022communication,
  title={Communication-efficient federated learning with adaptive quantization},
  author={Mao, Yuzhu and Zhao, Zihao and Yan, Guangfeng and Liu, Yang and Lan, Tian and Song, Linqi and Ding, Wenbo},
  journal={ACM Trans. Intel. Sys. Technol.},
  volume={13},
  number={4},
  pages={1--26},
  year={2022},
  publisher={ACM New York, NY}
}

@article{hou2025lightweight,
  title={Lightweight federated learning over wireless edge networks},
  author={Hou, Xiangwang and Wang, Jingjing and Du, Jun and Jiang, Chunxiao and Ren, Yong and Niyato, Dusit},
  journal={IEEE Trans. Mobile Computing},
  year={2026},
  volume={25},
  number={1},
  pages={300-312},
  publisher={IEEE}
}

@INPROCEEDINGS{liu2023Communication,
  author={Liu, Heting and He, Fang and Cao, Guohong},
  booktitle={Proc. IEEE INFOCOM}, 
  title={Communication-Efficient Federated Learning for Heterogeneous Edge Devices Based on Adaptive Gradient Quantization}, 
  year={2023},
  volume={},
  number={},
  pages={1-10},
  keywords={Training;Performance evaluation;Adaptation models;Quantization (signal);Federated learning;Computational modeling;Data models},
  doi={10.1109/INFOCOM53939.2023.10228970}}

@inproceedings{reddi2021adaptive,
title={Adaptive Federated Optimization},
author={Sashank J. Reddi and Zachary Charles and Manzil Zaheer and Zachary Garrett and Keith Rush and Jakub Kone{\v{c}}n{\'y} and Sanjiv Kumar and Hugh Brendan McMahan},
booktitle={International Conference on Learning Representations},
year={2021},
url={https://openreview.net/forum?id=LkFG3lB13U5}
}

@inproceedings{wang2024lightweight,
 author = {Wang, Shiqiang and Ji, Mingyue},
 booktitle = {International Conference on Learning Representations},
 editor = {B. Kim and Y. Yue and S. Chaudhuri and K. Fragkiadaki and M. Khan and Y. Sun},
 pages = {854--893},
 title = {A Lightweight Method for Tackling Unknown Participation Statistics in Federated Averaging},
 url = {https://proceedings.iclr.cc/paper_files/paper/2024/file/03a9a9c1e15850439653bb971a4ad4b3-Paper-Conference.pdf},
 volume = {2024},
 year = {2024}
}

\newpage 
\clearpage
\appendices 
\onecolumn

\section*{Supplementary Materials}

\section{Important Notations}

\begin{table}[!h]
\centering
\caption{Important Notations in Use}
    \begin{tabular}{|C{2cm}|C{15cm}|}
    \hline
    \textbf{Notation} & \textbf{Description} \\ \hline
      $N$  & Total number of trainable model parameters \\\hline
      $t$ & Global round \\ \hline
      $\tau$ & Local {\tt SGD} step \\ \hline
      $u$, $U$, $\mathcal{U}$ & $u^{th}$ user, total users, set of users \\ \hline
      $\mathbf{w}^t$, $\mathbf{w}_u^t$ & Global model at round $t$, local model of client $u$ at round $t$ \\ \hline
      $\mathcal{D}_u^t$ & Client $u$'s local dataset during global round $t$ \\ \hline
      $\alpha_u^t$ & Client $u$'s aggregation weight during global round $t$ \\ \hline
      $f_u^t(\mathbf{w})$ & Client $u$'s local loss function evaluated on dataset $\mathcal{D}_u^t$ using model $\mathbf{w}$ \\ \hline
      $p_{u,\mathrm{ac}}$ & Client $u$'s new sample request probability \\ \hline
      $E_u$ & Client $u$'s total new samples \\ \hline
      $n$, $\bar{n}$ & Total mini-batches, mini-batch size \\ \hline
      $c_u$ & Client $u$'s number of CPU cycle to compute $1$-bit data \\ \hline
      $s_u$ & Client $u$'s data sample size in bits \\ \hline
      $\bar{f}_u^t$, $\bar{f}_{u,\mathrm{max}}$ & Client $u$'s CPU cycle, client $u$'s maximum CPU cycle \\ \hline
      $\rho$ & Effective capacitance of \ac{cpu} chip \\ \hline
      $\Upsilon$ & Model payload size in bits \\ \hline
      $\omega$ & Bandwidth size \\ \hline 
      $\Xi_u^t$ & Client $u$'s large-scale path loss \\ \hline 
      $\Gamma_u^t$ & Client $u$'s log-Normal shadowing \\ \hline 
      $p_u^t$, $p_{u,\mathrm{max}}$ & Client $u$'s transmission power, client $u$'s maximum transmission power \\ \hline 
      $\xi^2$ & Noise variance \\ \hline
      $\mathrm{t}_{u,\mathrm{cp}}^t$, $\mathrm{t}_{u,\mathrm{up}}^t$ & Client $u$'s local model training time, client $u$'s normalized gradient offload time \\ \hline
      $\mathrm{t_{th}}$ & Deadline until next global round begins  \\ \hline
      $\mathrm{e}_{u,\mathrm{cp}}^t$, $\mathrm{e}_{u,\mathrm{up}}^t$ & Client $u$'s local model training energy expense, client $u$'s normalized gradient offload energy expense \\ \hline
      $\mathrm{e_{th}}$ & Energy budget to perform local training and offloading trained model parameters in each global round  \\ \hline
      $\epsilon$ & Weighting parameter to balance between energy expense for local training and energy efficiency for offloading payload \\ \hline 
      $\kappa_u^t$, $\kappa$ & Client $u$'s local rounds, maximum local {\tt SGD} steps \\ \hline
      $\mathrm{1}_u^t$; $v_u^t$ & Binary indicator function for denoting if client $u$ participates in FL round $t$; success probability of $\mathrm{1}_u^t$ \\ \hline
      
    \end{tabular}
    \label{tab:notations}
\end{table}

\newpage
\section{Proof of Lemma \ref{lemma_CPUfreqGiven_kappa_p}}
\label{LRO_App}

Given the $\kappa_u^{t}$ and the transmission power $p_u^{t,i}$, we optimize the \ac{cpu} frequency by solving 
\begin{subequations}
\label{cpuFreqOptim_Sub_Prob_supp}
\begin{align}
    \underset{ \bar{f}_u^t }  { \tt{maximize} ~~ } & \quad \epsilon \left[\frac{\kappa_u^{t}}{0.5 \rho n \bar{n} c_u s_u \left(\bar{f}_u^t\right)^2} \right] + \left(1-\epsilon\right) \left[ \frac {\omega \log_2 \left( 1 + \frac{\Xi_{u}^t \Gamma_u^t p_u^{t,i}} {\omega \xi^2}\right) } {p_u^{t,i}} \right] \tag{\ref{cpuFreqOptim_Sub_Prob_supp}} \\
    {\tt{subject ~ to}} &\quad 0 \leq \bar{f}_u^t \leq \bar{f}_{u,\mathrm{max}},\\ 
    &\quad 0.5 \rho n \bar{n} c_u s_u \kappa_u^{t} \left(\bar{f}_u^t\right)^2 + \frac{\Upsilon \cdot p_u^{t,i}} {\omega \log_2 \left( 1 + \frac{\Xi_{u}^t \Gamma_u^t p_u^{t,i}}{\omega \xi^2}\right) } \leq \mathrm{e_{bd}}, \\
    &\quad \frac{n\bar{n} c_u s_u \kappa_u^{t} }{\bar{f}_u^t} + \frac{\Upsilon} {\omega \log_2 \left( 1 + \frac{\Xi_{u}^t \Gamma_u^t p_u^{t,i}}{\omega \xi^2}\right)} \leq \mathrm{t_{th}},
\end{align}
\end{subequations}

\begin{theorem-box}[Lemma \ref{lemma_CPUfreqGiven_kappa_p}]
Given $\kappa_u^{t}$ and $p_u^{t,i}$, the optimal solution of (\ref{cpuFreqOptim_Sub_Prob_supp}) is
\begin{align}
\label{optimal_CPU_freq_supp}
    \bar{f}_u^{t^*} = \frac {n\bar{n} c_u s_u \kappa_u^{t} \times \omega \log_2 \left( 1 + \frac{\Xi_{u}^t \Gamma_u^t p_u^{t,i} } {\omega \xi^2}\right)} { \mathrm{t_{th}} \times \omega \log_2 \left( 1 + \frac{\Xi_{u}^t \Gamma_u^t p_u^{t,i}}{\omega \xi^2}\right) - \Upsilon}.
\end{align}
\end{theorem-box}

\begin{proof}
Observe that the objective function is monotonically decreasing with respect to $\bar{f}_u^t$ because 
\begin{align}
    \frac{ \partial \left( \epsilon \left[\frac{\kappa_u^{t}}{0.5 \rho n \bar{n} c_u s_u \left(\bar{f}_u^t\right)^2} \right] + \left(1-\epsilon\right) \left[ \frac {\omega \log_2 \left( 1 + \frac{\Xi_{u}^t \Gamma_u^t p_u^{t,i}} {\omega \xi^2}\right) } {p_u^{t,i}} \right] \right) } { \partial \bar{f}_u^t} = \frac{-4\epsilon \kappa_u^{t}}{\rho n \bar{n} c_u s_u \left(\bar{f}_u^t\right)^3}. 
\end{align}

Then, the energy budget constraint gives the upper bound for the \ac{cpu} frequency as
\begin{align}
    \bar{f}_u^t   
    \leq \sqrt{ \frac{ \mathrm{e_{bd}} - \frac{\Upsilon \cdot p_u^{t,i}} {\omega \log_2 \left( 1 + \frac{\Xi_{u}^t \Gamma_u^t p_u^{t,i}}{\omega \xi^2}\right) } } {0.5 \rho n \bar{n} c_u s_u \kappa_u^{t}} } = \sqrt{ \frac{ \mathrm{e_{bd}} \times \omega \log_2 \left( 1 + \frac{\Xi_{u}^t \Gamma_u^t p_u^{t,i}}{\omega \xi^2}\right)  - \Upsilon \cdot p_u^{t,i} } { 0.5 \rho n \bar{n} c_u s_u \kappa_u^{t} \times \omega \log_2 \left( 1 + \frac{\Xi_{u}^t \Gamma_u^t p_u^{t,i}}{\omega \xi^2}\right) } }.
\end{align}
Besides, the deadline constraint gives the lower bound as 
\begin{align}
    \bar{f}_u^t 
    \geq \frac {n\bar{n} c_u s_u \kappa_u^{t}} { \mathrm{t_{th}} - \frac{\Upsilon} {\omega \log_2 \left( 1 + \frac{\Xi_{u}^t \Gamma_u^t p_u^{t,i}}{\omega \xi^2}\right)} } 
    = \frac {n\bar{n} c_u s_u \kappa_u^{t} \times \omega \log_2 \left( 1 + \frac{\Xi_{u}^t \Gamma_u^t p_u^{t,i}}{\omega \xi^2}\right) } { \mathrm{t_{th}} \times \omega \log_2 \left( 1 + \frac{\Xi_{u}^t \Gamma_u^t p_u^{t,i}}{\omega \xi^2}\right) - \Upsilon}.
\end{align}
Therefore, if $\sqrt{ \frac{ \mathrm{e_{bd}} \times \omega \log_2 \left( 1 + \frac{\Xi_{u}^t \Gamma_u^t p_u^{t,i}}{\omega \xi^2}\right)  - \Upsilon \cdot p_u^{t,i} } { 0.5 \rho n \bar{n} c_u s_u \kappa_u^{t} \times \omega \log_2 \left( 1 + \frac{\Xi_{u}^t \Gamma_u^t p_u^{t,i}}{\omega \xi^2}\right)} } < \frac {n\bar{n} c_u s_u \kappa_u^{t} \times \omega \log_2 \left( 1 + \frac{\Xi_{u}^t \Gamma_u^t p_u^{t,i}}{\omega \xi^2}\right) } { \mathrm{t_{th}} \times \omega \log_2 \left( 1 + \frac{\Xi_{u}^t \Gamma_u^t p_u^{t,i}}{\omega \xi^2}\right) - \Upsilon}$, the optimization problem is infeasible.
Besides, if $\frac {n\bar{n} c_u s_u \kappa_u^{t} \times \omega \log_2 \left( 1 + \frac{\Xi_{u}^t \Gamma_u^t p_u^{t,i}}{\omega \xi^2}\right) } { \mathrm{t_{th}} \times \omega \log_2 \left( 1 + \frac{\Xi_{u}^t \Gamma_u^t p_u^{t,i}}{\omega \xi^2}\right) - \Upsilon} > \bar{f}_{u,\mathrm{max}}$ or $\sqrt{ \frac{ \mathrm{e_{bd}} \times \omega \log_2 \left( 1 + \frac{\Xi_{u}^t \Gamma_u^t p_u^{t,i}}{\omega \xi^2}\right)  - \Upsilon \cdot p_u^{t,i} } { 0.5 \rho n \bar{n} c_u s_u \kappa_u^{t} \times \omega \log_2 \left( 1 + \frac{\Xi_{u}^t \Gamma_u^t p_u^{t,i}}{\omega \xi^2}\right)} } < 0$, the problem is infeasible.

Since the objective function is monotonically decreasing with respect to CPU frequency, the optimal \ac{cpu} frequency is 
\begin{align}
\label{optimal_CPU_freq_supp_proof}
    \bar{f}_u^{t^*} = \frac {n\bar{n} c_u s_u \kappa_u^{t} \times \omega \log_2 \left( 1 + \frac{\Xi_{u}^t \Gamma_u^t p_u^{t,i} } {\omega \xi^2}\right)} { \mathrm{t_{th}} \times \omega \log_2 \left( 1 + \frac{\Xi_{u}^t \Gamma_u^t p_u^{t,i}}{\omega \xi^2}\right) - \Upsilon}.
\end{align}
\end{proof}

\section{Proof of Lemma \ref{lemmaTxPowerGiven_Kappa_CPUFreq}}
\noindent
Given the local rounds $\kappa_u^{t}$ and \ac{cpu} frequency $\bar{f}_u^{t^*}$, we optimize the transmission power by solving 
\begin{subequations}
\label{txPowerOptim_Sub_Prob_supp}
\begin{align}
    \underset{ p_u^t }  { \tt{maximize} ~~ } & \quad \epsilon \left[\frac{\kappa_u^{t}}{0.5 \rho n \bar{n} c_u s_u \left(\bar{f}_u^{t^*}\right)^2} \right] + \left(1-\epsilon\right) \left[ \frac {\omega \log_2 \left( 1 + \frac{\Xi_{u}^t \Gamma_u^t p_u^t}{\omega \xi^2}\right) } {p_u^t} \right] \tag{\ref{txPowerOptim_Sub_Prob_supp}} \\
    {\tt{subject ~ to}} & \quad 0 \leq p_u^t \leq p_{u,\mathrm{max}}, \\
    &\quad 0.5 \rho n \bar{n} c_u s_u \kappa_u^{t} \left(\bar{f}_u^{t^*}\right)^2 + \frac{\Upsilon \cdot p_u^t} {\omega \log_2 \left( 1 + \frac{\Xi_{u}^t \Gamma_u^t p_u^t}{\omega \xi^2}\right) } \leq \mathrm{e_{bd}}, \\
    &\quad p_u^t \geq \frac{ \omega \xi^2 \left( 2^{\left[\frac{\Upsilon\bar{f}_u^{t^*} } { \omega \left(\mathrm{t_{th}} \bar{f}_u^{t^*} - n\bar{n} c_u s_u \kappa_u^{t} \right) } \right]} - 1 \right) } {\Xi_{u}^t \Gamma_u^t}, \label{eq:power_cons}
\end{align}
\end{subequations}

\begin{theorem-box}[Lemma \ref{lemmaTxPowerGiven_Kappa_CPUFreq}]
Given $\kappa_u^t$ and $\bar{f}_u^{t^*}$, the optimal solution (\ref{txPowerOptim_Sub_Prob_supp}) is
\begin{align}
\label{eq:optPowerSup}
    p_u^{t^*} = \Bigg[ \omega \xi^2 \Bigg( 2^{\Big[\frac{\Upsilon\bar{f}_u^{t^*} } { \omega \left(\mathrm{t_{th}} \bar{f}_u^{t^*} - n\bar{n} c_u s_u \kappa_u^{t} \right) } \Big]} - 1 \Bigg) \Bigg] / \left(\Xi_{u}^t \Gamma_u^t \right).
\end{align}
\end{theorem-box}
\begin{proof}
    We want to check the convexity and monotonicity of the objective function \eqref{txPowerOptim_Sub_Prob_supp} on the transmit power $p_u^t$.
    Since the first term does not depend on $p_u^t$, we only need to check the second term, which is equivalent to $\frac {\omega \log_2 \left( 1 + \frac{\Xi_{u}^t \Gamma_u^t p_u^t}{\omega \xi^2}\right) } {p_u^t} = \frac {\omega \ln \left( 1 + \frac{\Xi_{u}^t \Gamma_u^t p_u^t}{\omega \xi^2}\right) } {\ln(2) p_u^t}$. 
    Dropping the constants, we have
    Let us define
    \begin{align}
    \label{eq:power_optim_obj}
        \mathcal{F} (p_u^t) \coloneqq \frac { \ln \left( 1 + \frac{\Xi_{u}^t \Gamma_u^t p_u^t}{\omega \xi^2}\right) } {p_u^t} = \frac { \ln \left( 1 + \frac{ \mathfrak{b}_{u}^t p_u^t}{\mathfrak{b}_1}\right) } {p_u^t}, 
    \end{align}
    where $\mathfrak{b}_{u}^t \coloneqq \Xi_{u}^t \Gamma_u^t$ and $\mathfrak{b}_1 \coloneqq \omega \xi^2$.

    Then the first derivative of \eqref{eq:power_optim_obj} with respect to $p_u^t$ is calculated as 
    \begin{align}
    \label{eq:power_optim_first_derivative}
        \mathcal{F}'(p_u^t) & \coloneqq \frac{d}{d p_u^t}\left[ \frac { \ln \left( 1 + \frac{ \mathfrak{b}_{u}^t p_u^t}{\mathfrak{b}_1}\right) } {p_u^t} \right] 
        =\frac{ \left( \frac{ \frac{ \mathfrak{b}_{u}^t}{\mathfrak{b}_1} }{ 1 + \frac{ \mathfrak{b}_{u}^t p_u^t}{\mathfrak{b}_1}} \right) p_u^t -  \ln \left( 1 + \frac{ \mathfrak{b}_{u}^t p_u^t}{\mathfrak{b}_1}\right) }{ (p_u^t)^2 } \nonumber\\
        &= \frac{\mathfrak{b}_{u}^t}{\mathfrak{b}_1 p_u^t + \mathfrak{b}_{u}^t (p_u^t)^2} - \frac{\ln \left( 1 + \frac{ \mathfrak{b}_{u}^t p_u^t}{\mathfrak{b}_1}\right)}{(p_u^t)^2 }.
    \end{align}
    Similarly, the second derivative with respect to $p_u^t$ is
    \begin{align}
        \mathcal{F}''(p_u^t) & \coloneqq \frac{d}{d p_u^t} \left[ \mathcal{F}' (p_u^t) \right] 
        = \frac{d}{d p_u^t} \left[\frac{\mathfrak{b}_{u}^t}{\mathfrak{b}_1 p_u^t + \mathfrak{b}_{u}^t (p_u^t)^2} \right] - \frac{d}{d p_u^t} \left[ \frac{\ln \left( 1 + \frac{ \mathfrak{b}_{u}^t p_u^t}{\mathfrak{b}_1}\right)}{(p_u^t)^2 } \right] \nonumber\\
        &=\frac{1}{(p_u^t)^3} \left[ \underbrace{2\ln \left(1 + \frac{\mathfrak{b}_u^t p_u^t}{\mathfrak{b}_1}\right) - \frac{\mathfrak{b}_u^t p_u^t (2\mathfrak{b}_1 + 3 \mathfrak{b}_u^t p_u^t) }{\left(\mathfrak{b}_1 + \mathfrak{b}_u^t p_u^t \right)^2}}_{\mathcal{F}_1(p_u^t)} \right].
    \end{align}
    Since $0 \leq p_u^t$, we now need to focus on the $\mathcal{F}_1(p_u^t)$ term. 
    Suppose $\mathfrak{m} \coloneqq \frac{\mathfrak{b}_u^t p_u^t}{\mathfrak{b}_1 + \mathfrak{b}_u^t p_u^t}$. 
    Therefore, $0 < \mathfrak{m} <1$.
    Then, we have 
    \begin{align}
        \mathcal{F}_1(p_u^t) &= 2\ln \left(1 + \frac{\mathfrak{b}_u^t p_u^t}{\mathfrak{b}_1}\right) - \frac{\mathfrak{b}_u^t p_u^t (2\mathfrak{b}_1 + 3 \mathfrak{b}_u^t p_u^t) }{\left(\mathfrak{b}_1 + \mathfrak{b}_u^t p_u^t \right)^2} \nonumber\\
        &= 2\ln\left( \frac{1}{1 - \mathfrak{m} } \right) - \mathfrak{m}(2+\mathfrak{m}) \nonumber\\
        &= -2 \underbrace{\ln(1 - \mathfrak{m})}_{\mathcal{G}(\mathfrak{m})} - 2\mathfrak{m} - \mathfrak{m}^2. 
    \end{align}
    Then, using the Maclaurin series, we have 
    \begin{align}
        \mathcal{F}_1(p_u^t) 
        &= -2 \left[\mathcal{G}(0) + \frac{\mathcal{G}'(0)}{1!} \mathfrak{m} + \frac{\mathcal{G}''(0)}{2!} \mathfrak{m}^2 + \frac{\mathcal{G}^{3}(\mathfrak{m})}{3!} \mathfrak{m}^3 + \frac{\mathcal{G}^ {4} (0)}{4!} \mathfrak{m}^4 + \dots \right] - 2\mathfrak{m} - \mathfrak{m}^2 \nonumber\\
        &=\left[0 + 2 \mathfrak{m} + 2. \frac{\mathfrak{m}^2}{2} + 2. \frac{\mathfrak{m}^3}{3} + 2 . \frac{\mathfrak{m}^4}{4} + \dots \right] - 2\mathfrak{m} - \mathfrak{m}^2 \nonumber\\
        &= \frac{2\mathfrak{m}^3}{3} + \frac{\mathfrak{m}^4}{2} + \dots
    \end{align}
    Therefore, it is clear that $\mathcal{F}_1(p_u^t) >0$. 
    As such, $\mathcal{F}''(p_u^t) >0$, i.e., positive definite.
    In other words, $\mathcal{F}(p_u^t)$ is {\em convex} on the transmit power $p_u^t$.

    Now, we check the monotonicity of the objective function. 
    Using the first derivative in \eqref{eq:power_optim_first_derivative}, we have 
    \begin{align}
        \mathcal{F}'(p_u^t) & \coloneqq \frac{d}{d p_u^t}\left[ \frac { \ln \left( 1 + \frac{ \mathfrak{b}_{u}^t p_u^t}{\mathfrak{b}_1}\right) } {p_u^t} \right] 
        =\frac{ \left( \frac{ \frac{ \mathfrak{b}_{u}^t}{\mathfrak{b}_1} }{ 1 + \frac{ \mathfrak{b}_{u}^t p_u^t}{\mathfrak{b}_1}} \right) p_u^t -  \ln \left( 1 + \frac{ \mathfrak{b}_{u}^t p_u^t}{\mathfrak{b}_1}\right) }{ (p_u^t)^2 }.
    \end{align}
    Since $p_u^t >0$, the sign of $\mathcal{F}'(p_u^t)$ depends on the numerator. 
    \begin{align}
        \tilde{\mathcal{F}}'(p_u^t) 
        & \coloneqq \left( \frac{ \frac{ \mathfrak{b}_{u}^t}{\mathfrak{b}_1} }{ 1 + \frac{ \mathfrak{b}_{u}^t p_u^t}{\mathfrak{b}_1}} \right) p_u^t -  \ln \left( 1 + \frac{ \mathfrak{b}_{u}^t p_u^t}{\mathfrak{b}_1}\right) \nonumber\\
        &= \frac{\mathfrak{z}_u^t}{1+\mathfrak{z}_u^t} - \ln(1+\mathfrak{z}_u^t),
    \end{align}
    where $\mathfrak{z}_u^t \coloneqq \frac{ \mathfrak{b}_{u}^t p_u^t}{\mathfrak{b}_1}$.
    Now, denote $\mathfrak{y}_u^t \coloneqq \frac{\mathfrak{z}_u^t}{1+\mathfrak{z}_u^t}$.
    Then, we have
    \begin{align}
        \tilde{\mathcal{F}}'(p_u^t) 
        & = \mathfrak{y}_u^t - \ln(1+\mathfrak{z}_u^t).
    \end{align}
    However, notice that $1 - \mathfrak{y}_u^t = \frac{1 + \mathfrak{z}_u^t - \mathfrak{z}_u^t}{1+\mathfrak{z}_u^t} = \frac{1}{1 + \mathfrak{z}_u^t}$.
    Thus, we have 
    \begin{align}
        \tilde{\mathcal{F}}'(p_u^t) 
        & = \mathfrak{y}_u^t - \ln(1+\mathfrak{z}_u^t) \nonumber\\
        &=\mathfrak{y}_u^t - \ln\left(\frac{1}{1 - \mathfrak{y}_u^t}\right) \nonumber\\
        &= \mathfrak{y}_u^t - \left[-\ln\left(1 - \mathfrak{y}_u^t\right) \right] \nonumber\\
        &= \mathfrak{y}_u^t -\left[\mathfrak{y}_u^t + \frac{\mathfrak{y}_u^t}{2} + \frac{\mathfrak{y}_u^t}{3}+ \dots \right] \nonumber\\
        &= - \frac{\mathfrak{y}_u^t}{2} - \frac{\mathfrak{y}_u^t}{3} -  \dots. 
    \end{align}
    Since $\mathfrak{y}_u^t$ is non-negative, $\tilde{\mathcal{F}}'(p_u^t) \leq 0$, and thus, $\mathcal{F}'(p_u^t) \leq 0$.
    Therefore, the objective function is monotonously decreasing on transmit power $p_u^t$.

    From the above analysis, we state that since this is a maximization problem and the objective function is convex and monotonically decreasing on the optimization parameter $p_u^t$, the maximum value of this function is the lowest possible $p_u^t$ that satisfies all constraint. 
    This is essentially the lower bound of $p_u^t$ to satisfy constraint \eqref{eq:power_cons}.
    Therefore, the optimal transmit power is 
    \begin{align}
        p_u^{t^*} = \frac{ \omega \xi^2 \left( 2^{\left[\frac{\Upsilon\bar{f}_u^{t^*} } { \omega \left(\mathrm{t_{th}} \bar{f}_u^{t^*} - n\bar{n} c_u s_u \kappa_u^{t} \right) } \right]} - 1 \right) } {\Xi_{u}^t \Gamma_u^t}.
    \end{align}

    We also stress that if $p_u^{t^*}> p_{u,\mathrm{max}}$ or forces $0.5 \rho n \bar{n} c_u s_u \kappa_u^{t} \left(\bar{f}_u^{t^*}\right)^2 + \frac{\Upsilon \cdot p_u^{t^*}} {\omega \log_2 \left( 1 + \frac{\Xi_{u}^t \Gamma_u^t p_u^t}{\omega \xi^2}\right) }$ to be larger than the energy budget $\mathrm{e_{bd}}$, then the problem is infeasible.
\end{proof}

\newpage \clearpage

\section{Proof of Theorem \ref{convgRate}}
\label{proofConvRate}
\acresetall
\subsection{Key Equations}
\noindent
Each client have the following local objective 
\begin{align}
\label{localObj_Sup}
    f_u^t (\mathbf{w}) \coloneqq \frac{1}{|\mathcal{D}_u^t|} \sum_{(\mathbf{x}, y) \in \mathcal{D}_u^t} l(\mathbf{w}|(\mathbf{x}, y)), 
\end{align}
where $l(\mathbf{w}|(\mathbf{x},y)$ is the loss associated to training sample $(\mathbf{x},y)$ and $\mathcal{D}_u^t$ is the available training dataset of client $u$ during global round $t$.

\bigskip
Upon receiving the global model $\mathbf{w}^t$ from the \ac{cs}, the clients synchronize their local models $\mathbf{w}_u^{t,0} \gets \mathbf{w}^{t}$ and take $\kappa_u^t \in [1, \kappa]$ number of local \ac{sgd} steps 
\begin{align}
\label{UEsLocalSGDrounds_Supp}
    \mathbf{w}_u^{t,\kappa_u^t} = \mathbf{w}_u^{t,0} - \eta_\mathrm{lo} \sum_{\tau=0}^{\kappa_u^t-1} g_u^t \left(\mathbf{w}_u^{t,\tau} \right),
\end{align}
where $\eta_\mathrm{lo}$ is the local learning rate.

\bigskip
Upon finishing the local training, the clients calculate normalized accumulated gradients as
\begin{equation}
    \mathbf{d}_u^t \coloneqq \frac{\eta_\mathrm{lo}}{\kappa_u^t}\sum_{\tau=0}^{\kappa_u^t-1} g_u^t \left(\mathbf{w}_u^{t,\tau} \right) = \frac{\mathbf{w}_u^{t,0} - \mathbf{w}_u^{t,\kappa_u^t}}{\kappa_u^t}    
\end{equation}
They then offload quantized normalized gradient updates $Q\left(\mathbf{d}_u^t\right)$, where $Q(\cdot)$ is a stochastic quantizer.
For $\mathbf{d} \in \mathbb{R}^N$ and $\mathbb{d} \neq \mathbf{0}$, this is defined as \cite{alistarh2017qsgd}  
\begin{align}
    Q(\mathbf{d}) \coloneqq \Vert \mathbf{d}\Vert_2 \cdot \mathrm{sign}(d_i) \cdot \zeta_i (\mathbf{d},s), ~\forall i \in [N],
\end{align}
where $\zeta_i(\mathbf{d},s)$ is a random variable and satisfies $\zeta_i(\mathbf{d},s)= 
\begin{cases}
 \frac{l+1}{s}, & \mathrm{w.p.~} \frac{\Vert d_i\Vert}{\Vert \mathbf{d}\Vert_2}s - l, \\
 \frac{l}{s}, & \mathrm{otherwise},
\end{cases}$. 
Besides, $s$ is the tuning parameter defining the total quantization levels
and $l \in [0,s)$ is an integer such that $\frac{\Vert d_i\Vert}{\Vert \mathbf{d}\Vert_2} \in \left[\frac{l}{s}, \frac{l+1}{s} \right]$

The \ac{cs} takes a global \ac{sgd} step with step size $\eta_\mathrm{gl}$ using the normalized accumulated gradients as 
\begin{align}
\label{globalUpdateRule_Supp}
    \mathbf{w}^{t+1} = \mathbf{w}^t - \eta_{\mathrm{gl}} \eta_{\mathrm{ef}}^t \sum_{u=0}^{U-1} \alpha_u \delta_u^t \cdot \mathrm{1}_u^t \cdot Q\left( \mathbf{d}_u^t \right)  
    = \mathbf{w}^t - \eta_{\mathrm{gl}} \eta_{\mathrm{ef}}^t \sum_{u=0}^{U-1} \alpha_u^t \cdot \mathrm{1}_u^t \cdot Q\left(\mathbf{d}_u^t\right),
\end{align}
where $\alpha_u^t \coloneqq \alpha_u \delta_u^t$, $\alpha_u = \frac{\mathrm{D}_u}{\sum_{u=0}^{U-1} \mathrm{D}_u}$, and $\delta_u^t \geq 0$ is the \emph{score} of client $u$ during time $t$.
Besides, $\mathrm{1}_u^t$ is an binary indicator function
\begin{align}
    \mathrm{1}_u^t=
    \begin{cases}
        1, & \text{with probability } v_u^t,\\
        0, & \text{otherwise},
    \end{cases}.
\end{align}

The proposed \ac{osafl} has the following surrogate global objective function.
\begin{align}
\label{globalObj_OSAFL_Supp}
    f^t \left(\mathbf{w}^t \right) \coloneqq \sum_{u=0}^{U-1} \alpha_u^t f_u^t (\mathbf{w}^t). \squad \nabla f^t \left(\mathbf{w}^t \right) &\coloneqq \sum_{u=0}^{U-1} \alpha_u^t \nabla f_u^t (\mathbf{w}^t).
\end{align}

\subsection{Key Assumptions}
\setcounter{Assumption}{0}
\noindent
We make the following standard assumptions \cite{wang2020Tackling, yang22Anarchic, ye23fedDisco, pervej2023resource, pervej2024hierarchical} that are needed for the theoretical analysis. 

\begin{Assumption}
    The client participation indicator function $\mathrm{1}_u^t$ is independent across different global rounds and across different users due to time-varying wireless channels, and is also independent of mini-batch sampling in \ac{sgd}.
\end{Assumption}
\begin{Assumption}[Smoothness]
    The local loss functions are $\beta$-Lipschitz smooth. That is, for some $\beta > 0$, $\Vert \nabla f_u^t \left(\mathbf{w} \right) - \nabla f_u^t \left(\mathbf{w}' \right) \Vert \leq \beta \Vert \mathbf{w} - \mathbf{w}' \Vert$, for all $\mathbf{w}$, $\mathbf{w}' \in \mathbb{R}^N$, $\mathcal{D}_u^t$, and $u \in \mathcal{U}$.
\end{Assumption}

\begin{Assumption}[Unbiased gradient with bounded variance]
    The stochastic gradient at each client is an unbiased estimate of the client's true gradient, i.e., $\mathbb{E}_{\zeta \sim \mathcal{D}_u^t} [g_u^t \left(\mathbf{w}  \right)] = \nabla f_u^t \left(\mathbf{w} \right)$, where $\mathbb{E}[\cdot]$ is the expectation operator. 
    Besides, the stochastic gradient has a bounded variance, i.e., $\mathbb{E}_{\zeta \sim \mathcal{D}_u^t} \left[\Vert g_u^t \left(\mathbf{w} \right) - \nabla f_u^t \left(\mathbf{w} \right) \Vert^2\right] \leq \sigma^2$, for some $\sigma \geq 0$ and for all $u \in \mathcal{U}$. 
\end{Assumption}

\begin{Assumption}[Bounded gradient dissimilarity]
The divergence between the local and global gradients is bounded, i.e.,
\begin{equation}
    \left\Vert \nabla f_u^t \left(\mathbf{w}  \right) - \nabla f^t \left(\mathbf{w} \right) \right \Vert^2 \leq \varpi^2,
\end{equation}
for some $\varpi \geq 0$, $\forall u \in \mathcal{U}$.
\end{Assumption}

\begin{Assumption}[Unbiased quantizer and bounded variance]
The stochastic quantizer is unbiased, i.e., $\mathbb{E}_Q[Q(\mathbf{d})] = \mathbf{d}$ and has bounded variance $\mathbb{E}_Q \left[ \left\Vert Q(\mathbf{d}) - \mathbf{d} \right\Vert^2 \right] \leq q \left\Vert \mathbf{d}\right\Vert^2$.    
\end{Assumption}

\newpage
\begin{Lemma}
\label{lemma_convrate}
If assumptions 1-5 hold, and the learning rates satisfy $\eta_\mathrm{lo} \leq \frac{1}{8\beta \kappa}$ and $\eta_\mathrm{gl}\eta_\mathrm{lo} \leq \frac{1}{\beta \eta_\mathrm{ef}^t}$, then we have the following bound
\begin{align}
    &\mathbb{E}_{\pmb{1,\zeta},Q}\left[f^{t} (\mathbf{w}^{t+1})\right] - f^t(\mathbf{w}^t) \nonumber\\
    &\leq - \frac{\eta_\mathrm{gl} \eta_\mathrm{lo} \eta_\mathrm{ef}^t}{2} \Big[1 - 4 \sum_{u=0}^{U-1} \alpha_u ( \delta_u^t (1 - v_u^t) )^2 - 48 \beta^2 \eta_\mathrm{lo}^2 \sum_{u=0}^{U-1} \alpha_u (\kappa_u^t v_u^t \delta_u^t)^2 - 6 \beta \eta_\mathrm{gl} \eta_\mathrm{lo} \eta_\mathrm{ef}^t \sum_{u=0}^{U-1} v_u^t (1+q-v_u^t) \kappa_u^t \left(\alpha_u \delta_u^t\right)^2 - \nonumber\\
    & \mquad 144 \eta_\mathrm{gl} \eta_\mathrm{ef}^t \beta^3 \eta_\mathrm{lo}^3 \sum_{u=0}^{U-1} v_u^t(1+q-v_u^t) \left(\alpha_u \delta_u^t\right)^2 \left(\kappa_u^t \right)^3 \Big] \left\Vert \nabla f^{t}(\mathbf{w}^t) \right\Vert^2 + \digamma_\sigma (\sigma, \pmb{v, \kappa^t,\delta^t}) + \digamma_\varpi (\sigma, \pmb{v,\kappa^t,\delta^t}),
\end{align}
where $\digamma_\sigma (\sigma, \pmb{v, \kappa^t,\delta^t}) \coloneqq (2+q) \beta \left(\sigma \eta_\mathrm{gl} \eta_\mathrm{lo} \eta_\mathrm{ef}^t \right)^2 \sum_{u=0}^{U-1} v_u^t \alpha_u^2 \frac{ (\delta_u^t)^2}{\kappa_u^t} + 4 \eta_\mathrm{gl} \eta_\mathrm{ef}^t \beta^2 \sigma^2 \eta_\mathrm{lo}^3 \sum_{u=0}^{U-1} \alpha_u \kappa_u^t (v_u^t\delta_u^t)^2 + 12 \sigma^2 \left(\eta_\mathrm{gl} \eta_\mathrm{ef}^t \right)^2 \beta^3 \eta_\mathrm{lo}^4  \sum_{u=0}^{U-1} v_u^t(1+q-v_u^t) \left(\alpha_u \delta_u^t \kappa_u^t \right)^2 $ and $\digamma_\varpi (\sigma, \pmb{v,\kappa^t,\delta^t}) \coloneqq 2\eta_\mathrm{gl} \eta_\mathrm{lo} \eta_\mathrm{ef}^t \varpi^2 \sum_{u=0}^{U-1} \alpha_u ( \delta_u^t (1 - v_u^t) )^2 + 24 \eta_\mathrm{gl} \eta_\mathrm{ef}^t \left(\beta \varpi \right)^2 \eta_\mathrm{lo}^3  \sum_{u=0}^{U-1} \alpha_u (v_u^t\delta_u^t)^2 \left(\kappa_u^t\right)^2 + 3 \beta \left(\eta_\mathrm{gl} \eta_\mathrm{lo} \eta_\mathrm{ef}^t \varpi\right)^2 \sum_{u=0}^{U-1} v_u^t(1+q-v_u^t) \left(\alpha_u \delta_u^t\right)^2 \kappa_u^t + 72 \left(\varpi \eta_\mathrm{gl} \eta_\mathrm{ef}^t \right)^2 \beta^3 \eta_\mathrm{lo}^4 \sum_{u=0}^{U-1} v_u^t(1+q-v_u^t) \left(\alpha_u \delta_u^t\right)^2 \left(\kappa_u^t \right)^3 $ in $(a)$.
\end{Lemma}

\begin{proof}
For convenience, we denote 
\begin{align}
    \mathbf{d}_u^t &\coloneqq \frac{\eta_{\mathrm{lo}}} {\kappa_u^t}\sum_{\tau=0}^{\kappa_u^t-1} g_u^t \left( \mathbf{w}_u^{t,\tau} \right). \\
    \tilde{\mathbf{d}}_u^t & \coloneqq \frac{\eta_{\mathrm{lo}}} {\kappa_u^t}\sum_{\tau=0}^{\kappa_u^t-1} \nabla f_u^t \left( \mathbf{w}_u^{t,\tau} \right). \label{tilde_dut}
\end{align}
From $\beta$-smoothness, we have
\begin{align}
    f^{t} (\mathbf{w}^{t+1}) - f^t(\mathbf{w}^t) 
    &\leq \left< \nabla f^t (\mathbf{w}^t), \mathbf{w}^{t+1} - \mathbf{w}^t \right> + \frac{\beta}{2} \left\Vert \mathbf{w}^{t+1} - \mathbf{w}^t \right\Vert^2.
\end{align}

Now, taking expectation over all randomness (randomness due to client participation, mini-batch sampling for the stochastic gradients, and stochastic quantization of the normalized gradients), we have the following due to the law of total expectation.
\begin{align}
\label{eq:main_conv_1}
    &\mathbb{E}_{\pmb{1,\zeta},Q}\left[f^{t} (\mathbf{w}^{t+1})\right] - f^t(\mathbf{w}^t) 
    \leq \mathbb{E}_{\pmb{1,\zeta},Q}\left[\left< \nabla f^t (\mathbf{w}^t), \mathbf{w}^{t+1} - \mathbf{w}^t \right>\right] + \frac{\beta}{2} \mathbb{E}_{\pmb{1,\zeta},Q}\left[\left\Vert \mathbf{w}^{t+1} - \mathbf{w}^t \right\Vert^2\right] \nonumber\\
    &\overset{(a)}{=} \mathbb{E}_{\pmb{1,\zeta},Q} \left[\left< \nabla f^t (\mathbf{w}^t), \mathbf{w}^{t} - \eta_\mathrm{gl} \eta_\mathrm{ef}^t \sum_{u=0}^{U-1} \alpha_u^t \cdot \mathrm{1}_u^t \cdot Q(\mathbf{d}_u^t) - \mathbf{w}^t \right>\right] + \frac{\beta}{2} \mathbb{E}_{\pmb{1,\zeta},Q}\left[ \left\Vert \mathbf{w}^{t} - \eta_\mathrm{gl} \eta_\mathrm{ef}^t \sum_{u=0}^{U-1} \alpha_u^t \cdot \mathrm{1}_u^t \cdot Q(\mathbf{d}_u^t) - \mathbf{w}^t \right\Vert^2\right] \nonumber\\
    &= \underbrace{\mathbb{E}_{\pmb{1,\zeta},Q} \left[\left< \nabla f^t (\mathbf{w}^t), - \eta_\mathrm{gl} \eta_\mathrm{ef}^t \sum_{u=0}^{U-1} \alpha_u^t \cdot \mathrm{1}_u^t \cdot Q(\mathbf{d}_u^t) \right>\right]}_{\mathrm{T_{1}}} + \underbrace{\frac{\beta \left(\eta_\mathrm{gl}\eta_\mathrm{ef}^t\right)^2}{2} \mathbb{E}_{\pmb{1,\zeta},Q} \left[\left\Vert \sum_{u=0}^{U-1} \alpha_u^t \cdot \mathrm{1}_u^t \cdot Q(\mathbf{d}_u^t) \right\Vert^2\right]}_{\mathrm{T}_2}\,
\end{align}
where we used the global model update rule, defined in (\ref{globalUpdateRule_Supp}), in step $(a)$.

We simplify $\mathrm{T}_1$ as 
\begin{align}
    \mathrm{T}_1 
    &= - \eta_\mathrm{gl} \eta_\mathrm{ef}^t \mathbb{E}_{\pmb{1,\zeta},Q} \left[\left< \nabla f^t (\mathbf{w}^t), \sum_{u=0}^{U-1} \alpha_u^t \cdot \mathrm{1}_u^t \cdot Q(\mathbf{d}_u^t) \right>\right] \nonumber \\
    &\overset{(a)}{=} - \eta_\mathrm{gl} \eta_\mathrm{ef}^t \mathbb{E}_{\pmb{1,\zeta},Q} \left[\left< \nabla f^t (\mathbf{w}^t), \sum_{u=0}^{U-1} \alpha_u^t \cdot \mathrm{1}_u^t \cdot Q(\mathbf{d}_u^t) - \sum_{u=0}^{U-1} \alpha_u^t \cdot \mathrm{1}_u^t \cdot \tilde{\mathbf{d}}_u^t + \sum_{u=0}^{U-1} \alpha_u^t \cdot \mathrm{1}_u^t \cdot \tilde{\mathbf{d}}_u^t \right>\right] \nonumber \\
    &=- \eta_\mathrm{gl} \eta_\mathrm{ef}^t \mathbb{E}_{\pmb{1,\zeta},Q} \left[\left< \nabla f^t (\mathbf{w}^t), \sum_{u=0}^{U-1} \alpha_u^t \cdot \mathrm{1}_u^t \left(Q(\mathbf{d}_u^t) - \tilde{\mathbf{d}}_u^t \right) + \sum_{u=0}^{U-1} \alpha_u^t \cdot \mathrm{1}_u^t \cdot \tilde{\mathbf{d}}_u^t \right>\right] \nonumber \\
    &= - \eta_\mathrm{gl} \eta_\mathrm{ef}^t \mathbb{E}_{\pmb{1,\zeta},Q} \left[\left< \nabla f^t (\mathbf{w}^t), \sum_{u=0}^{U-1} \alpha_u^t \cdot \mathrm{1}_u^t \left( Q(\mathbf{d}_u^t) - \tilde{\mathbf{d}}_u^t \right) \right>\right] - \eta_\mathrm{gl} \eta_\mathrm{ef}^t \mathbb{E}_{\pmb{1,\zeta},Q} \left[\left< \nabla f^t (\mathbf{w}^t), \sum_{u=0}^{U-1} \alpha_u^t \cdot \mathrm{1}_u^t \cdot \tilde{\mathbf{d}}_u^t \right>\right] \nonumber \\
    &= - \eta_\mathrm{gl} \eta_\mathrm{ef}^t \mathbb{E}_{\pmb{\zeta},Q} \left[ \mathbb{E}_{\pmb{1}|\pmb{\zeta},Q}  \left[\left< \nabla f^t (\mathbf{w}^t), \sum_{u=0}^{U-1} \alpha_u^t \cdot \mathrm{1}_u^t \left( Q(\mathbf{d}_u^t) - \tilde{\mathbf{d}}_u^t \right) \right>\right] \right] - \eta_\mathrm{gl} \eta_\mathrm{ef}^t \mathbb{E}_{\pmb{\zeta},Q} \left[ \mathbb{E}_{\pmb{1}|\pmb{\zeta},Q} \left[\left< \nabla f^t (\mathbf{w}^t), \sum_{u=0}^{U-1} \alpha_u^t \cdot \mathrm{1}_u^t \cdot \tilde{\mathbf{d}}_u^t \right>\right] \right] \nonumber \\ 
    &\overset{(b)}{=} - \eta_\mathrm{gl} \eta_\mathrm{ef}^t \mathbb{E}_{\pmb{\zeta},Q} \left[ \left< \nabla f^t (\mathbf{w}^t), \sum_{u=0}^{U-1} \alpha_u^t \cdot v_u^t \left( Q(\mathbf{d}_u^t) - \tilde{\mathbf{d}}_u^t \right) \right>\right] - \eta_\mathrm{gl} \eta_\mathrm{ef}^t \mathbb{E}_{\pmb{\zeta},Q} \left[ \left< \nabla f^t (\mathbf{w}^t), \sum_{u=0}^{U-1} \alpha_u^t \cdot v_u^t \cdot \tilde{\mathbf{d}}_u^t \right>\right] \nonumber\\
    &= - \eta_\mathrm{gl} \eta_\mathrm{ef}^t \mathbb{E}_{\pmb{\zeta}} \left[ \mathbb{E}_{Q|\pmb{\zeta}} \left[ \left< \nabla f^t (\mathbf{w}^t), \sum_{u=0}^{U-1} \alpha_u^t \cdot v_u^t \left( Q(\mathbf{d}_u^t) - \tilde{\mathbf{d}}_u^t \right) \right>\right] \right] - \eta_\mathrm{gl} \eta_\mathrm{ef}^t \mathbb{E}_{\pmb{\zeta}} \left[ \mathbb{E}_{Q|\pmb{\zeta}} \left[ \left< \nabla f^t (\mathbf{w}^t), \sum_{u=0}^{U-1} \alpha_u^t \cdot v_u^t \cdot \tilde{\mathbf{d}}_u^t \right>\right] \right] \nonumber\\
    &\overset{(c)}{=} - \eta_\mathrm{gl} \eta_\mathrm{ef}^t \mathbb{E}_{\pmb{\zeta}} \left[ \left< \nabla f^t (\mathbf{w}^t), \sum_{u=0}^{U-1} \alpha_u^t \cdot v_u^t \left( \mathbf{d}_u^t - \tilde{\mathbf{d}}_u^t \right) \right> \right] - \eta_\mathrm{gl} \eta_\mathrm{ef}^t \mathbb{E}_{\pmb{\zeta}} \left[ \left< \nabla f^t (\mathbf{w}^t), \sum_{u=0}^{U-1} \alpha_u^t \cdot v_u^t \cdot \tilde{\mathbf{d}}_u^t \right> \right] \nonumber\\
    &= - \eta_\mathrm{gl} \eta_\mathrm{ef}^t \mathbb{E}_{\pmb{\zeta}} \left[\left< \nabla f^t (\mathbf{w}^t), \sum_{u=0}^{U-1} \alpha_u^t v_u^t \left(\frac{\eta_\mathrm{lo}} {\kappa_u^t} \sum_{\tau=0}^{\kappa_u^t-1} g_u^t \left( \mathbf{w}_u^{t,\tau} \right) - \frac{\eta_\mathrm{lo}} {\kappa_u^t}\sum_{\tau=0}^{\kappa_u^t-1} \nabla f_u^t \left( \mathbf{w}_u^{t,\tau} \right) \right) \right>\right] - \nonumber\\
    &\bquad \bquad \mquad \eta_\mathrm{gl} \eta_\mathrm{ef}^t \mathbb{E}_{\pmb{\zeta}} \left[\left< \nabla f^t (\mathbf{w}^t), \sum_{u=0}^{U-1} \alpha_u^t v_u^t \frac{\eta_\mathrm{lo}} {\kappa_u^t}\sum_{\tau=0}^{\kappa_u^t-1} \nabla f_u^t \left( \mathbf{w}_u^{t,\tau} \right) \right>\right] \nonumber \\
    &\overset{(d)}{=} - \eta_\mathrm{gl} \eta_\mathrm{lo} \eta_\mathrm{ef}^t \bblue{\mathbb{E}_{\pmb{\zeta}}} \left[\left< \nabla f^t (\mathbf{w}^t), \sum_{u=0}^{U-1} v_u^t \alpha_u^t \frac{1} {\kappa_u^t}\sum_{\tau=0}^{\kappa_u^t-1} \nabla f_u^t \left( \mathbf{w}_u^{t,\tau} \right) \right> \right] \nonumber \\
    &=- \eta_\mathrm{gl} \eta_\mathrm{lo} \eta_\mathrm{ef}^t \bblue{\mathbb{E}_{\pmb{\zeta}}} \left[\left< \nabla f^t (\mathbf{w}^t), \sum_{u=0}^{U-1} \alpha_u v_u^t \left(\frac{\delta_u^t} {\kappa_u^t} \right)\sum_{\tau=0}^{\kappa_u^t-1} \nabla f_u^t \left( \mathbf{w}_u^{t,\tau} \right) \right> \right] \nonumber \\
    &\overset{(e)}{=} - \frac{\eta_\mathrm{gl} \eta_\mathrm{lo} \eta_\mathrm{ef}^t}{2} \bblue{\mathbb{E}_{\pmb{\zeta}}} \left[\left\Vert \nabla f^t (\mathbf{w}^t) \right\Vert^2 \right] - \frac{\eta_\mathrm{gl} \eta_\mathrm{lo} \eta_\mathrm{ef}^t}{2} \bblue{\mathbb{E}_{\pmb{\zeta}}} \left[\left\Vert \sum_{u=0}^{U-1} \alpha_u v_u^t \left( \frac{\delta_u^t} {\kappa_u^t} \right) \sum_{\tau=0}^{\kappa_u^t-1} \nabla f_u^t \left( \mathbf{w}_u^{t,\tau} \right) \right\Vert^2 \right] + \nonumber\\
    &\bquad \bquad \frac{\eta_\mathrm{gl} \eta_\mathrm{lo} \eta_\mathrm{ef}^t}{2} \bblue{\mathbb{E}_{\pmb{\zeta}}} \left[\left\Vert \nabla f^t (\mathbf{w}^t) - \sum_{u=0}^{U-1} \alpha_u v_u^t  \left( \frac{\delta_u^t} {\kappa_u^t} \right) \sum_{\tau=0}^{\kappa_u^t-1} \nabla f_u^t \left( \mathbf{w}_u^{t,\tau} \right) \right\Vert^2\right], 
\end{align}
where $\tilde{\mathbf{d}}_u^t$ in $(a)$ is defined in (\ref{tilde_dut}), $(b)$ uses the fact that $\mathbb{E}[\mathrm{1}_u^t]=v_u^t$, $(c)$ comes from the unbiased stochastic quantizer assumption, $(d)$ stems from the unbiased stochastic gradient assumption, and finally, $(e)$ is true since sl$\Vert \mathbf{x} - \mathbf{y}\Vert^2 = \Vert \mathbf{x} \Vert^2 + \Vert \mathbf{y} \Vert^2 - 2<\mathbf{x}, \mathbf{y}>$.

We further simplify $\mathrm{T_2}$ as follows,
\begin{align}
    \mathrm{T}_2 
    &= \frac{\beta \left(\eta_\mathrm{gl} \eta_\mathrm{ef}^t \right)^2}{2} \mathbb{E}_{\pmb{\mathrm{1},\zeta},Q} \left[\left\Vert \sum_{u=0}^{U-1} \alpha_u^t \cdot \mathrm{1}_u^t \cdot Q(\mathbf{d}_u^t) \right\Vert^2\right] \nonumber\\ 
    &\overset{(a)}{=} \frac{\beta \left(\eta_\mathrm{gl} \eta_\mathrm{ef}^t \right)^2}{2} \mathbb{E}_{\pmb{\mathrm{1},\zeta},Q} \left[\left\Vert \sum_{u=0}^{U-1} \alpha_u^t \cdot \mathrm{1}_u^t \cdot Q(\mathbf{d}_u^t) - \mathbb{E}_{\pmb{\mathrm{1},\zeta},Q} \left[\sum_{u=0}^{U-1} \alpha_u^t \cdot \mathrm{1}_u^t \cdot Q(\mathbf{d}_u^t)\right] \right\Vert^2\right] + \frac{\beta \left(\eta_\mathrm{gl} \eta_\mathrm{ef}^t \right)^2}{2} \left(\mathbb{E}_{\pmb{\mathrm{1},\zeta},Q} \left[\sum_{u=0}^{U-1} \alpha_u^t \cdot \mathrm{1}_u^t \cdot Q(\mathbf{d}_u^t)\right]\right)^2\nonumber\\
    &= \frac{\beta \left(\eta_\mathrm{gl} \eta_\mathrm{ef}^t \right)^2}{2} \mathbb{E}_{\pmb{\mathrm{1},\zeta},Q} \left[\left\Vert \sum_{u=0}^{U-1} \alpha_u^t \cdot \mathrm{1}_u^t \cdot Q(\mathbf{d}_u^t) - \sum_{u=0}^{U-1} \alpha_u^t v_u^t \tilde{\mathbf{d}}_u^t  \right\Vert^2\right] + \frac{\beta \left(\eta_\mathrm{gl} \eta_\mathrm{ef}^t \right)^2}{2} \left\Vert \sum_{u=0}^{U-1} \alpha_u^t v_u^t \tilde{\mathbf{d}}_u^t)\right\Vert^2\nonumber\\
    &\overset{(b)}{=} \frac{\beta \left(\eta_\mathrm{gl} \eta_\mathrm{ef}^t \right)^2}{2}  \sum_{u=0}^{U-1} (\alpha_u^t)^2 \mathbb{E}_{\pmb{\mathrm{1},\zeta},Q} \left[\left\Vert \mathrm{1}_u^t \cdot Q(\mathbf{d}_u^t) - v_u^t \tilde{\mathbf{d}}_u^t  \right\Vert^2\right] + \frac{\beta \left(\eta_\mathrm{gl} \eta_\mathrm{ef}^t \right)^2}{2} \left\Vert \sum_{u=0}^{U-1} \alpha_u^t v_u^t \tilde{\mathbf{d}}_u^t)\right\Vert^2\nonumber\\
    &=\frac{\beta \left(\eta_\mathrm{gl} \eta_\mathrm{ef}^t \right)^2}{2}  \sum_{u=0}^{U-1} (\alpha_u^t)^2 \mathbb{E}_{\pmb{\mathrm{1},\zeta},Q} \left[\left\Vert \mathrm{1}_u^t \cdot \left( Q(\mathbf{d}_u^t) - \mathbf{d}_u^t\right) + \mathrm{1}_u^t \mathbf{d}_u^t - v_u^t \tilde{\mathbf{d}}_u^t  \right\Vert^2\right] + \frac{\beta \left(\eta_\mathrm{gl} \eta_\mathrm{ef}^t \right)^2}{2} \left\Vert \sum_{u=0}^{U-1} \alpha_u^t v_u^t \tilde{\mathbf{d}}_u^t)\right\Vert^2\nonumber\\
    &\leq \beta \left(\eta_\mathrm{gl} \eta_\mathrm{ef}^t \right)^2  \sum_{u=0}^{U-1} (\alpha_u^t)^2 \mathbb{E}_{\pmb{\mathrm{1},\zeta},Q} \left[\left\Vert \mathrm{1}_u^t \cdot \left( Q(\mathbf{d}_u^t) - \mathbf{d}_u^t\right) \right\Vert^2 \right] + \beta \left(\eta_\mathrm{gl} \eta_\mathrm{ef}^t \right)^2  \sum_{u=0}^{U-1} (\alpha_u^t)^2 \mathbb{E}_{\pmb{\mathrm{1},\zeta},Q} \left[ \left\Vert \mathrm{1}_u^t \mathbf{d}_u^t - v_u^t \tilde{\mathbf{d}}_u^t  \right\Vert^2\right] + \nonumber\\
    &\bquad \frac{\beta \left(\eta_\mathrm{gl} \eta_\mathrm{ef}^t \right)^2}{2} \left\Vert \sum_{u=0}^{U-1} \alpha_u^t v_u^t \tilde{\mathbf{d}}_u^t)\right\Vert^2\nonumber\\
    &= \beta \left(\eta_\mathrm{gl} \eta_\mathrm{ef}^t \right)^2  \sum_{u=0}^{U-1} (\alpha_u^t)^2 v_u^t \mathbb{E}_{\pmb{\zeta}} \left[ \mathbb{E}_{Q|\pmb{\zeta}} \left[ \left\Vert Q(\mathbf{d}_u^t) - \mathbf{d}_u^t \right\Vert^2 \right] \right] + \beta \left(\eta_\mathrm{gl} \eta_\mathrm{ef}^t \right)^2  \sum_{u=0}^{U-1} (\alpha_u^t)^2 \mathbb{E}_{\pmb{\mathrm{1},\zeta}} \left[ \left\Vert \mathrm{1}_u^t( \mathbf{d}_u^t - \tilde{\mathbf{d}}_u^t) + (\mathrm{1}_u^t - v_u^t) \tilde{\mathbf{d}}_u^t  \right\Vert^2\right] + \nonumber\\
    &\bquad \frac{\beta \left(\eta_\mathrm{gl} \eta_\mathrm{ef}^t \right)^2}{2} \left\Vert \sum_{u=0}^{U-1} \alpha_u^t v_u^t \tilde{\mathbf{d}}_u^t)\right\Vert^2\nonumber\\
    &\overset{(c)}{\leq} \beta \left(\eta_\mathrm{gl} \eta_\mathrm{ef}^t \right)^2  \sum_{u=0}^{U-1} (\alpha_u^t)^2 v_u^t \mathbb{E}_{\pmb{\zeta}} \left[q \left\Vert \mathbf{d}_u^t \right\Vert^2  \right] + 2\beta \left(\eta_\mathrm{gl} \eta_\mathrm{ef}^t \right)^2  \sum_{u=0}^{U-1} (\alpha_u^t)^2 \mathbb{E}_{\pmb{\mathrm{1},\zeta}} \left[ \left\Vert \mathrm{1}_u^t( \mathbf{d}_u^t - \tilde{\mathbf{d}}_u^t) \right\Vert^2 \right] + \nonumber\\
    & \bquad 2\beta \left(\eta_\mathrm{gl} \eta_\mathrm{ef}^t \right)^2  \sum_{u=0}^{U-1} (\alpha_u^t)^2 \mathbb{E}_{\pmb{\mathrm{1},\zeta}} \left[ \left\Vert (\mathrm{1}_u^t - v_u^t) \tilde{\mathbf{d}}_u^t  \right\Vert^2\right] + \frac{\beta \left(\eta_\mathrm{gl} \eta_\mathrm{ef}^t \right)^2}{2} \left\Vert \sum_{u=0}^{U-1} \alpha_u^t v_u^t \tilde{\mathbf{d}}_u^t)\right\Vert^2\nonumber\\
    &= \beta \left(\eta_\mathrm{gl} \eta_\mathrm{ef}^t \right)^2  \sum_{u=0}^{U-1} (\alpha_u^t)^2 v_u^t \mathbb{E}_{\pmb{\zeta}} \left[q \left\Vert \mathbf{d}_u^t \right\Vert^2  \right] + 2\beta \left(\eta_\mathrm{gl} \eta_\mathrm{ef}^t \right)^2  \sum_{u=0}^{U-1} (\alpha_u^t)^2 \left\{ v_u^t \mathbb{E}_{\zeta} \left[ \left\Vert 1 ( \mathbf{d}_u^t - \tilde{\mathbf{d}}_u^t) \right\Vert^2 \right] + (1-v_u^t) \mathbb{E}_{\zeta} \left[ \left\Vert 0 ( \mathbf{d}_u^t - \tilde{\mathbf{d}}_u^t) \right\Vert^2 \right] \right\}  + \nonumber\\
    & \mquad 2\beta \left(\eta_\mathrm{gl} \eta_\mathrm{ef}^t \right)^2  \sum_{u=0}^{U-1} (\alpha_u^t)^2 \left\{ v_u^t \mathbb{E}_{\pmb{\zeta}} \left[ \left\Vert (1 - v_u^t) \tilde{\mathbf{d}}_u^t  \right\Vert^2\right] + (1-v_u^t) \mathbb{E}_{\pmb{\zeta}} \left[ \left\Vert (0 - v_u^t) \tilde{\mathbf{d}}_u^t  \right\Vert^2\right] \right\} + \frac{\beta \left(\eta_\mathrm{gl} \eta_\mathrm{ef}^t \right)^2}{2} \left\Vert \sum_{u=0}^{U-1} \alpha_u^t v_u^t \tilde{\mathbf{d}}_u^t)\right\Vert^2 \nonumber\\ 
    &= q\beta \left(\eta_\mathrm{gl} \eta_\mathrm{ef}^t \right)^2  \sum_{u=0}^{U-1} (\alpha_u^t)^2 v_u^t \mathbb{E}_{\pmb{\zeta}} \left[ \left\Vert \mathbf{d}_u^t \right\Vert^2 \right] + 2\beta \left(\eta_\mathrm{gl} \eta_\mathrm{ef}^t \right)^2  \sum_{u=0}^{U-1} v_u^t (\alpha_u^t)^2 \mathbb{E}_{\pmb{\zeta}} \left[ \left\Vert \mathbf{d}_u^t - \tilde{\mathbf{d}}_u^t \right\Vert^2 \right] + \nonumber\\
    & \bquad 2\beta \left(\eta_\mathrm{gl} \eta_\mathrm{ef}^t \right)^2  \sum_{u=0}^{U-1} v_u^t (\alpha_u^t)^2 \mathbb{E}_{\pmb{\zeta}} \left[ \left\Vert (1 - v_u^t) \tilde{\mathbf{d}}_u^t  \right\Vert^2\right] + 2\beta \left(\eta_\mathrm{gl} \eta_\mathrm{ef}^t \right)^2  \sum_{u=0}^{U-1} (1 - v_u^t) (\alpha_u^t)^2 \mathbb{E}_{\pmb{\zeta}} \left[ \left\Vert v_u^t \tilde{\mathbf{d}}_u^t  \right\Vert^2\right] + \nonumber\\
    &\bquad \frac{\beta \left(\eta_\mathrm{gl} \eta_\mathrm{ef}^t \right)^2}{2} \left\Vert \sum_{u=0}^{U-1} \alpha_u^t v_u^t \tilde{\mathbf{d}}_u^t)\right\Vert^2\nonumber\\
    &= q \beta \left(\eta_\mathrm{gl} \eta_\mathrm{ef}^t \right)^2  \sum_{u=0}^{U-1} (\alpha_u^t)^2 v_u^t \mathbb{E}_{\pmb{\zeta}} \left[ \left\Vert \mathbf{d}_u^t - \mathbb{E}_{\pmb{\zeta}} \left[\mathbf{d}_u^t \right] \right\Vert^2 \right] + q \beta \left(\eta_\mathrm{gl} \eta_\mathrm{ef}^t \right)^2  \sum_{u=0}^{U-1} (\alpha_u^t)^2 v_u^t \left(\mathbb{E}_{\pmb{\zeta}} [\mathbf{d}_u^t] \right)^2 + \nonumber\\
    &\bquad 2\beta \left(\eta_\mathrm{gl} \eta_\mathrm{ef}^t \right)^2  \sum_{u=0}^{U-1} v_u^t (\alpha_u^t)^2 \mathbb{E}_{\pmb{\zeta}} \left[ \left\Vert \frac{\eta_\mathrm{lo}}{\kappa_u^t} \sum_{\tau=0}^{\kappa_u^t-1} \left[ g_u^t (\mathbf{w}_u^{t,\tau}) - \nabla f_u^t (\mathbf{w}_u^{t,\tau}) \right] \right\Vert^2 \right] + \nonumber\\
    & \bquad 2\beta \left(\eta_\mathrm{gl} \eta_\mathrm{ef}^t \right)^2  \sum_{u=0}^{U-1} v_u^t (1-v_u^t)^2 (\alpha_u^t)^2 \mathbb{E}_{\pmb{\zeta}} \left[ \left\Vert \tilde{\mathbf{d}}_u^t  \right\Vert^2\right] + 2\beta \left(\eta_\mathrm{gl} \eta_\mathrm{ef}^t \right)^2  \sum_{u=0}^{U-1} (1 - v_u^t) (v_u^t \alpha_u^t)^2 \mathbb{E}_{\pmb{\zeta}} \left[ \left\Vert \tilde{\mathbf{d}}_u^t  \right\Vert^2\right] + \nonumber\\
    &\bquad \frac{\beta \left(\eta_\mathrm{gl} \eta_\mathrm{ef}^t \right)^2}{2} \left\Vert \sum_{u=0}^{U-1} \alpha_u^t v_u^t \tilde{\mathbf{d}}_u^t)\right\Vert^2\nonumber\\
    &= q \beta \left(\eta_\mathrm{gl} \eta_\mathrm{ef}^t \right)^2  \sum_{u=0}^{U-1} (\alpha_u^t)^2 v_u^t \mathbb{E}_{\pmb{\zeta}} \left[ \left\Vert \frac{\eta_\mathrm{lo}}{\kappa_u^t} \sum_{\tau=0}^{\kappa_u^t-1} \left[ g_u^t (\mathbf{w}_u^{t,\tau}) - \nabla f_u^t (\mathbf{w}_u^{t,\tau} \right] \right\Vert^2 \right] + q \beta \left(\eta_\mathrm{gl} \eta_\mathrm{ef}^t \right)^2  \sum_{u=0}^{U-1} (\alpha_u^t)^2 v_u^t \left\Vert \tilde{\mathbf{d}}_u^t \right\Vert^2 + \nonumber\\
    &\bquad 2\beta \left(\eta_\mathrm{gl} \eta_\mathrm{lo} \eta_\mathrm{ef}^t \right)^2 \sum_{u=0}^{U-1} \frac{v_u^t (\alpha_u^t)^2}{(\kappa_u^t)^2} \sum_{\tau=0}^{\kappa_u^t-1} \mathbb{E}_{\pmb{\zeta}} \left[ \left\Vert  g_u^t (\mathbf{w}_u^{t,\tau}) - \nabla f_u^t (\mathbf{w}_u^{t,\tau}) \right\Vert^2 \right] + \nonumber\\
    & \bquad 2\beta \left(\eta_\mathrm{gl} \eta_\mathrm{ef}^t \right)^2  \sum_{u=0}^{U-1} v_u^t (1-v_u^t) (\alpha_u^t)^2 \mathbb{E}_{\pmb{\zeta}} \left[ \left\Vert \tilde{\mathbf{d}}_u^t  \right\Vert^2\right] + \frac{\beta \left(\eta_\mathrm{gl} \eta_\mathrm{ef}^t \right)^2}{2} \left\Vert \sum_{u=0}^{U-1} \alpha_u^t v_u^t \tilde{\mathbf{d}}_u^t\right\Vert^2\nonumber\\
    &\overset{(d)}{\leq} q \beta \left(\eta_\mathrm{gl} \eta_\mathrm{lo} \eta_\mathrm{ef}^t \right)^2 \sum_{u=0}^{U-1} \frac{(\alpha_u^t)^2 v_u^t}{(\kappa_u^t)^2}  \sum_{\tau=0}^{\kappa_u^t-1} \sigma^2 + q \beta \left(\eta_\mathrm{gl} \eta_\mathrm{ef}^t \right)^2  \sum_{u=0}^{U-1} (\alpha_u^t)^2 v_u^t \left\Vert \tilde{\mathbf{d}}_u^t \right\Vert^2 + 2\beta \left(\eta_\mathrm{gl} \eta_\mathrm{lo} \eta_\mathrm{ef}^t \right)^2 \sum_{u=0}^{U-1} \frac{v_u^t (\alpha_u^t)^2}{(\kappa_u^t)^2} \sum_{\tau=0}^{\kappa_u^t-1} \sigma^2 + \nonumber\\
    & \bquad 2\beta \left(\eta_\mathrm{gl} \eta_\mathrm{ef}^t \right)^2  \sum_{u=0}^{U-1} v_u^t (1-v_u^t) (\alpha_u^t)^2 \bblue{\mathbb{E}_{\pmb{\zeta}}} \left[ \left\Vert \tilde{\mathbf{d}}_u^t \right\Vert^2 \right] + \frac{\beta \left(\eta_\mathrm{gl} \eta_\mathrm{ef}^t \right)^2}{2} \left\Vert \sum_{u=0}^{U-1} \alpha_u^t v_u^t \tilde{\mathbf{d}}_u^t\right\Vert^2\nonumber\\
    &=(2+q) \beta \left(\sigma \eta_\mathrm{gl} \eta_\mathrm{lo} \eta_\mathrm{ef}^t \right)^2 \sum_{u=0}^{U-1} \frac{(\alpha_u^t)^2 v_u^t}{\kappa_u^t} + \beta \left(\eta_\mathrm{gl} \eta_\mathrm{ef}^t \right)^2  \sum_{u=0}^{U-1} (\alpha_u^t)^2 v_u^t(1+q-v_u^t) \bblue{\mathbb{E}_{\pmb{\zeta}}} \left[\left\Vert \tilde{\mathbf{d}}_u^t \right\Vert^2 \right] + \frac{\beta \left(\eta_\mathrm{gl} \eta_\mathrm{ef}^t \right)^2}{2} \left\Vert \sum_{u=0}^{U-1} \alpha_u^t v_u^t \tilde{\mathbf{d}}_u^t\right\Vert^2 \nonumber\\
    &=(2+q) \beta \left(\sigma \eta_\mathrm{gl} \eta_\mathrm{lo} \eta_\mathrm{ef}^t \right)^2 \sum_{u=0}^{U-1} \frac{(\alpha_u^t)^2 v_u^t}{\kappa_u^t} + \beta \left(\eta_\mathrm{gl} \eta_\mathrm{ef}^t \right)^2 \sum_{u=0}^{U-1} v_u^t(1+q-v_u^t) (\alpha_u \delta_u^t)^2 \bblue{\mathbb{E}_{\pmb{\zeta}}} \left[ \left\Vert \frac{\eta_\mathrm{lo}} {\kappa_u^t} \sum_{\tau=0}^{\kappa_u^t-1} \nabla f_u^t (\mathbf{w}_u^{t,\tau}) \right\Vert^2 \right] + \nonumber\\
    &\bquad \frac{\beta \left(\eta_\mathrm{gl} \eta_\mathrm{ef}^t \right)^2}{2} \left\Vert \sum_{u=0}^{U-1} \alpha_u \delta_u^t v_u^t \frac{\eta_\mathrm{lo}}{\kappa_u^t} \sum_{\tau=0}^{\kappa_u^t-1} \nabla f_u^t (\mathbf{w}_u^{t,\tau}) \right\Vert^2 \nonumber\\
    &= (2+q) \beta \left(\sigma \eta_\mathrm{gl} \eta_\mathrm{lo} \eta_\mathrm{ef}^t \right)^2 \sum_{u=0}^{U-1} v_u^t \alpha_u^2 \frac{ (\delta_u^t)^2}{\kappa_u^t} + \beta \left(\eta_\mathrm{gl} \eta_\mathrm{lo} \eta_\mathrm{ef}^t \right)^2  \sum_{u=0}^{U-1} v_u^t(1+q-v_u^t) \left(\frac{\alpha_u \delta_u^t}{\kappa_u^t}\right)^2 \bblue{\mathbb{E}_{\pmb{\zeta}}} \left[ \left\Vert \sum_{\tau=0}^{\kappa_u^t-1} \nabla f_u^t (\mathbf{w}_u^{t,\tau}) \right\Vert^2 \right] + \nonumber\\
    &\bquad \frac{\beta \left(\eta_\mathrm{gl} \eta_\mathrm{lo} \eta_\mathrm{ef}^t \right)^2}{2} \left\Vert \sum_{u=0}^{U-1} \alpha_u v_u^t \left(\frac{\delta_u^t} {\kappa_u^t} \right) \sum_{\tau=0}^{\kappa_u^t-1} \nabla f_u^t (\mathbf{w}_u^{t,\tau}) \right\Vert^2,
\end{align}
where $(a)$ comes from the definition of variance. 
Besides, $(b)$ stems due to the cross-product term becoming zero in expectation.
In $(c)$, we used the bounded variance of the stochastic quantizer assumption, whereas, the bounded variance of the stochastic gradient assumptions were used in $(d)$.

Now, plugging $\mathrm{T}_1$ and $\mathrm{T}_2$ into (\ref{eq:main_conv_1}), we get
\begin{align}
\label{eq:main_conv_1_1} 
    &\mathbb{E}_{\pmb{1,\zeta},Q}\left[f^{t} (\mathbf{w}^{t+1})\right] - f^t(\mathbf{w}^t) 
    \leq - \frac{\eta_\mathrm{gl} \eta_\mathrm{lo} \eta_\mathrm{ef}^t}{2} \left\Vert \nabla f^t (\mathbf{w}^t) \right\Vert^2 - \frac{\eta_\mathrm{gl} \eta_\mathrm{lo} \eta_\mathrm{ef}^t}{2} \bblue{\mathbb{E}_{\pmb{\zeta}}} \left[\left\Vert \sum_{u=0}^{U-1} \alpha_u v_u^t \left( \frac{\delta_u^t} {\kappa_u^t} \right) \sum_{\tau=0}^{\kappa_u^t-1} \nabla f_u^t \left( \mathbf{w}_u^{t,\tau} \right) \right\Vert^2 \right] + \nonumber\\
    &\mquad \frac{\eta_\mathrm{gl} \eta_\mathrm{lo} \eta_\mathrm{ef}^t}{2} \bblue{\mathbb{E}_{\pmb{\zeta}}} \left[\left\Vert \nabla f^t (\mathbf{w}^t) - \sum_{u=0}^{U-1} \alpha_u v_u^t  \left( \frac{\delta_u^t} {\kappa_u^t} \right) \sum_{\tau=0}^{\kappa_u^t-1} \nabla f_u^t \left( \mathbf{w}_u^{t,\tau} \right) \right\Vert^2 \right] + (2+q) \beta \left(\sigma \eta_\mathrm{gl} \eta_\mathrm{lo} \eta_\mathrm{ef}^t \right)^2 \sum_{u=0}^{U-1} v_u^t \alpha_u^2 \frac{ (\delta_u^t)^2}{\kappa_u^t} + \nonumber\\
    &\mquad \beta \left(\eta_\mathrm{gl} \eta_\mathrm{lo} \eta_\mathrm{ef}^t \right)^2  \sum_{u=0}^{U-1} v_u^t(1+q-v_u^t) \left(\frac{\alpha_u \delta_u^t}{\kappa_u^t}\right)^2 \bblue{\mathbb{E}_{\pmb{\zeta}}} \left[ \left\Vert \sum_{\tau=0}^{\kappa_u^t-1} \nabla f_u^t (\mathbf{w}_u^{t,\tau}) \right\Vert^2 \right] + \nonumber\\
    &\mquad \frac{\beta \left(\eta_\mathrm{gl} \eta_\mathrm{lo} \eta_\mathrm{ef}^t \right)^2}{2} \left\Vert \sum_{u=0}^{U-1} \alpha_u v_u^t \left(\frac{\delta_u^t} {\kappa_u^t} \right) \sum_{\tau=0}^{\kappa_u^t-1} \nabla f_u^t (\mathbf{w}_u^{t,\tau}) \right\Vert^2 \nonumber\\
    &= (2+q) \beta \left(\sigma \eta_\mathrm{gl} \eta_\mathrm{lo} \eta_\mathrm{ef}^t \right)^2 \sum_{u=0}^{U-1} v_u^t \alpha_u^2 \frac{ (\delta_u^t)^2}{\kappa_u^t} - \frac{\eta_\mathrm{gl} \eta_\mathrm{lo} \eta_\mathrm{ef}^t}{2} \left\Vert \nabla f^t (\mathbf{w}^t) \right\Vert^2 + \nonumber\\
    &\bquad \frac{\eta_\mathrm{gl} \eta_\mathrm{lo} \eta_\mathrm{ef}^t}{2} \bblue{\mathbb{E}_{\pmb{\zeta}}} \left[\left\Vert \nabla f^t (\mathbf{w}^t) - \sum_{u=0}^{U-1} \alpha_u v_u^t  \left( \frac{\delta_u^t} {\kappa_u^t} \right) \sum_{\tau=0}^{\kappa_u^t-1} \nabla f_u^t \left( \mathbf{w}_u^{t,\tau} \right) \right\Vert^2 \right] +  \nonumber\\
    &\bquad \beta \left(\eta_\mathrm{gl} \eta_\mathrm{lo} \eta_\mathrm{ef}^t \right)^2 \sum_{u=0}^{U-1} v_u^t(1+q-v_u^t) \left(\frac{\alpha_u \delta_u^t}{\kappa_u^t}\right)^2 \bblue{\mathbb{E}_{\pmb{\zeta}}} \left[ \left\Vert \sum_{\tau=0}^{\kappa_u^t-1} \nabla f_u^t (\mathbf{w}_u^{t,\tau}) \right\Vert^2 \right] \nonumber\\
    &\bquad - \frac{\eta_\mathrm{gl} \eta_\mathrm{lo} \eta_\mathrm{ef}^t}{2} (1 - \beta \eta_\mathrm{gl} \eta_\mathrm{lo} \eta_\mathrm{ef}^t ) \left\Vert \sum_{u=0}^{U-1} \alpha_u v_u^t \left( \frac{\delta_u^t} {\kappa_u^t} \right) \sum_{\tau=0}^{\kappa_u^t-1} \nabla f_u^t \left( \mathbf{w}_u^{t,\tau} \right) \right\Vert^2,
\end{align}
where, in the last term, we get $(1 -\beta\eta_\mathrm{gl}\eta_\mathrm{lo} \eta_\mathrm{ef}^t) \geq 0$, when $\eta_\mathrm{gl} \eta_\mathrm{lo} \leq \frac{1}{\beta \eta_\mathrm{ef}^t}$, which makes this term negative.
Thus, we drop this and re-write the bound as
\begin{align}
\label{eq:main_conv_1_1_0} 
    \mathbb{E}_{\pmb{1,\zeta},Q}\left[f^{t} (\mathbf{w}^{t+1})\right] - f^t(\mathbf{w}^t) 
    &\leq (2+q) \beta \left(\sigma \eta_\mathrm{gl} \eta_\mathrm{lo} \eta_\mathrm{ef}^t \right)^2 \sum_{u=0}^{U-1} v_u^t \alpha_u^2 \frac{ (\delta_u^t)^2}{\kappa_u^t} - \frac{\eta_\mathrm{gl} \eta_\mathrm{lo} \eta_\mathrm{ef}^t}{2} \left\Vert \nabla f^t (\mathbf{w}^t) \right\Vert^2 + \nonumber\\
    &\frac{\eta_\mathrm{gl} \eta_\mathrm{lo} \eta_\mathrm{ef}^t}{2} \bblue{\mathbb{E}_{\pmb{\zeta}}} \left[\left\Vert \nabla f^t (\mathbf{w}^t) - \sum_{u=0}^{U-1} \alpha_u v_u^t  \left( \frac{\delta_u^t} {\kappa_u^t} \right) \sum_{\tau=0}^{\kappa_u^t-1} \nabla f_u^t \left( \mathbf{w}_u^{t,\tau} \right) \right\Vert^2 \right] +  \nonumber\\
    &\beta \left(\eta_\mathrm{gl} \eta_\mathrm{lo} \eta_\mathrm{ef}^t \right)^2  \sum_{u=0}^{U-1} v_u^t(1+q-v_u^t) \left(\frac{\alpha_u \delta_u^t}{\kappa_u^t}\right)^2 \bblue{\mathbb{E}_{\pmb{\zeta}}} \left[ \left\Vert \sum_{\tau=0}^{\kappa_u^t-1} \nabla f_u^t (\mathbf{w}_u^{t,\tau}) \right\Vert^2 \right] \nonumber\\
    &\leq (2+q) \beta \left(\sigma \eta_\mathrm{gl} \eta_\mathrm{lo} \eta_\mathrm{ef}^t \right)^2 \sum_{u=0}^{U-1} v_u^t \alpha_u^2 \frac{ (\delta_u^t)^2}{\kappa_u^t} - \frac{\eta_\mathrm{gl} \eta_\mathrm{lo} \eta_\mathrm{ef}^t}{2} \left\Vert \nabla f^t (\mathbf{w}^t) \right\Vert^2 + \nonumber\\
    &\underbrace{\frac{\eta_\mathrm{gl} \eta_\mathrm{lo} \eta_\mathrm{ef}^t}{2} \bblue{\mathbb{E}_{\pmb{\zeta}}} \left[\left\Vert \nabla f^t (\mathbf{w}^t) - \sum_{u=0}^{U-1} \alpha_u v_u^t  \left( \frac{\delta_u^t} {\kappa_u^t} \right) \sum_{\tau=0}^{\kappa_u^t-1} \nabla f_u^t \left( \mathbf{w}_u^{t,\tau} \right) \right\Vert^2 \right]}_\mathrm{T_3} +  \nonumber\\
    &\beta \left(\eta_\mathrm{gl} \eta_\mathrm{lo} \eta_\mathrm{ef}^t \right)^2  \sum_{u=0}^{U-1} v_u^t(1+q-v_u^t) \left(\alpha_u \delta_u^t\right)^2 \sum_{\tau=0}^{\kappa_u^t-1} \underbrace{\bblue{\mathbb{E}_{\pmb{\zeta}}} \left[ \left\Vert \nabla f_u^t (\mathbf{w}_u^{t,\tau}) \right\Vert^2 \right]}_\mathrm{T_4},
\end{align}
where the last inequality is true since $\Vert \sum_{i=0}^{I-1} \mathbf{x}_i \Vert^2 \leq I \sum_{i=0}^{I-1} \Vert \mathbf{x}_i \Vert^2$ from Cauchy-Schwarz inequality.

We now simplify $\mathrm{T}_3$ as 
\begin{align}
    \mathrm{T}_3 
    &= \frac{\eta_\mathrm{gl} \eta_\mathrm{lo} \eta_\mathrm{ef}^t}{2} \bblue{\mathbb{E}_{\pmb{\zeta}}} \left[\left\Vert \nabla f^t (\mathbf{w}^t) - \sum_{u=0}^{U-1} \alpha_u v_u^t \left(\frac{\delta_u^t} {\kappa_u^t} \right)\sum_{\tau=0}^{\kappa_u^t-1} \nabla f_u^t \left( \mathbf{w}_u^{t,\tau} \right) \right\Vert^2\right] \nonumber\\
    &\overset{(a)}{=} \frac{\eta_\mathrm{gl} \eta_\mathrm{lo} \eta_\mathrm{ef}^t}{2} \bblue{\mathbb{E}_{\pmb{\zeta}}} \left[\left\Vert \sum_{u=0}^{U-1} \alpha_u^t \nabla f_u^t (\mathbf{w}^t) - \sum_{u=0}^{U-1} \alpha_u v_u^t \left(\frac{\delta_u^t} {\kappa_u^t} \right)\sum_{\tau=0}^{\kappa_u^t-1} \nabla f_u^t \left( \mathbf{w}_u^{t,\tau} \right) \right\Vert^2\right] \nonumber\\
    &= \frac{\eta_\mathrm{gl} \eta_\mathrm{lo} \eta_\mathrm{ef}^t}{2} \bblue{\mathbb{E}_{\pmb{\zeta}}} \left[\left\Vert \sum_{u=0}^{U-1} \alpha_u \delta_u^t \nabla f_u^t (\mathbf{w}^t) - \sum_{u=0}^{U-1} \alpha_u v_u^t \left(\frac{\delta_u^t} {\kappa_u^t} \right)\sum_{\tau=0}^{\kappa_u^t-1} \nabla f_u^t \left( \mathbf{w}_u^{t,\tau} \right) \right\Vert^2\right] \nonumber \\
    &= \frac{\eta_\mathrm{gl} \eta_\mathrm{lo} \eta_\mathrm{ef}^t}{2} \bblue{\mathbb{E}_{\pmb{\zeta}}} \left[\left\Vert \sum_{u=0}^{U-1} \alpha_u \delta_u^t \nabla f_u^t (\mathbf{w}^t) - \sum_{u=0}^{U-1} \alpha_u \delta_u^t v_u^t \nabla f_u^t (\mathbf{w}^t) + \sum_{u=0}^{U-1} \alpha_u v_u^t \delta_u^t \nabla f_u^t (\mathbf{w}^t) - \sum_{u=0}^{U-1} \alpha_u v_u^t \left(\frac{\delta_u^t} {\kappa_u^t} \right)\sum_{\tau=0}^{\kappa_u^t-1} \nabla f_u^t \left( \mathbf{w}_u^{t,\tau} \right) \right\Vert^2\right] \nonumber \\
    &\overset{(b)}{\leq} \eta_\mathrm{gl} \eta_\mathrm{lo} \eta_\mathrm{ef}^t \bblue{\mathbb{E}_{\pmb{\zeta}}} \left[\left\Vert \sum_{u=0}^{U-1} \alpha_u \delta_u^t \nabla f_u^t (\mathbf{w}^t) - \sum_{u=0}^{U-1} \alpha_u \delta_u^t v_u^t \nabla f_u^t (\mathbf{w}^t)\right\Vert^2\right] + \nonumber\\
    &\mquad \eta_\mathrm{gl} \eta_\mathrm{lo} \eta_\mathrm{ef}^t \bblue{\mathbb{E}_{\pmb{\zeta}}} \left[\left\Vert \sum_{u=0}^{U-1} \alpha_u v_u^t \delta_u^t \nabla f_u^t (\mathbf{w}^t) - \sum_{u=0}^{U-1} \alpha_u v_u^t \left(\frac{\delta_u^t} {\kappa_u^t} \right)\sum_{\tau=0}^{\kappa_u^t-1} \nabla f_u^t \left( \mathbf{w}_u^{t,\tau} \right) \right\Vert^2\right] \nonumber \\
    &= \eta_\mathrm{gl} \eta_\mathrm{lo} \eta_\mathrm{ef}^t \bblue{\mathbb{E}_{\pmb{\zeta}}} \left[\left\Vert \sum_{u=0}^{U-1} \alpha_u \delta_u^t (1 - v_u^t) \nabla f_u^t (\mathbf{w}^t)\right\Vert^2\right] + \eta_\mathrm{gl} \eta_\mathrm{lo} \eta_\mathrm{ef}^t \bblue{\mathbb{E}_{\pmb{\zeta}}} \left[\left\Vert \sum_{u=0}^{U-1} \alpha_u v_u^t \left(\frac{\delta_u^t} {\kappa_u^t} \right)\sum_{\tau=0}^{\kappa_u^t-1} \left(\nabla f_u^t (\mathbf{w}^t) -  \nabla f_u^t \left( \mathbf{w}_u^{t,\tau} \right) \right) \right\Vert^2\right] \nonumber \\
    &\overset{(c)}{\leq} \eta_\mathrm{gl} \eta_\mathrm{lo} \eta_\mathrm{ef}^t \sum_{u=0}^{U-1} \alpha_u \bblue{\mathbb{E}_{\pmb{\zeta}}} \left[\left\Vert  \delta_u^t (1 - v_u^t) \nabla f_u^t (\mathbf{w}^t)\right\Vert^2\right] + \eta_\mathrm{gl} \eta_\mathrm{lo} \eta_\mathrm{ef}^t \sum_{u=0}^{U-1} \alpha_u \bblue{\mathbb{E}_{\pmb{\zeta}}} \left[\left\Vert v_u^t \left(\frac{\delta_u^t} {\kappa_u^t} \right)\sum_{\tau=0}^{\kappa_u^t-1} \left(\nabla f_u^t (\mathbf{w}^t) -  \nabla f_u^t \left( \mathbf{w}_u^{t,\tau} \right) \right) \right\Vert^2\right] \nonumber \\
    &=\eta_\mathrm{gl} \eta_\mathrm{lo} \eta_\mathrm{ef}^t \sum_{u=0}^{U-1} \alpha_u ( \delta_u^t (1 - v_u^t) )^2\bblue{\mathbb{E}_{\pmb{\zeta}}} \left[\left\Vert \nabla f_u^t (\mathbf{w}^t)\right\Vert^2\right] + \eta_\mathrm{gl} \eta_\mathrm{lo} \eta_\mathrm{ef}^t \sum_{u=0}^{U-1} \alpha_u \left[v_u^t \left(\frac{\delta_u^t} {\kappa_u^t} \right) \right]^2 \bblue{\mathbb{E}_{\pmb{\zeta}}} \left[\left\Vert \sum_{\tau=0}^{\kappa_u^t-1} \left(\nabla f_u^t (\mathbf{w}^t) -  \nabla f_u^t \left( \mathbf{w}_u^{t,\tau} \right) \right) \right\Vert^2\right] \nonumber \\
    &\overset{(d)}{\leq} \eta_\mathrm{gl} \eta_\mathrm{lo} \eta_\mathrm{ef}^t \sum_{u=0}^{U-1} \alpha_u ( \delta_u^t (1 - v_u^t) )^2\bblue{\mathbb{E}_{\pmb{\zeta}}} \left[\left\Vert \nabla f_u^t (\mathbf{w}^t) - \nabla f^t (\mathbf{w}^t) + \nabla f^t (\mathbf{w}^t) \right\Vert^2\right] + \nonumber\\
    &\mquad \eta_\mathrm{gl} \eta_\mathrm{lo} \eta_\mathrm{ef}^t \sum_{u=0}^{U-1} \alpha_u \left[v_u^t \left(\frac{\delta_u^t} {\kappa_u^t} \right) \right]^2 \cdot \kappa_u^t \sum_{\tau=0}^{\kappa_u^t-1} \bblue{\mathbb{E}_{\pmb{\zeta}}} \left[\left\Vert \nabla f_u^t (\mathbf{w}^t) -  \nabla f_u^t \left( \mathbf{w}_u^{t,\tau} \right) \right\Vert^2\right] \nonumber \\
    &\overset{(e)}{\leq} 2\eta_\mathrm{gl} \eta_\mathrm{lo} \eta_\mathrm{ef}^t \sum_{u=0}^{U-1} \alpha_u ( \delta_u^t (1 - v_u^t) )^2\bblue{\mathbb{E}_{\pmb{\zeta}}} \left[\left\Vert \nabla f_u^t (\mathbf{w}^t) - \nabla f^t (\mathbf{w}^t) \right\Vert^2\right] + 2\eta_\mathrm{gl} \eta_\mathrm{lo} \eta_\mathrm{ef}^t \sum_{u=0}^{U-1} \alpha_u ( \delta_u^t (1 - v_u^t) )^2\bblue{\mathbb{E}_{\pmb{\zeta}}} \left[\left\Vert \nabla f^t (\mathbf{w}^t) \right\Vert^2\right] + \nonumber\\
    &\mquad \eta_\mathrm{gl} \eta_\mathrm{lo} \eta_\mathrm{ef}^t \beta^2 \sum_{u=0}^{U-1} \alpha_u (v_u^t\delta_u^t)^2 \frac{1} {\kappa_u^t} \sum_{\tau=0}^{\kappa_u^t-1} \bblue{\mathbb{E}_{\pmb{\zeta}}} \left[\left\Vert \mathbf{w}^t - \mathbf{w}_u^{t,\tau} \right\Vert^2\right] \nonumber \\
    &\overset{(f)}{\leq} 2\eta_\mathrm{gl} \eta_\mathrm{lo} \eta_\mathrm{ef}^t \varpi^2 \sum_{u=0}^{U-1} \alpha_u ( \delta_u^t (1 - v_u^t) )^2 + 2\eta_\mathrm{gl} \eta_\mathrm{lo} \eta_\mathrm{ef}^t \sum_{u=0}^{U-1} \alpha_u ( \delta_u^t (1 - v_u^t) )^2 \left\Vert \nabla f^t (\mathbf{w}^t) \right\Vert^2 + \nonumber\\
    &\mquad \eta_\mathrm{gl} \eta_\mathrm{lo} \eta_\mathrm{ef}^t \beta^2 \sum_{u=0}^{U-1} \alpha_u (v_u^t\delta_u^t)^2 \frac{1} {\kappa_u^t} \sum_{\tau=0}^{\kappa_u^t-1} \underbrace{\bblue{\mathbb{E}_{\pmb{\zeta}}} \left[\left\Vert \mathbf{w}^t - \mathbf{w}_u^{t,\tau} \right\Vert^2\right]}_{\mathrm{T}_5},
\end{align}
where we used the fact that global loss function $f^t(\mathbf{w}^t) = \sum_{u=0}^{U-1}\alpha_u^t f_u^t(\mathbf{w}^t)$, by definition, in $(a)$.
$(b)$ appears from Cauchy-Schwarz inequality, whereas $(c)$ comes from Jensen inequality due to the convexity of $\Vert \cdot \Vert$ and the fact that $\Vert \sum_{i=0}^{I-1} \alpha_i \mathbf{x}_i \Vert^2 \leq \sum_{i=0}^{I-1} \alpha_i \Vert \mathbf{x}_i \Vert^2$ since $\sum_{i=0}^{I-1}\alpha_i=1$.
Besides, $(d)$ and $(e)$ appears from Cauchy-Schwarz inequality. 
In $(e)$, we also used the $\beta$-smoothness assumption.
Finally, the last inequality arose from the bounded gradient divergence assumption.

Now, using Lemma \ref{lema:model_drfit_per_sgd} to replace $\mathrm{T}_5$, we get 
\begin{align}
    \mathrm{T}_3 
    &= \frac{\eta_\mathrm{gl} \eta_\mathrm{lo} \eta_\mathrm{ef}^t}{2} \bblue{\mathbb{E}_{\pmb{\zeta}}} \left[\left\Vert \nabla f^t (\mathbf{w}^t) - \sum_{u=0}^{U-1} \alpha_u v_u^t \left(\frac{\delta_u^t} {\kappa_u^t} \right)\sum_{\tau=0}^{\kappa_u^t-1} \nabla f_u^t \left( \mathbf{w}_u^{t,\tau} \right) \right\Vert^2\right] \nonumber\\
    &\leq 
    2\eta_\mathrm{gl} \eta_\mathrm{lo} \eta_\mathrm{ef}^t \varpi^2 \sum_{u=0}^{U-1} \alpha_u ( \delta_u^t (1 - v_u^t) )^2 + 2\eta_\mathrm{gl} \eta_\mathrm{lo} \eta_\mathrm{ef}^t \sum_{u=0}^{U-1} \alpha_u ( \delta_u^t (1 - v_u^t) )^2 \left\Vert \nabla f^t (\mathbf{w}^t) \right\Vert^2 + \nonumber\\
    &\mquad \eta_\mathrm{gl} \eta_\mathrm{lo} \eta_\mathrm{ef}^t \beta^2 \sum_{u=0}^{U-1} \alpha_u (v_u^t\delta_u^t)^2 \frac{1} {\kappa_u^t} \sum_{\tau=0}^{\kappa_u^t-1} \Big\{4 \kappa_u^t \eta_\mathrm{lo}^2 \sigma^2 + 24 \left(\kappa_u^t \varpi \eta_\mathrm{lo}\right)^2 + 24 \left(\kappa_u^t \eta_\mathrm{lo}\right)^2 \left\Vert \nabla f^{t}(\mathbf{w}^t) \right\Vert^2 \Big\} \nonumber\\
    &= 2\eta_\mathrm{gl} \eta_\mathrm{lo} \eta_\mathrm{ef}^t \varpi^2 \sum_{u=0}^{U-1} \alpha_u ( \delta_u^t (1 - v_u^t) )^2 + 2\eta_\mathrm{gl} \eta_\mathrm{lo} \eta_\mathrm{ef}^t \sum_{u=0}^{U-1} \alpha_u ( \delta_u^t (1 - v_u^t) )^2 \left\Vert \nabla f^t (\mathbf{w}^t) \right\Vert^2 + \nonumber\\
    &\mquad \eta_\mathrm{gl} \eta_\mathrm{lo} \eta_\mathrm{ef}^t \beta^2 \sum_{u=0}^{U-1} \alpha_u (v_u^t\delta_u^t)^2 \Big\{4 \kappa_u^t \eta_\mathrm{lo}^2 \sigma^2 + 24 \left(\kappa_u^t \varpi \eta_\mathrm{lo}\right)^2 + 24 \left(\kappa_u^t \eta_\mathrm{lo}\right)^2 \left\Vert \nabla f^{t}(\mathbf{w}^t) \right\Vert^2 \Big\} \nonumber\\
    &= 2\eta_\mathrm{gl} \eta_\mathrm{lo} \eta_\mathrm{ef}^t \varpi^2 \sum_{u=0}^{U-1} \alpha_u ( \delta_u^t (1 - v_u^t) )^2 + 24 \eta_\mathrm{gl} \eta_\mathrm{ef}^t \left(\beta \varpi \right)^2 \eta_\mathrm{lo}^3  \sum_{u=0}^{U-1} \alpha_u (v_u^t\delta_u^t)^2  \left(\kappa_u^t\right)^2 + 4 \eta_\mathrm{gl} \eta_\mathrm{ef}^t \beta^2 \sigma^2 \eta_\mathrm{lo}^3 \sum_{u=0}^{U-1} \alpha_u \kappa_u^t (v_u^t\delta_u^t)^2 + \nonumber\\
    &\squad 2\eta_\mathrm{gl} \eta_\mathrm{lo} \eta_\mathrm{ef}^t \sum_{u=0}^{U-1} \alpha_u ( \delta_u^t (1 - v_u^t) )^2 \left\Vert \nabla f^t (\mathbf{w}^t) \right\Vert^2 + 24 \eta_\mathrm{gl} \eta_\mathrm{ef}^t \beta^2 \eta_\mathrm{lo}^3 \sum_{u=0}^{U-1} \alpha_u (\kappa_u^t v_u^t \delta_u^t)^2 \left\Vert \nabla f^{t}(\mathbf{w}^t) \right\Vert^2.
\end{align}

Now, we simplify $\mathrm{T}_4$ as follows
\begin{align}
    \mathrm{T}_4 
    &= \bblue{\mathbb{E}_{\pmb{\zeta}}} \left[ \left\Vert \nabla f_u^t (\mathbf{w}_u^{t,\tau}) \right\Vert^2 \right] \\ \nonumber
    &= \bblue{\mathbb{E}_{\pmb{\zeta}}} \left[ \left\Vert \nabla f_u^t \left(\mathbf{w}_u^{t,\tau} \right) - \nabla f_u^t \left(\mathbf{w}^{t} \right) + \nabla f_u^t \left(\mathbf{w}^{t} \right) - \nabla f^t \left(\mathbf{w}^{t} \right) + \nabla f^t \left(\mathbf{w}^{t} \right) \right\Vert^2 \right] \nonumber\\
    &\leq 3\bblue{\mathbb{E}_{\pmb{\zeta}}} \left[\left\Vert  \nabla f_u^t \left(\mathbf{w}_u^{t,\tau} \right) - \nabla f_u^t \left(\mathbf{w}^{t} \right) \right\Vert^2\right] + 3 \bblue{\mathbb{E}_{\pmb{\zeta}}} \left[ \left\Vert \nabla f_u^t \left(\mathbf{w}^{t} \right) - \nabla f^t \left(\mathbf{w}^{t} \right) \right\Vert^2 \right] + 3 \left\Vert \nabla f^t \left(\mathbf{w}^{t} \right) \right\Vert^2 \nonumber\\
    &\leq 3\beta^2 \bblue{\mathbb{E}_{\pmb{\zeta}}} \left[\left\Vert \mathbf{w}_u^{t,\tau} - \mathbf{w}^{t} \right\Vert^2\right] + 3\varpi^2 + 3\left\Vert \nabla f^t \left(\mathbf{w}^{t} \right) \right\Vert^2 \nonumber\\
    &\overset{(a)}{\leq} 3\beta^2 \left[ 4 \kappa_u^t \eta_\mathrm{lo}^2 \sigma^2 + 24 \left(\kappa_u^t \varpi \eta_\mathrm{lo}\right)^2 + 24 \left(\kappa_u^t \eta_\mathrm{lo}\right)^2 \left\Vert \nabla f^{t}(\mathbf{w}^t) \right\Vert^2 \right] + 3\varpi^2 + 3 \left\Vert \nabla f^t \left(\mathbf{w}^{t} \right) \right\Vert^2 \nonumber\\
    &= 12 \kappa_u^t \eta_\mathrm{lo}^2 \beta^2 \sigma^2 + 72 \left(\kappa_u^t \beta \varpi \eta_\mathrm{lo}\right)^2 + 72 \left(\beta \eta_\mathrm{lo} \kappa_u^t \right)^2 \left\Vert \nabla f^{t}(\mathbf{w}^t) \right\Vert^2 + 3\varpi^2 + 3 \left\Vert \nabla f^t \left(\mathbf{w}^{t} \right) \right\Vert^2 \nonumber\\
    &= 3\varpi^2 + 12 \kappa_u^t \eta_\mathrm{lo}^2 \beta^2 \sigma^2 + 72 \left(\kappa_u^t \beta \varpi \eta_\mathrm{lo}\right)^2 + 3[1 + 24 \left(\beta \eta_\mathrm{lo} \kappa_u^t \right)^2 ]\left\Vert \nabla f^{t}(\mathbf{w}^t) \right\Vert^2,
\end{align}
where $(a)$ appears from (\ref{lema:model_drfit_per_sgd}).

Now, plugging $\mathrm{T}_3$ and $\mathrm{T}_4$ into (\ref{eq:main_conv_1_1_0}), we 
\begin{align}
\label{eq:main_conv_2}
    &\mathbb{E}_{\pmb{1,\zeta},Q}\left[f^{t} (\mathbf{w}^{t+1})\right] - f^t(\mathbf{w}^t) 
    \leq (2+q) \beta \left(\sigma \eta_\mathrm{gl} \eta_\mathrm{lo} \eta_\mathrm{ef}^t \right)^2 \sum_{u=0}^{U-1} v_u^t \alpha_u^2 \frac{ (\delta_u^t)^2}{\kappa_u^t} - \frac{\eta_\mathrm{gl} \eta_\mathrm{lo} \eta_\mathrm{ef}^t}{2} \left\Vert \nabla f^t (\mathbf{w}^t) \right\Vert^2 + \nonumber\\
    & 2\eta_\mathrm{gl} \eta_\mathrm{lo} \eta_\mathrm{ef}^t \varpi^2 \sum_{u=0}^{U-1} \alpha_u ( \delta_u^t (1 - v_u^t) )^2 + 24 \eta_\mathrm{gl} \eta_\mathrm{ef}^t \left(\beta \varpi \right)^2 \eta_\mathrm{lo}^3  \sum_{u=0}^{U-1} \alpha_u (v_u^t\delta_u^t)^2  \left(\kappa_u^t\right)^2 + 4 \eta_\mathrm{gl} \eta_\mathrm{ef}^t \beta^2 \sigma^2 \eta_\mathrm{lo}^3 \sum_{u=0}^{U-1} \alpha_u \kappa_u^t (v_u^t\delta_u^t)^2 + \nonumber\\
    &2\eta_\mathrm{gl} \eta_\mathrm{lo} \eta_\mathrm{ef}^t \sum_{u=0}^{U-1} \alpha_u ( \delta_u^t (1 - v_u^t) )^2 \left\Vert \nabla f^t (\mathbf{w}^t) \right\Vert^2 + 24 \eta_\mathrm{gl} \eta_\mathrm{ef}^t \beta^2 \eta_\mathrm{lo}^3 \sum_{u=0}^{U-1} \alpha_u (\kappa_u^t v_u^t \delta_u^t)^2 \left\Vert \nabla f^{t}(\mathbf{w}^t) \right\Vert^2 +  \nonumber\\
    &\beta \left(\eta_\mathrm{gl} \eta_\mathrm{lo} \eta_\mathrm{ef}^t \right)^2  \sum_{u=0}^{U-1} v_u^t(1+q-v_u^t) \left(\alpha_u \delta_u^t\right)^2 \sum_{\tau=0}^{\kappa_u^t-1} \Big\{3\varpi^2 + 12 \kappa_u^t \eta_\mathrm{lo}^2 \beta^2 \sigma^2 + 72 \left(\kappa_u^t \beta \varpi \eta_\mathrm{lo}\right)^2 + 3[1 + 24 \left(\beta \eta_\mathrm{lo} \kappa_u^t \right)^2 ]\left\Vert \nabla f^{t}(\mathbf{w}^t) \right\Vert^2 \Big\} \nonumber\\
    &=- \frac{\eta_\mathrm{gl} \eta_\mathrm{lo} \eta_\mathrm{ef}^t}{2} \left\Vert \nabla f^t (\mathbf{w}^t) \right\Vert^2 + 2\eta_\mathrm{gl} \eta_\mathrm{lo} \eta_\mathrm{ef}^t \sum_{u=0}^{U-1} \alpha_u ( \delta_u^t (1 - v_u^t) )^2 \left\Vert \nabla f^t (\mathbf{w}^t) \right\Vert^2 + 24 \eta_\mathrm{gl} \eta_\mathrm{ef}^t \beta^2 \eta_\mathrm{lo}^3 \sum_{u=0}^{U-1} \alpha_u (\kappa_u^t v_u^t \delta_u^t)^2 \left\Vert \nabla f^{t}(\mathbf{w}^t) \right\Vert^2 + \nonumber\\
    &(2+q) \beta \left(\sigma \eta_\mathrm{gl} \eta_\mathrm{lo} \eta_\mathrm{ef}^t \right)^2 \sum_{u=0}^{U-1} v_u^t \alpha_u^2 \frac{ (\delta_u^t)^2}{\kappa_u^t} + 4 \eta_\mathrm{gl} \eta_\mathrm{ef}^t \beta^2 \sigma^2 \eta_\mathrm{lo}^3 \sum_{u=0}^{U-1} \alpha_u \kappa_u^t (v_u^t\delta_u^t)^2 + \nonumber\\
    & 2\eta_\mathrm{gl} \eta_\mathrm{lo} \eta_\mathrm{ef}^t \varpi^2 \sum_{u=0}^{U-1} \alpha_u ( \delta_u^t (1 - v_u^t) )^2 + 24 \eta_\mathrm{gl} \eta_\mathrm{ef}^t \left(\beta \varpi \right)^2 \eta_\mathrm{lo}^3  \sum_{u=0}^{U-1} \alpha_u (v_u^t\delta_u^t)^2  \left(\kappa_u^t\right)^2 + \nonumber\\
    &\beta \left(\eta_\mathrm{gl} \eta_\mathrm{lo} \eta_\mathrm{ef}^t \right)^2  \sum_{u=0}^{U-1} v_u^t(1+q-v_u^t) \left(\alpha_u \delta_u^t\right)^2 \kappa_u^t \Big\{3\varpi^2 + 12 \kappa_u^t \eta_\mathrm{lo}^2 \beta^2 \sigma^2 + 72 \left(\kappa_u^t \beta \varpi \eta_\mathrm{lo}\right)^2 + 3[1 + 24 \left(\beta \eta_\mathrm{lo} \kappa_u^t \right)^2 ]\left\Vert \nabla f^{t}(\mathbf{w}^t) \right\Vert^2 \Big\} \nonumber\\
    &\overset{(a)}{=}- \frac{\eta_\mathrm{gl} \eta_\mathrm{lo} \eta_\mathrm{ef}^t}{2} \left\Vert \nabla f^t (\mathbf{w}^t) \right\Vert^2 + 2\eta_\mathrm{gl} \eta_\mathrm{lo} \eta_\mathrm{ef}^t \sum_{u=0}^{U-1} \alpha_u ( \delta_u^t (1 - v_u^t) )^2 \left\Vert \nabla f^t (\mathbf{w}^t) \right\Vert^2 + 24 \eta_\mathrm{gl} \eta_\mathrm{ef}^t \beta^2 \eta_\mathrm{lo}^3 \sum_{u=0}^{U-1} \alpha_u (\kappa_u^t v_u^t \delta_u^t)^2 \left\Vert \nabla f^{t}(\mathbf{w}^t) \right\Vert^2 + \nonumber\\
    & 3 \beta \left(\eta_\mathrm{gl} \eta_\mathrm{lo} \eta_\mathrm{ef}^t \right)^2 \sum_{u=0}^{U-1} v_u^t (1+q-v_u^t) \kappa_u^t \left(\alpha_u \delta_u^t\right)^2 \left\Vert \nabla f^{t}(\mathbf{w}^t) \right\Vert^2 + 72 \left(\eta_\mathrm{gl} \eta_\mathrm{ef}^t \right)^2 \beta^3 \eta_\mathrm{lo}^4 \sum_{u=0}^{U-1} v_u^t(1+q-v_u^t) \left(\alpha_u \delta_u^t\right)^2 \left(\kappa_u^t \right)^3 \left\Vert \nabla f^{t}(\mathbf{w}^t) \right\Vert^2 + \nonumber\\
    &\digamma_\sigma (\sigma, \pmb{v, \kappa^t,\delta^t}) + \digamma_\varpi (\sigma, \pmb{v,\kappa^t,\delta^t}) \nonumber\\
    &=\Big[- \frac{\eta_\mathrm{gl} \eta_\mathrm{lo} \eta_\mathrm{ef}^t}{2} + 2\eta_\mathrm{gl} \eta_\mathrm{lo} \eta_\mathrm{ef}^t \sum_{u=0}^{U-1} \alpha_u ( \delta_u^t (1 - v_u^t) )^2 + 24 \eta_\mathrm{gl} \eta_\mathrm{ef}^t \beta^2 \eta_\mathrm{lo}^3 \sum_{u=0}^{U-1} \alpha_u (\kappa_u^t v_u^t \delta_u^t)^2 + \nonumber\\
    &3 \beta \left(\eta_\mathrm{gl} \eta_\mathrm{lo} \eta_\mathrm{ef}^t \right)^2 \sum_{u=0}^{U-1} v_u^t (1+q-v_u^t) \kappa_u^t \left(\alpha_u \delta_u^t\right)^2 + 72 \left(\eta_\mathrm{gl} \eta_\mathrm{ef}^t \right)^2 \beta^3 \eta_\mathrm{lo}^4 \sum_{u=0}^{U-1} v_u^t(1+q-v_u^t) \left(\alpha_u \delta_u^t\right)^2 \left(\kappa_u^t \right)^3 \Big] \left\Vert \nabla f^{t}(\mathbf{w}^t) \right\Vert^2 + \nonumber\\
    &\digamma_\sigma (\sigma, \pmb{v, \kappa^t,\delta^t}) + \digamma_\varpi (\sigma, \pmb{v,\kappa^t,\delta^t}) \nonumber\\
    &=- \frac{\eta_\mathrm{gl} \eta_\mathrm{lo} \eta_\mathrm{ef}^t}{2} \Big[1 - 4 \sum_{u=0}^{U-1} \alpha_u ( \delta_u^t (1 - v_u^t) )^2 - 48 \beta^2 \eta_\mathrm{lo}^2 \sum_{u=0}^{U-1} \alpha_u (\kappa_u^t v_u^t \delta_u^t)^2 - 6 \beta \eta_\mathrm{gl} \eta_\mathrm{lo} \eta_\mathrm{ef}^t \sum_{u=0}^{U-1} v_u^t (1+q-v_u^t) \kappa_u^t \left(\alpha_u \delta_u^t\right)^2 - \nonumber\\
    & \mquad 144 \eta_\mathrm{gl} \eta_\mathrm{ef}^t \beta^3 \eta_\mathrm{lo}^3 \sum_{u=0}^{U-1} v_u^t(1+q-v_u^t) \left(\alpha_u \delta_u^t\right)^2 \left(\kappa_u^t \right)^3 \Big] \left\Vert \nabla f^{t}(\mathbf{w}^t) \right\Vert^2 + \digamma_\sigma (\sigma, \pmb{v, \kappa^t,\delta^t}) + \digamma_\varpi (\sigma, \pmb{v,\kappa^t,\delta^t}),
\end{align}
where we define $\digamma_\sigma (\sigma, \pmb{v, \kappa^t,\delta^t}) \coloneqq (2+q) \beta \left(\sigma \eta_\mathrm{gl} \eta_\mathrm{lo} \eta_\mathrm{ef}^t \right)^2 \sum_{u=0}^{U-1} v_u^t \alpha_u^2 \frac{ (\delta_u^t)^2}{\kappa_u^t} + 4 \eta_\mathrm{gl} \eta_\mathrm{ef}^t \beta^2 \sigma^2 \eta_\mathrm{lo}^3 \sum_{u=0}^{U-1} \alpha_u \kappa_u^t (v_u^t\delta_u^t)^2 + 12 \sigma^2 \left(\eta_\mathrm{gl} \eta_\mathrm{ef}^t \right)^2 \beta^3 \eta_\mathrm{lo}^4  \sum_{u=0}^{U-1} v_u^t(1+q-v_u^t) \left(\alpha_u \delta_u^t \kappa_u^t \right)^2 $ and $\digamma_\varpi (\sigma, \pmb{v,\kappa^t,\delta^t}) \coloneqq 2\eta_\mathrm{gl} \eta_\mathrm{lo} \eta_\mathrm{ef}^t \varpi^2 \sum_{u=0}^{U-1} \alpha_u ( \delta_u^t (1 - v_u^t) )^2 + 24 \eta_\mathrm{gl} \eta_\mathrm{ef}^t \left(\beta \varpi \right)^2 \eta_\mathrm{lo}^3  \sum_{u=0}^{U-1} \alpha_u (v_u^t\delta_u^t)^2 \left(\kappa_u^t\right)^2 + 3 \beta \left(\eta_\mathrm{gl} \eta_\mathrm{lo} \eta_\mathrm{ef}^t \varpi\right)^2 \sum_{u=0}^{U-1} v_u^t(1+q-v_u^t) \left(\alpha_u \delta_u^t\right)^2 \kappa_u^t + 72 \left(\varpi \eta_\mathrm{gl} \eta_\mathrm{ef}^t \right)^2 \beta^3 \eta_\mathrm{lo}^4 \sum_{u=0}^{U-1} v_u^t(1+q-v_u^t) \left(\alpha_u \delta_u^t\right)^2 \left(\kappa_u^t \right)^3 $ in $(a)$.

\end{proof}

\newpage
\begin{Lemma}
\label{lema:model_drfit_per_sgd}
When the learning rate $\eta_\mathrm{lo}$ satisfies the condition $\eta_\mathrm{lo} \leq \frac{1}{8\beta \kappa}$, we have
\begin{align}
    \mathrm{T}_5 
    &= \mathbb{E}_{\pmb{\zeta}} \left[\left\Vert \mathbf{w}^t - \mathbf{w}_u^{t,\tau} \right\Vert^2\right] 
    \leq 4 \kappa_u^t \eta_\mathrm{lo}^2 \sigma^2 + 24 \left(\kappa_u^t \varpi \eta_\mathrm{lo}\right)^2 + 24 \left(\kappa_u^t \eta_\mathrm{lo}\right)^2 \left\Vert \nabla f^{t}(\mathbf{w}^t) \right\Vert^2.
\end{align}
\end{Lemma}

\begin{proof}
The following proof is similar to \cite[Lemma $3$]{reddi2021adaptive} but shown for a single client as in \cite[Lemma B.3.2]{wang2024lightweight}.
\begin{align}
    \mathrm{T}_5 
    &= \mathbb{E}_{\pmb{\zeta}} \left[\left\Vert \mathbf{w}^t - \mathbf{w}_u^{t,\tau} \right\Vert^2\right] \nonumber\\
    &= \mathbb{E}_{\pmb{\zeta}} \left[\left\Vert \mathbf{w}_u^{t,\tau} - \mathbf{w}^t \right\Vert^2\right] \nonumber\\
    &\overset{(a)}{=} \mathbb{E}_{\pmb{\zeta}} \left[\left\Vert \mathbf{w}_u^{t,\tau-1} - \eta_\mathrm{lo} g_u^{t}(\mathbf{w}_u^{t,\tau-1}) - \mathbf{w}^t \right\Vert^2\right] \nonumber\\
    &= \mathbb{E}_{\pmb{\zeta}} \left[\left\Vert \mathbf{w}_u^{t,\tau-1} - \mathbf{w}^t - \eta_\mathrm{lo} \left[ g_u^{t}(\mathbf{w}_u^{t,\tau-1}) - \nabla f_u^{t}(\mathbf{w}_u^{t,\tau-1}) + \nabla f_u^{t}(\mathbf{w}_u^{t,\tau-1}) - \nabla f_u^{t}(\mathbf{w}^t) + \nabla f_u^{t}(\mathbf{w}^t) - \nabla f^{t}(\mathbf{w}^t) + \nabla f^{t}(\mathbf{w}^t) \right] \right\Vert^2\right] \nonumber\\
    &= \mathbb{E}_{\pmb{\zeta}} \left[\left\Vert \underbrace{\mathbf{w}_u^{t,\tau-1} - \mathbf{w}^t - \eta_\mathrm{lo} \left[\nabla f_u^{t}(\mathbf{w}_u^{t,\tau-1}) - \nabla f_u^{t}(\mathbf{w}^t) + \nabla f_u^{t}(\mathbf{w}^t) - \nabla f^{t}(\mathbf{w}^t) + \nabla f^{t}(\mathbf{w}^t) \right]}_{\coloneqq \mathbf{z}_1} - \underbrace{\eta_\mathrm{lo} \left[ g_u^{t}(\mathbf{w}_u^{t,\tau-1}) - \nabla f_u^{t}(\mathbf{w}_u^{t,\tau-1})\right]}_{\coloneqq \mathbf{z}_2} \right\Vert^2\right] \nonumber\\
    &\overset{(b)}{=} \mathbb{E}_{\pmb{\zeta}} \left[\left\Vert \eta_\mathrm{lo} \left[ g_u^{t}(\mathbf{w}_u^{t,\tau-1}) - \nabla f_u^{t}(\mathbf{w}_u^{t,\tau-1}) \right] \right\Vert^2 \right] + \nonumber\\
    &\squad \mathbb{E}_{\pmb{\zeta}} \left[\left\Vert \mathbf{w}_u^{t,\tau-1} - \mathbf{w}^t - \eta_\mathrm{lo} \left[\nabla f_u^{t}(\mathbf{w}_u^{t,\tau-1}) - \nabla f_u^{t}(\mathbf{w}^t) + \nabla f_u^{t}(\mathbf{w}^t) - \nabla f^{t}(\mathbf{w}^t) + \nabla f^{t}(\mathbf{w}^t) \right] \right\Vert^2\right] - \nonumber\\
    &\squad 2\mathbb{E}_{\pmb{\zeta}} \left[ \left< -\eta_\mathrm{lo} \left[ g_u^{t}(\mathbf{w}_u^{t,\tau-1}) - \nabla f_u^{t}(\mathbf{w}_u^{t,\tau-1}) \right], \mathbf{w}_u^{t,\tau-1} - \mathbf{w}^t - \eta_\mathrm{lo} \left[\nabla f_u^{t}(\mathbf{w}_u^{t,\tau-1}) - \nabla f_u^{t}(\mathbf{w}^t) + \nabla f_u^{t}(\mathbf{w}^t) - \nabla f^{t}(\mathbf{w}^t) + \nabla f^{t}(\mathbf{w}^t) \right] \right> \right] \nonumber\\
    &=\mathbb{E}_{\pmb{\zeta}} \left[\left\Vert \eta_\mathrm{lo} \left[ g_u^{t}(\mathbf{w}_u^{t,\tau-1}) - \nabla f_u^{t}(\mathbf{w}_u^{t,\tau-1}) \right] \right\Vert^2 \right] + \nonumber\\
    &\squad \mathbb{E}_{\pmb{\zeta}} \left[\left\Vert \mathbf{w}_u^{t,\tau-1} - \mathbf{w}^t - \eta_\mathrm{lo} \left[\nabla f_u^{t}(\mathbf{w}_u^{t,\tau-1}) - \nabla f_u^{t}(\mathbf{w}^t) + \nabla f_u^{t}(\mathbf{w}^t) - \nabla f^{t}(\mathbf{w}^t) + \nabla f^{t}(\mathbf{w}^t) \right] \right\Vert^2\right] + \nonumber\\
    &\squad 2 \eta_\mathrm{lo} \left< \mathbb{E}_{\pmb{\zeta}} \left[ g_u^{t}(\mathbf{w}_u^{t,\tau-1}) \right] - \nabla f_u^{t}(\mathbf{w}_u^{t,\tau-1}), \mathbf{w}_u^{t,\tau-1} - \mathbf{w}^t - \eta_\mathrm{lo} \left[\nabla f_u^{t}(\mathbf{w}_u^{t,\tau-1}) - \nabla f_u^{t}(\mathbf{w}^t) + \nabla f_u^{t}(\mathbf{w}^t) - \nabla f^{t}(\mathbf{w}^t) + \nabla f^{t}(\mathbf{w}^t) \right] \right>  \nonumber\\
    &= \eta_\mathrm{lo}^2  \mathbb{E}_{\pmb{\zeta}} \left[\left\Vert g_u^{t}(\mathbf{w}_u^{t,\tau-1}) - \nabla f_u^{t}(\mathbf{w}_u^{t,\tau-1}) \right\Vert^2 \right] + \mathbb{E}_{\pmb{\zeta}} \left[\left\Vert \mathbf{w}_u^{t,\tau-1} - \mathbf{w}^t - \eta_\mathrm{lo} \left[\nabla f_u^{t}(\mathbf{w}_u^{t,\tau-1}) - \nabla f_u^{t}(\mathbf{w}^t) + \nabla f_u^{t}(\mathbf{w}^t) - \nabla f^{t}(\mathbf{w}^t) + \nabla f^{t}(\mathbf{w}^t) \right] \right\Vert^2\right] \nonumber\\
    &\leq \eta_\mathrm{lo}^2 \sigma^2 + \mathbb{E}_{\pmb{\zeta}} \left[\left\Vert \mathbf{w}_u^{t,\tau-1} - \mathbf{w}^t - \eta_\mathrm{lo} \left[\nabla f_u^{t}(\mathbf{w}_u^{t,\tau-1}) - \nabla f_u^{t}(\mathbf{w}^t) + \nabla f_u^{t}(\mathbf{w}^t) - \nabla f^{t}(\mathbf{w}^t) + \nabla f^{t}(\mathbf{w}^t) \right] \right\Vert^2\right] \nonumber\\
    &\overset{(c)}{\leq} \eta_\mathrm{lo}^2 \sigma^2 + \left(1 + \frac{1}{2\kappa_u^t - 1} \right) \mathbb{E}_{\pmb{\zeta}} \left[\left\Vert \mathbf{w}_u^{t,\tau-1} - \mathbf{w}^t \right\Vert^2\right] + \nonumber\\
    &\squad \left(1 + \frac{1}{\frac{1}{2\kappa_u^t - 1} } \right) \mathbb{E}_{\pmb{\zeta}} \left[\left\Vert \eta_\mathrm{lo} \left[\nabla f_u^{t}(\mathbf{w}_u^{t,\tau-1}) - \nabla f_u^{t}(\mathbf{w}^t) + \nabla f_u^{t}(\mathbf{w}^t) - \nabla f^{t}(\mathbf{w}^t) + \nabla f^{t}(\mathbf{w}^t) \right] \right\Vert^2\right]\nonumber\\
    &=\eta_\mathrm{lo}^2 \sigma^2 + \left(1 + \frac{1}{2\kappa_u^t - 1} \right) \mathbb{E}_{\pmb{\zeta}} \left[\left\Vert \mathbf{w}_u^{t,\tau-1} - \mathbf{w}^t \right\Vert^2\right] + 2\kappa_u^t \eta_\mathrm{lo}^2 \mathbb{E}_{\pmb{\zeta}} \left[\left\Vert  \nabla f_u^{t}(\mathbf{w}_u^{t,\tau-1}) - \nabla f_u^{t}(\mathbf{w}^t) + \nabla f_u^{t}(\mathbf{w}^t) - \nabla f^{t}(\mathbf{w}^t) + \nabla f^{t}(\mathbf{w}^t)  \right\Vert^2\right] \nonumber\\
    &\leq \eta_\mathrm{lo}^2 \sigma^2 + \left(1 + \frac{1}{2\kappa_u^t - 1} \right) \mathbb{E}_{\pmb{\zeta}} \left[\left\Vert \mathbf{w}_u^{t,\tau-1} - \mathbf{w}^t \right\Vert^2\right] + 6\kappa_u^t \eta_\mathrm{lo}^2 \mathbb{E}_{\pmb{\zeta}} \left[\left\Vert  \nabla f_u^{t}(\mathbf{w}_u^{t,\tau-1}) - \nabla f_u^{t}(\mathbf{w}^t) \right\Vert^2\right] + \nonumber\\
    &\squad 6\kappa_u^t \eta_\mathrm{lo}^2 \mathbb{E}_{\pmb{\zeta}} \left[\left\Vert \nabla f_u^{t}(\mathbf{w}^t) - \nabla f^{t}(\mathbf{w}^t) \right\Vert^2\right] + 6 \kappa_u^t \eta_\mathrm{lo}^2 \left\Vert \nabla f^{t}(\mathbf{w}^t) \right\Vert^2 \nonumber\\
    &\leq \eta_\mathrm{lo}^2 \sigma^2 + \left(1 + \frac{1}{2\kappa_u^t - 1} \right) \mathbb{E}_{\pmb{\zeta}} \left[\left\Vert \mathbf{w}_u^{t,\tau-1} - \mathbf{w}^t \right\Vert^2\right] + 6 \kappa_u^t \beta^2 \eta_\mathrm{lo}^2 \mathbb{E}_{\pmb{\zeta}} \left[\left\Vert \mathbf{w}_u^{t,\tau-1} - \mathbf{w}^t \right\Vert^2\right] + 6 \kappa_u^t \varpi^2 \eta_\mathrm{lo}^2 + 6 \kappa_u^t \eta_\mathrm{lo}^2 \left\Vert \nabla f^{t}(\mathbf{w}^t) \right\Vert^2 \nonumber\\
    &= \eta_\mathrm{lo}^2 \sigma^2 + \left(1 + \frac{1}{2\kappa_u^t - 1} + 6 \kappa_u^t \beta^2 \eta_\mathrm{lo}^2 \right) \mathbb{E}_{\pmb{\zeta}} \left[\left\Vert \mathbf{w}_u^{t,\tau-1} - \mathbf{w}^t \right\Vert^2\right] + 6 \kappa_u^t \varpi^2 \eta_\mathrm{lo}^2 + 6 \kappa_u^t \eta_\mathrm{lo}^2 \left\Vert \nabla f^{t}(\mathbf{w}^t)  \right\Vert^2,
\end{align}
where $(a)$ follows from one \ac{sgd} step and $(b)$ is true since $\Vert \mathbf{z}_1 - \mathbf{z}_2 \Vert^2 = \Vert \mathbf{z}_1 \Vert^2 + \Vert \mathbf{z}_2 \Vert^2 - 2\left<\mathbf{z}_1, \mathbf{z}_2 \right>$.
We get $(c)$ using Peter-Paul inequality, i.e., $\Vert \mathbf{z}_1 + \mathbf{z}_2 \Vert^2 \leq (1+b)\Vert \mathbf{z}_1 \Vert^2 + \left(1+\frac{1}{b}\right) \Vert \mathbf{z}_2 \Vert^2$, for some $b >0$.

If $\eta_\mathrm{lo} \leq \frac{1}{8\beta \kappa_u^t}$, then we have 
\begin{align}
    &1 + \frac{1}{2\kappa_u^t - 1} + 6 \kappa_u^t \beta^2 \eta_\mathrm{lo}^2 
    \leq 1 + \frac{1}{2\kappa_u^t - 1} + 6 \kappa_u^t \beta^2 \cdot \frac{1}{64\beta^2 (\kappa_u^t)^2}
    = 1 + \frac{1}{2\kappa_u^t - 1} + \frac{3}{32 \kappa_u^t}
\end{align}
Since $\frac{3}{32 \kappa_u^t} \leq \frac{1}{2\kappa_u^t - 1}$ for $\kappa_u^t \geq 1$, we have
\begin{align}
    &1 + \frac{1}{2\kappa_u^t - 1} + 6 \kappa_u^t \beta^2 \eta_\mathrm{lo}^2 
    \leq 1 + \frac{1}{2\kappa_u^t - 1} + \frac{1}{2\kappa_u^t - 1} 
    = 1 + \frac{2}{2\kappa_u^t - 1} 
    = 1 + \frac{1}{\kappa_u^t - \frac{1}{2}}. 
\end{align}
Thus, we have
\begin{align}
\label{eq:t5_bound}
    \mathrm{T}_5 
    &= \mathbb{E}_{\pmb{\zeta}} \left[\left\Vert \mathbf{w}^t - \mathbf{w}_u^{t,\tau} \right\Vert^2\right] \nonumber\\
    &\leq \eta_\mathrm{lo}^2 \sigma^2 + \left(1 + \frac{1}{2\kappa_u^t - 1} + 6 \kappa_u^t \beta^2 \eta_\mathrm{lo}^2 \right) \mathbb{E}_{\pmb{\zeta}} \left[\left\Vert \mathbf{w}_u^{t,\tau-1} - \mathbf{w}^t \right\Vert^2\right] + 6 \kappa_u^t \varpi^2 \eta_\mathrm{lo}^2 + 6 \kappa_u^t \eta_\mathrm{lo}^2 \left\Vert \nabla f^{t}(\mathbf{w}^t) \right\Vert^2 \nonumber\\
    &\leq \left(1 + \frac{1}{\kappa_u^t - \frac{1}{2}} \right) \mathbb{E}_{\pmb{\zeta}} \left[\left\Vert \mathbf{w}_u^{t,\tau-1} - \mathbf{w}^t \right\Vert^2\right] + C_1,
\end{align}
where $C_1 \coloneqq \eta_\mathrm{lo}^2 \sigma^2 + 6 \kappa_u^t \varpi^2 \eta_\mathrm{lo}^2 + 6 \kappa_u^t \eta_\mathrm{lo}^2 \left\Vert \nabla f^{t}(\mathbf{w}^t) \right\Vert^2$.

Now, note that when $\tau=0$, we have
\begin{align}
\label{eq:t5_bound_tau0}
    \mathbb{E}_{\pmb{\zeta}} \left[\left\Vert \mathbf{w}^t - \mathbf{w}_u^{t,0} \right\Vert^2\right] 
    =\mathbb{E}_{\pmb{\zeta}} \left[\left\Vert \mathbf{w}^t - \mathbf{w}^{t} \right\Vert^2\right] = 0, 
\end{align}
because $\mathbf{w}_u^{t,0} \gets \mathbf{w}^t$ during the synchronization.

If $\tau=1$, we have
\begin{align}
\label{eq:t5_bound_tau1}
    \mathbb{E}_{\pmb{\zeta}} \left[\left\Vert \mathbf{w}^t - \mathbf{w}_u^{t,1} \right\Vert^2\right] 
    &\leq \left(1 + \frac{1}{\kappa_u^t - \frac{1}{2}} \right) \mathbb{E}_{\pmb{\zeta}} \left[\left\Vert \mathbf{w}_u^{t,0} - \mathbf{w}^t \right\Vert^2\right] + C_1 \nonumber\\
    &= C_1.
\end{align}
If $\tau=2$, we have
\begin{align}
\label{eq:t5_bound_tau2}
    \mathbb{E}_{\pmb{\zeta}} \left[\left\Vert \mathbf{w}^t - \mathbf{w}_u^{t,2} \right\Vert^2\right] 
    &\leq \left(1 + \frac{1}{\kappa_u^t - \frac{1}{2}} \right) \mathbb{E}_{\pmb{\zeta}} \left[\left\Vert \mathbf{w}_u^{t,1} - \mathbf{w}^t \right\Vert^2\right] + C_1 \nonumber\\
    &= \left(1 + \frac{1}{\kappa_u^t - \frac{1}{2}} \right) C_1 + C_1.
\end{align}
Similarly, if $\tau=3$, we have
\begin{align}
\label{eq:t5_bound_tau3}
    \mathbb{E}_{\pmb{\zeta}} \left[\left\Vert \mathbf{w}^t - \mathbf{w}_u^{t,3} \right\Vert^2\right] 
    &\leq \left(1 + \frac{1}{\kappa_u^t - \frac{1}{2}} \right) \mathbb{E}_{\pmb{\zeta}} \left[\left\Vert \mathbf{w}_u^{t,2} - \mathbf{w}^t \right\Vert^2\right] + C_1 \nonumber\\
    &= \left(1 + \frac{1}{\kappa_u^t - \frac{1}{2}} \right) \left\{\left(1 + \frac{1}{\kappa_u^t - \frac{1}{2}} \right) C_1 + C_1 \right\} + C_1 \nonumber\\
    &= \left(1 + \frac{1}{\kappa_u^t - \frac{1}{2}} \right)^2 C_1 + \left(1 + \frac{1}{\kappa_u^t - \frac{1}{2}} \right) C_1 + C_1.
\end{align}
Therefore, for a generic $\tau$, we have 
\begin{align}
\label{eq:t5_bound_0}
    \mathrm{T}_5 
    &= \mathbb{E}_{\pmb{\zeta}} \left[\left\Vert \mathbf{w}^t - \mathbf{w}_u^{t,\tau} \right\Vert^2\right] \nonumber\\
    &\leq \sum_{\tau''=0}^{\tau-1} \left(1 + \frac{1}{\kappa_u^t - \frac{1}{2}} \right)^{\tau''} C_1 \nonumber\\
    &\overset{(a)}{\leq} \sum_{\tau''=0}^{\kappa_u^t - 1} \left(1 + \frac{1}{\kappa_u^t - \frac{1}{2}} \right)^{\tau''} C_1 \nonumber\\
    &\overset{(b)}{=} \frac{\left(1 + \frac{1}{\kappa_u^t - \frac{1}{2}} \right)^{\kappa_u^t} - 1}{\left(1 + \frac{1}{\kappa_u^t - \frac{1}{2}} \right) - 1} \times C_1 \nonumber\\
    &= \left[\left(1 + \frac{1}{\kappa_u^t - \frac{1}{2}} \right)^{\kappa_u^t} - 1 \right] \left(\kappa_u^t - \frac{1}{2} \right) \times C_1 \nonumber\\
    &= \left[\left(1 + \frac{1}{\kappa_u^t - \frac{1}{2}} \right)^{\kappa_u^t - \frac{1}{2}} \left(1 + \frac{1}{\kappa_u^t - \frac{1}{2}} \right)^{\frac{1}{2}} - 1 \right] \left(\kappa_u^t - \frac{1}{2} \right) \times C_1 \nonumber\\
    &\overset{(c)}{\leq} \left[e \left(1 + \frac{1}{\kappa_u^t - \frac{1}{2}} \right)^{\frac{1}{2}} - 1 \right] \left(\kappa_u^t - \frac{1}{2} \right) \times C_1 \nonumber\\
    &\overset{(d)}{\leq} \left[\sqrt{3}e-1\right] \left(\kappa_u^t - \frac{1}{2} \right) \times C_1 \nonumber\\
    &\overset{(e)}{\leq} 4\kappa_u^t \times C_1 \nonumber\\
    &=4\kappa_u^t \times \left[\eta_\mathrm{lo}^2 \sigma^2 + 6 \kappa_u^t \varpi^2 \eta_\mathrm{lo}^2 + 6 \kappa_u^t \eta_\mathrm{lo}^2 \left\Vert \nabla f^{t}(\mathbf{w}^t) \right\Vert^2 \right] \nonumber\\
    &=4 \kappa_u^t \eta_\mathrm{lo}^2 \sigma^2 + 24 \left(\kappa_u^t \varpi \eta_\mathrm{lo}\right)^2 + 24 \left(\kappa_u^t \eta_\mathrm{lo}\right)^2 \left\Vert \nabla f^{t}(\mathbf{w}^t) \right\Vert^2
\end{align}
where $(a)$ is true since $\tau \leq \kappa_u^t$.
We get $(b)$ using the definition of geometric series.
Besides, $(c)$ is true since for a sufficiently large $n$, $e = \lim_{n\to \infty} \left(1+\frac{1}{n} \right)^n$, whereas $(d)$ is true since $\left(1 + \frac{1}{\kappa_u^t - \frac{1}{2}} \right) \leq 3$ for $\kappa_u^t \geq 1$.
Finally, we use the fact that $\left(\sqrt{3}e - 1\right) \leq 4$ and $\left(\kappa_u^t -\frac{1}{2} \right) \leq \kappa_u^t$ for $\kappa_u^t \geq 1$.
\end{proof}

\newpage
\begin{theorem-box}[Theorem \ref{convgRate}]
Suppose the above assumptions hold. 
When the learning rates satisfy $\eta_\mathrm{gl}\eta_\mathrm{lo} < \frac{1}{\beta \eta_\mathrm{ef}^t}$, and $\eta_\mathrm{lo} < \text{min} \left\{ \frac{1}{8\beta\kappa}, \sqrt{\frac{4 - 16 \sum_{u=0}^{U-1} \alpha_u ( \delta_u^t (1 - v_u^t) )^2 - 33 \kappa \sum_{u=0}^{U-1} v_u^t (1+q-v_u^t) \left(\alpha_u \delta_u^t\right)^2}{192 \kappa^2 \beta^2 \sum_{u=0}^{U-1} \alpha_u ( v_u^t \delta_u^t)^2}} \right\}$, and $4 - 16 \sum_{u=0}^{U-1} \alpha_u ( \delta_u^t (1 - v_u^t) )^2 - 33 \kappa \sum_{u=0}^{U-1} v_u^t (1+q-v_u^t) \left(\alpha_u \delta_u^t\right)^2 \geq 0$, we have
\begin{align}
   &\eta_\mathrm{ef}^t \left\Vert \nabla f^t \left(\mathbf{w}^{t} \right) \right\Vert^2  
    \leq \frac{2 \big[ f^t(\mathbf{w}^t) - \mathbb{E}_{\pmb{\mathrm{1},\zeta},Q} \left[ f^{t+1}(\mathbf{w}^{t+1}) \right]\big]}{\eta_\mathrm{gl} \eta_\mathrm{lo}} + \frac{2\Phi^t}{\eta_\mathrm{gl} \eta_\mathrm{lo}} + \Omega_\sigma (\sigma, \pmb{v,\kappa^t,\delta^t}) + \Omega_\varpi (\sigma, \pmb{v,\kappa^t,\delta^t}).    
\end{align}
where $\Phi^t \triangleq  \mathbb{E}_{\pmb{\mathrm{1},\zeta},Q} \left[f^{t+1}(\mathbf{w}^{t+1}) \right] - \mathbb{E}_{\pmb{\mathrm{1},\zeta},Q} \left[f^{t} (\mathbf{w}^{t+1})\right]$, $\Omega_\sigma (\sigma, \pmb{v,\kappa^t,\delta^t}) \coloneqq \frac{2\digamma_\sigma (\sigma, \pmb{v, \kappa^t,\delta^t})}{\eta_\mathrm{gl}\eta_\mathrm{lo}} = 2(2+q) \beta \eta_\mathrm{gl} \eta_\mathrm{lo} \left(\sigma \eta_\mathrm{ef}^t \right)^2 \sum_{u=0}^{U-1} v_u^t \alpha_u^2 \frac{ (\delta_u^t)^2}{\kappa_u^t} + 8 \eta_\mathrm{ef}^t \beta^2 \sigma^2 \eta_\mathrm{lo}^2 \sum_{u=0}^{U-1} \alpha_u \kappa_u^t (v_u^t\delta_u^t)^2 + 24 \sigma^2 \left(\eta_\mathrm{ef}^t \right)^2 \beta^3 \eta_\mathrm{lo}^3  \sum_{u=0}^{U-1} v_u^t(1+q-v_u^t) \left(\alpha_u \delta_u^t \kappa_u^t \right)^2$, and $\Omega_\varpi (\sigma, \pmb{v,\kappa^t,\delta^t}) \coloneqq  \frac{2\digamma_\varpi (\sigma, \pmb{v,\kappa^t,\delta^t})}{\eta_\mathrm{gl}\eta_\mathrm{lo}} = 4 \eta_\mathrm{ef}^t \varpi^2 \sum_{u=0}^{U-1} \alpha_u ( \delta_u^t (1 - v_u^t) )^2 + 48 \eta_\mathrm{ef}^t \left(\beta \varpi \right)^2 \eta_\mathrm{lo}^2  \sum_{u=0}^{U-1} \alpha_u (v_u^t\delta_u^t)^2 \left(\kappa_u^t\right)^2 + 6 \beta \eta_\mathrm{gl} \eta_\mathrm{lo} \left(\eta_\mathrm{ef}^t \varpi\right)^2 \sum_{u=0}^{U-1} v_u^t(1+q-v_u^t) \left(\alpha_u \delta_u^t\right)^2 \kappa_u^t + 144 \eta_\mathrm{gl} \left(\varpi \eta_\mathrm{ef}^t \right)^2 \beta^3 \eta_\mathrm{lo}^3 \sum_{u=0}^{U-1} v_u^t(1+q-v_u^t) \left(\alpha_u \delta_u^t\right)^2 \left(\kappa_u^t \right)^3$.
An immediate consequence of the above equation is 
\begin{align}
    \frac{1}{\sum_{t=0}^{T-1} \eta_{\mathrm{ef}}^t} \sum_{t=0}^{T-1} \eta_{\mathrm{ef}}^t \mathbb{E}_{\pmb{1,\zeta},Q} \left[\left\Vert \nabla f^t (\mathbf{w}^t) \right\Vert^2 \right] 
    &\leq \frac{2\left(f^0(\mathbf{w}^0) - \mathbb{E}_{\pmb{\mathrm{1},\zeta},Q} \left[ f^{T}(\mathbf{w}^{T}) \right] \right)}{\eta_\mathrm{gl} \eta_\mathrm{lo} \sum_{t=0}^{T-1} \eta_{\mathrm{ef}}^t} + \frac{2}{\eta_\mathrm{gl} \eta_\mathrm{lo} \sum_{t=0}^{T-1} \eta_{\mathrm{ef}}^t} \sum_{t=0}^{T-1} \Phi^t + \nonumber\\
    &\squad \frac{1}{\sum_{t=0}^{T-1} \eta_{\mathrm{ef}}^t} \sum_{t=0}^{T-1} \Omega_\sigma (\pmb{\sigma, v, \kappa^t, \delta^t}) + \frac{1}{\sum_{t=0}^{T-1} \eta_{\mathrm{ef}}^t} \sum_{t=0}^{T-1} \Omega_\varpi (\pmb{\varpi, v, \kappa^t, \delta^t}).
\end{align}
\end{theorem-box}

\begin{proof}
We want to find how the loss function evolves between two consecutive global rounds. 
That is we want to find $f^{t+1}(\mathbf{w}^{t+1}) - f^t(\mathbf{w}^t)$.
Note that we can write
\begin{align}
    f^{t+1}(\mathbf{w}^{t+1}) - f^t(\mathbf{w}^t)& 
    = \underbrace{f^{t+1}(\mathbf{w}^{t+1}) - f^{t} (\mathbf{w}^{t+1})}_{T_1 } + \underbrace{f^{t} (\mathbf{w}^{t+1}) - f^t(\mathbf{w}^t)}_{T_2},
\end{align}
where, by definitions, we have  
\begin{align*}
    &\alpha_u^t \triangleq \alpha_u \delta_u^t = \left( \frac{\mathrm{D}_u}{\sum_{u'=0}^{U-1} \mathrm{D}_{u'}} \right) \delta_u^t. \\
    &f^{t} (\mathbf{w}^t) \triangleq \sum_{u=0}^{U-1} \alpha_u^t f_u^t(\mathbf{w}^t) = \sum_{u=0}^{U-1} \alpha_u^t f_u (\mathbf{w}^t|\mathcal{D}_u^t).\\
    &f^{t+1} (\mathbf{w}^{t+1}) \triangleq \sum_{u=0}^{U-1} \alpha_u^{t+1} (\mathbf{w}^{t+1}) \triangleq \sum_{u=0}^{U-1} \alpha_u^{t+1} f_u (\mathbf{w}^{t+1}|\mathcal{D}_u^{t+1}).\\ 
    &f^{t+1} (\mathbf{w}^t) \triangleq \sum_{u=0}^{U-1} \alpha_u^{t+1} f_u^{t+1} (\mathbf{w}^t) = \sum_{u=0}^{U-1} \alpha_u^{t+1} f_u (\mathbf{w}^t|\mathcal{D}_u^{t+1}).\\ 
    &f^{t} (\mathbf{w}^{t+1}) \triangleq \sum_{u=0}^{U-1} \alpha_u^t f_u^t (\mathbf{w}^{t+1}) = \sum_{u=0}^{U-1} \alpha_u^t f_u (\mathbf{w}^{t+1}|\mathcal{D}_u^t).
\end{align*}
As such, the term $T_1$ bounds the global loss differences, evaluated on datasets $\mathcal{D}^{t+1}$ and $\mathcal{D}^{t}$, calculated with the same model $\mathbf{w}^{t+1}$.  
In other words, $T_1$ is induced due to the data {\em distribution shift}.

Taking expectation over all randomness (i.e., randomness from time-varying wireless link-induced participation uncertainty, random sampling of the mini-batches for stochastic gradient, and randomness in the stochastic quantizer), we have
\begin{align}
    \label{eq:main_conv}
    \mathbb{E}_{\pmb{\mathrm{1},\zeta},Q} \left[ f^{t+1}(\mathbf{w}^{t+1}) \right] - f^t(\mathbf{w}^t) & 
    = \underbrace{\mathbb{E}_{\pmb{\mathrm{1},\zeta},Q} \left[f^{t+1}(\mathbf{w}^{t+1}) \right] - \mathbb{E}_{\pmb{\mathrm{1},\zeta},Q} \left[f^{t} (\mathbf{w}^{t+1})\right]}_{\coloneqq \Phi^t} + \mathbb{E}_{\pmb{\mathrm{1},\zeta},Q} \left[f^{t} (\mathbf{w}^{t+1})\right] - f^t(\mathbf{w}^t),
\end{align}

Plugging (\ref{eq:main_conv_2}) from Lemma \ref{lemma_convrate} into (\ref{eq:main_conv}), we get
\begin{align}
    \label{eq:main_conv_0}
    &\mathbb{E}_{\pmb{\mathrm{1},\zeta},Q} \left[ f^{t+1}(\mathbf{w}^{t+1}) \right] - f^t(\mathbf{w}^t)  
    \leq \Phi^t - \frac{\eta_\mathrm{gl} \eta_\mathrm{lo} \eta_\mathrm{ef}^t}{2} \Big[1 - 4 \sum_{u=0}^{U-1} \alpha_u ( \delta_u^t (1 - v_u^t) )^2 - 48 \beta^2 \eta_\mathrm{lo}^2 \sum_{u=0}^{U-1} \alpha_u (\kappa_u^t v_u^t \delta_u^t)^2 - \nonumber\\
    &\squad 6 \beta \eta_\mathrm{gl} \eta_\mathrm{lo} \eta_\mathrm{ef}^t \sum_{u=0}^{U-1} v_u^t (1+q-v_u^t) \kappa_u^t \left(\alpha_u \delta_u^t\right)^2 - 144 \eta_\mathrm{gl} \eta_\mathrm{ef}^t \beta^3 \eta_\mathrm{lo}^3 \sum_{u=0}^{U-1} v_u^t(1+q-v_u^t) \left(\alpha_u \delta_u^t\right)^2 \left(\kappa_u^t \right)^3 \Big] \left\Vert \nabla f^{t}(\mathbf{w}^t) \right\Vert^2 + \nonumber\\
    &\squad \digamma_\sigma (\sigma, \pmb{v, \kappa^t,\delta^t}) + \digamma_\varpi (\sigma, \pmb{v,\kappa^t,\delta^t}).
\end{align}

Rearranging the terms, we get
\begin{align}
    &\frac{\eta_\mathrm{gl} \eta_\mathrm{lo} \eta_\mathrm{ef}^t}{2} \Big[1 - 4 \sum_{u=0}^{U-1} \alpha_u ( \delta_u^t (1 - v_u^t) )^2 - 48 \beta^2 \eta_\mathrm{lo}^2 \sum_{u=0}^{U-1} \alpha_u (\kappa_u^t v_u^t \delta_u^t)^2 - 6 \beta \eta_\mathrm{gl} \eta_\mathrm{lo} \eta_\mathrm{ef}^t \sum_{u=0}^{U-1} v_u^t (1+q-v_u^t) \kappa_u^t \left(\alpha_u \delta_u^t\right)^2 - \nonumber\\
    &\squad 144 \eta_\mathrm{gl} \eta_\mathrm{ef}^t \beta^3 \eta_\mathrm{lo}^3 \sum_{u=0}^{U-1} v_u^t(1+q-v_u^t) \left(\alpha_u \delta_u^t\right)^2 \left(\kappa_u^t \right)^3 \Big] \left\Vert \nabla f^{t}(\mathbf{w}^t) \right\Vert^2 \nonumber \\
    &\leq f^t(\mathbf{w}^t) - \mathbb{E}_{\pmb{\mathrm{1},\zeta},Q} \left[ f^{t+1}(\mathbf{w}^{t+1}) \right] + \Phi^t +  \digamma_\sigma (\sigma, \pmb{v, \kappa^t,\delta^t}) + \digamma_\varpi (\sigma, \pmb{v,\kappa^t,\delta^t}).
\end{align}

Using the fact that $0 \leq \kappa_u^t \leq \kappa$ in the left-hand side of the above inequality, we have 
\begin{align}
    & \Big[1 - 4 \sum_{u=0}^{U-1} \alpha_u ( \delta_u^t (1 - v_u^t) )^2 - 48 \kappa^2 \beta^2 \eta_\mathrm{lo}^2 \sum_{u=0}^{U-1} \alpha_u ( v_u^t \delta_u^t)^2 - 6 \beta \kappa \eta_\mathrm{gl} \eta_\mathrm{lo} \eta_\mathrm{ef}^t \sum_{u=0}^{U-1} v_u^t (1+q-v_u^t) \left(\alpha_u \delta_u^t\right)^2 - \nonumber\\
    &\squad 144 \eta_\mathrm{gl} \eta_\mathrm{ef}^t \kappa^3 \beta^3 \eta_\mathrm{lo}^3 \sum_{u=0}^{U-1} v_u^t(1+q-v_u^t) \left(\alpha_u \delta_u^t\right)^2 \Big] \times \eta_\mathrm{ef}^t \left\Vert \nabla f^{t}(\mathbf{w}^t) \right\Vert^2 \nonumber \\
    &\leq \frac{2 \left(f^t(\mathbf{w}^t) - \mathbb{E}_{\pmb{\mathrm{1},\zeta},Q} \left[ f^{t+1}(\mathbf{w}^{t+1}) \right] \right)} {\eta_\mathrm{gl} \eta_\mathrm{lo} } + \frac{2 \Phi^t}{\eta_\mathrm{gl} \eta_\mathrm{lo} } +  \frac{2 \digamma_\sigma (\sigma, \pmb{v, \kappa^t,\delta^t})}{\eta_\mathrm{gl} \eta_\mathrm{lo} } + \frac{2 \digamma_\varpi (\sigma, \pmb{v,\kappa^t,\delta^t})}{\eta_\mathrm{gl} \eta_\mathrm{lo} }.
\end{align}

Now, using the fact that $\eta_\mathrm{lo} \leq \frac{1}{8\beta\kappa}$ and $\eta_\mathrm{gl}\eta_\mathrm{lo} \leq \frac{1}{\beta \eta_\mathrm{ef}^t}$, we have 
\begin{align}
    &6 \beta \kappa \eta_\mathrm{gl} \eta_\mathrm{lo} \eta_\mathrm{ef}^t \sum_{u=0}^{U-1} v_u^t (1+q-v_u^t) \left(\alpha_u \delta_u^t\right)^2 \leq 6 \kappa \sum_{u=0}^{U-1} v_u^t (1+q-v_u^t) \left(\alpha_u \delta_u^t\right)^2 \nonumber\\
    &144 \eta_\mathrm{gl} \eta_\mathrm{ef}^t \kappa^3 \beta^3 \eta_\mathrm{lo}^3 \sum_{u=0}^{U-1} v_u^t(1+q-v_u^t) \left(\alpha_u \delta_u^t\right)^2 \leq \frac{9 \kappa}{4} \sum_{u=0}^{U-1} v_u^t(1+q-v_u^t) \left(\alpha_u \delta_u^t\right)^2 
\end{align}

Plugging these values, we get
\begin{align}
    & \underbrace{\Big[1 - 4 \sum_{u=0}^{U-1} \alpha_u ( \delta_u^t (1 - v_u^t) )^2 - 48 \kappa^2 \beta^2 \eta_\mathrm{lo}^2 \sum_{u=0}^{U-1} \alpha_u ( v_u^t \delta_u^t)^2 - \frac{33\kappa}{4} \sum_{u=0}^{U-1} v_u^t (1+q-v_u^t) \left(\alpha_u \delta_u^t\right)^2 \Big]}_{\mathrm{T_6}} \times \eta_\mathrm{ef}^t \left\Vert \nabla f^{t}(\mathbf{w}^t) \right\Vert^2 \nonumber \\
    &\leq \frac{2 \left(f^t(\mathbf{w}^t) - \mathbb{E}_{\pmb{\mathrm{1},\zeta},Q} \left[ f^{t+1}(\mathbf{w}^{t+1}) \right] \right)} {\eta_\mathrm{gl} \eta_\mathrm{lo} } + \frac{2 \Phi^t}{\eta_\mathrm{gl} \eta_\mathrm{lo} } + \Omega_\sigma (\sigma, \pmb{v,\kappa^t,\delta^t}) + \Omega_\varpi (\sigma, \pmb{v,\kappa^t,\delta^t}),
\end{align}
where $\Omega_\sigma (\sigma, \pmb{v,\kappa^t,\delta^t}) \coloneqq \frac{2\digamma_\sigma (\sigma, \pmb{v, \kappa^t,\delta^t})}{\eta_\mathrm{gl}\eta_\mathrm{lo}} = 2(2+q) \beta \eta_\mathrm{gl} \eta_\mathrm{lo} \left(\sigma \eta_\mathrm{ef}^t \right)^2 \sum_{u=0}^{U-1} v_u^t \alpha_u^2 \frac{ (\delta_u^t)^2}{\kappa_u^t} + 8 \eta_\mathrm{ef}^t \beta^2 \sigma^2 \eta_\mathrm{lo}^2 \sum_{u=0}^{U-1} \alpha_u \kappa_u^t (v_u^t\delta_u^t)^2 + 24 \sigma^2 \left(\eta_\mathrm{ef}^t \right)^2 \beta^3 \eta_\mathrm{lo}^3  \sum_{u=0}^{U-1} v_u^t(1+q-v_u^t) \left(\alpha_u \delta_u^t \kappa_u^t \right)^2$, and $\Omega_\varpi (\sigma, \pmb{v,\kappa^t,\delta^t}) \coloneqq  \frac{2\digamma_\varpi (\sigma, \pmb{v,\kappa^t,\delta^t})}{\eta_\mathrm{gl}\eta_\mathrm{lo}} = 4 \eta_\mathrm{ef}^t \varpi^2 \sum_{u=0}^{U-1} \alpha_u ( \delta_u^t (1 - v_u^t) )^2 + 48 \eta_\mathrm{ef}^t \left(\beta \varpi \right)^2 \eta_\mathrm{lo}^2  \sum_{u=0}^{U-1} \alpha_u (v_u^t\delta_u^t)^2 \left(\kappa_u^t\right)^2 + 6 \beta \eta_\mathrm{gl} \eta_\mathrm{lo} \left(\eta_\mathrm{ef}^t \varpi\right)^2 \sum_{u=0}^{U-1} v_u^t(1+q-v_u^t) \left(\alpha_u \delta_u^t\right)^2 \kappa_u^t + 144 \eta_\mathrm{gl} \left(\varpi \eta_\mathrm{ef}^t \right)^2 \beta^3 \eta_\mathrm{lo}^3 \sum_{u=0}^{U-1} v_u^t(1+q-v_u^t) \left(\alpha_u \delta_u^t\right)^2 \left(\kappa_u^t \right)^3$.

Now, we find a sufficient condition to show that $0 < \mathrm{T}_6 \leq 1$.
\begin{align}
\label{eq:nec_cond_on_eta_lo}
    &1 - 4 \sum_{u=0}^{U-1} \alpha_u ( \delta_u^t (1 - v_u^t) )^2 - 48 \kappa^2 \beta^2 \eta_\mathrm{lo}^2 \sum_{u=0}^{U-1} \alpha_u ( v_u^t \delta_u^t)^2 - \frac{33\kappa}{4} \sum_{u=0}^{U-1} v_u^t (1+q-v_u^t) \left(\alpha_u \delta_u^t\right)^2 >0 \nonumber\\
    &48 \kappa^2 \beta^2 \eta_\mathrm{lo}^2 \sum_{u=0}^{U-1} \alpha_u ( v_u^t \delta_u^t)^2 < 1 - 4 \sum_{u=0}^{U-1} \alpha_u ( \delta_u^t (1 - v_u^t) )^2 - \frac{33\kappa}{4} \sum_{u=0}^{U-1} v_u^t (1+q-v_u^t) \left(\alpha_u \delta_u^t\right)^2 \nonumber\\
    &\eta_\mathrm{lo} < \sqrt{\frac{4 - 16 \sum_{u=0}^{U-1} \alpha_u ( \delta_u^t (1 - v_u^t) )^2 - 33 \kappa \sum_{u=0}^{U-1} v_u^t (1+q-v_u^t) \left(\alpha_u \delta_u^t\right)^2}{192 \kappa^2 \beta^2 \sum_{u=0}^{U-1} \alpha_u ( v_u^t \delta_u^t)^2}}
\end{align}

We note that the numerator of (\ref{eq:nec_cond_on_eta_lo}) must be non-negative, which requires $4 - 16 \sum_{u=0}^{U-1} \alpha_u ( \delta_u^t (1 - v_u^t) )^2 - 33 \kappa \sum_{u=0}^{U-1} v_u^t (1+q-v_u^t) \left(\alpha_u \delta_u^t\right)^2 \geq 0$. 
Thus, if $4 - 16 \sum_{u=0}^{U-1} \alpha_u ( \delta_u^t (1 - v_u^t) )^2 - 33 \kappa \sum_{u=0}^{U-1} v_u^t (1+q-v_u^t) \left(\alpha_u \delta_u^t\right)^2 \geq 0$, $\eta_\mathrm{gl}\eta_\mathrm{lo} < \frac{1}{\beta \eta_\mathrm{ef}^t}$, and $\eta_\mathrm{lo} < \text{min} \left\{ \frac{1}{8\beta\kappa},  \sqrt{\frac{4 - 16 \sum_{u=0}^{U-1} \alpha_u ( \delta_u^t (1 - v_u^t) )^2 - 33 \kappa \sum_{u=0}^{U-1} v_u^t (1+q-v_u^t) \left(\alpha_u \delta_u^t\right)^2}{192 \kappa^2 \beta^2 \sum_{u=0}^{U-1} \alpha_u ( v_u^t \delta_u^t)^2}} \right\}$, we have
\begin{align}
\label{eq:globalGradient_Main}
    &\eta_\mathrm{ef}^t \left\Vert \nabla f^t \left(\mathbf{w}^{t} \right) \right\Vert^2  
    \leq \frac{2 \big[f^t(\mathbf{w}^t) - \mathbb{E}_{\pmb{\mathrm{1},\zeta},Q} \left[ f^{t+1}(\mathbf{w}^{t+1}) \right]\big]}{\eta_\mathrm{gl} \eta_\mathrm{lo}} + \frac{2\Phi^t}{\eta_\mathrm{gl} \eta_\mathrm{lo}} + \Omega_\sigma (\sigma, \pmb{v,\kappa^t,\delta^t}) + \Omega_\varpi (\sigma, \pmb{v,\kappa^t,\delta^t}).  
\end{align}

Now, summing over $t=0,1,\dots,T-1$ and dividing both sides by $\frac{1}{\sum_{t=0}^{T-1} \eta_\mathrm{ef}^t}$, we have
\begin{align}
\label{eq:conv_bound}
    \frac{1}{\sum_{t=0}^{T-1} \eta_{\mathrm{ef}}^t} \sum_{t=0}^{T-1} \eta_{\mathrm{ef}}^t \mathbb{E}_{\pmb{1,\zeta},Q} \left[\left\Vert \nabla f^t (\mathbf{w}^t) \right\Vert^2 \right] 
    &\leq \frac{1}{\sum_{t=0}^{T-1} \eta_{\mathrm{ef}}^t} \sum_{t=0}^{T-1} \frac{2\left(\mathbb{E}_{\pmb{\mathrm{1},\zeta},Q} \left[ f^t(\mathbf{w}^t) \right] - \mathbb{E}_{\pmb{\mathrm{1},\zeta},Q} \left[ f^{t+1}(\mathbf{w}^{t+1}) \right] \right)}{\eta_\mathrm{gl} \eta_\mathrm{lo}} + \frac{2}{\eta_\mathrm{gl} \eta_\mathrm{lo} \sum_{t=0}^{T-1} \eta_{\mathrm{ef}}^t} \sum_{t=0}^{T-1} \Phi^t + \nonumber\\
    &\squad \frac{1}{\sum_{t=0}^{T-1} \eta_{\mathrm{ef}}^t} \sum_{t=0}^{T-1} \Omega_\sigma (\pmb{\sigma, v, \kappa^t, \delta^t}) + \frac{1}{\sum_{t=0}^{T-1} \eta_{\mathrm{ef}}^t} \sum_{t=0}^{T-1} \Omega_\varpi (\pmb{\varpi, v, \kappa^t, \delta^t}) \nonumber\\
    &= \frac{2\left(f^0(\mathbf{w}^0) - \mathbb{E}_{\pmb{\mathrm{1},\zeta},Q} \left[ f^{T}(\mathbf{w}^{T}) \right] \right)}{\eta_\mathrm{gl} \eta_\mathrm{lo} \sum_{t=0}^{T-1} \eta_{\mathrm{ef}}^t} + \frac{2}{\eta_\mathrm{gl} \eta_\mathrm{lo} \sum_{t=0}^{T-1} \eta_{\mathrm{ef}}^t} \sum_{t=0}^{T-1} \Phi^t + \nonumber\\
    &\squad \frac{1}{\sum_{t=0}^{T-1} \eta_{\mathrm{ef}}^t} \sum_{t=0}^{T-1} \Omega_\sigma (\pmb{\sigma, v, \kappa^t, \delta^t}) + \frac{1}{\sum_{t=0}^{T-1} \eta_{\mathrm{ef}}^t} \sum_{t=0}^{T-1} \Omega_\varpi (\pmb{\varpi, v, \kappa^t, \delta^t}).
\end{align}
\end{proof}

\newpage \clearpage

\section{Additional Details on Simulation Parameters}

\subsection{OSAFL and Modified Algorithms and Hyperparameters}

\subsubsection{Modified-FedAvg}
We summarized the \ac{mfedavg} baseline in Algorithm \ref{mfedAvg_Algorithm}.
From our ablation study, we find that the learning rates of $0.008$, $0.02$, and $0.01$ respectively, for the CIFAR10, FashionMNIST, and MNIST datasets with the \sqz~ model, $0.02$, $0.03$, and $0.4$, respectively, with the \ac{cnn} model; and $0.01$, $0.02$, and $0.04$, respectively, with the \ac{resnet18} model, works the best in our setting for this \ac{mfedavg} algorithm.

\begin{algorithm}[!h]
\small
% \fontsize{8}{8}\selectfont
\SetAlgoLined 
\DontPrintSemicolon
\KwIn{Initial global model $\mathbf{w}^0$, client set $\mathcal{U}$, total global round $T$, local learning rate $\eta$ }
\For{$t=0$ to $T-1$}{
    \For{$u$ in $\mathcal{U}$ in parallel}{
        Receives the latest global model from the CS \;
        Synchronize the local model: $\mathbf{w}_{u}^{t,0} \gets \mathbf{w}^{t}$ \;
        Client solves resource optimization problem  (\ref{localIterOptim_Orig}) using Algorithm \ref{iterativeAlgorithm_LRO} to determine $\mathrm{1}_u^t$ and local round $\kappa_u^t$ \;
        \uIf{$\mathrm{1}_u^t = 1$}{
            Performs $\kappa_u^t$ SGD steps: $\mathbf{w}_u^{t,\kappa_u^t} = \mathbf{w}_u^{t,0} - \eta \sum\nolimits_{\tau=0}^{\kappa_u^t-1} g_u \left(\mathbf{w}_u^{t,\tau} | \mathcal{D}_u^t \right)$ \;
            Calculates the model differences $\Delta \mathbf{w}_u^t \triangleq \mathbf{w}_u^{t,\kappa_u^t} - \mathbf{w}_u^{t,0} $ \;
            Use stochastic quantizer defined in (\ref{eq:stoQuantizer}) to quantize $\Delta \mathbf{w}_u^t$ and transmit $Q(\Delta \mathbf{w}_u^t)$ to the central server\;
            }
    }
    Perform global aggregation: $\mathbf{w}^{t+1} = \sum\nolimits_{u=0}^{U-1} \alpha_u \cdot \mathrm{1}_u^t \cdot \left(\mathbf{w}^t + Q\left( \Delta \mathbf{w}_u^t \right) \right)$ \;
    }
\KwOut{Trained global model $\mathbf{w}^T$}
\caption{Modified Federated Averaging Baselines}
\label{mfedAvg_Algorithm}
\end{algorithm}
\begin{algorithm}[!h]
\small
% \fontsize{8}{8}\selectfont
\SetAlgoLined 
\DontPrintSemicolon
\KwIn{Initial global model $\mathbf{w}^0$, client set $\mathcal{U}$, total global round $T$, local learning rate $\eta$, proximal penalty parameter $\mu$}
\For{$t=0$ to $T-1$}{
    \For{$u$ in $\mathcal{U}$ in parallel}{
        Receives the latest global model from the CS \;
        Synchronize the local model: $\mathbf{w}_{u}^{t,0} \gets \mathbf{w}^{t}$ \;
        Client solves resource optimization problem  (\ref{localIterOptim_Orig}) using Algorithm \ref{iterativeAlgorithm_LRO} to determine $\mathrm{1}_u^t$ and local round $\kappa_u^t$ \; 
        
        \uIf{$\mathrm{1}_u^t = 1$}{
            Performs $\kappa_u^t$ SGD steps to get $\mathbf{w}_u^{t,\kappa_u^t}$ that minimizes $f_u \left(\mathbf{w}_u^{t, \tau} | \mathcal{D}_u^t \right) + \frac{\mu}{2} \left\Vert \mathbf{w}_u^{t,\tau} - \mathbf{w}^t \right\Vert^2 $ \;
            Calculates the model differences $\Delta \mathbf{w}_u^t \triangleq \mathbf{w}_u^{t,\kappa_u^t} - \mathbf{w}_u^{t,0} $ \;
            Use stochastic quantizer defined in (\ref{eq:stoQuantizer}) to quantize $\Delta \mathbf{w}_u^t$ and transmit $Q(\Delta \mathbf{w}_u^t)$ to the central server\;
            }
    }
    Perform global aggregation: $\mathbf{w}^{t+1} = \sum\nolimits_{u=0}^{U-1} \alpha_u \cdot \mathrm{1}_u^t \cdot \left(\mathbf{w}^t + Q\left( \Delta \mathbf{w}_u^t \right) \right)$ \;
    }
\KwOut{Trained global model $\mathbf{w}^T$}
\caption{Modified-FedProx Baselines}
\label{mfedprox_Algorithm}
\end{algorithm}
\begin{algorithm}[!t]
\small
% \fontsize{8}{8}\selectfont
\SetAlgoLined 
\DontPrintSemicolon
\KwIn{Initial global model $\mathbf{w}^0$, client set $\mathcal{U}$, total global round $T$, learning rate $\eta$}
\For{$t=0$ to $T-1$}{
    \For{$u$ in $\mathcal{U}$ in parallel}{
        Receives the latest global model from the CS \;
        Synchronize the local model: $\mathbf{w}_{u}^{t,0} \gets \mathbf{w}^{t}$ \;
        Client solves resource optimization problem  (\ref{localIterOptim_Orig}) using Algorithm \ref{iterativeAlgorithm_LRO} to determine $\mathrm{1}_u^t$ and local round $\kappa_u^t$ \; 
        \uIf{$\mathrm{1}_u^t = 1$}{
        Performs $\kappa_u^t$ SGD steps to get $\mathbf{w}_u^{t,\kappa_u^t} = \mathbf{w}_u^{t,0} - \eta \sum_{\tau=0}^{\kappa_u^t-1} g_u \left( \mathbf{w}_u^{t,\tau} | \mathcal{D}_u^t \right)$ \;
        Client calculates normalized gradient $\mathbf{d}_u^t = \frac{\mathbf{w}_u^{t, 0} - \mathbf{w}_u^{t, \kappa_u^t}}{\kappa_u^t \cdot \eta}$ \; 
        Use stochastic quantizer defined in (\ref{eq:stoQuantizer}) to quantize $\mathbf{d}_u^t$ and transmit $Q(\mathbf{d}_u^t)$ and $\kappa_u^t$} to the central server \;
        }
    Perform global aggregation: $\mathbf{w}^{t+1} = \mathbf{w}^t - \left[\sum_{i=0}^{U-1} \left(\alpha_i \cdot \mathrm{1}_i^t \cdot \kappa_i^t \right)\right] \sum_{u=0}^{U-1} \eta \left(\frac{\alpha_u \cdot \mathrm{1}_u^t \cdot \kappa_u^t}{\sum_{u'=0}^{U-1}  \alpha_{u'} \cdot \mathrm{1}_{u'}^t} \right) \cdot Q\left( \mathbf{d}_u^t \right)$ \;
    }
\KwOut{Trained global model $\mathbf{w}^T$}
\caption{Modified-FedNova Baselines}
\label{mfednova_Algorithm}
\end{algorithm}
\begin{algorithm}[!t]
\small
% \fontsize{8}{8}\selectfont
\SetAlgoLined 
\DontPrintSemicolon
\KwIn{Initial global model $\mathbf{w}^0$, client set $\mathcal{U}$, total global round $T$, local learning rate $\eta$, hyperparameter $a$ and $b$ }
\For{$t=0$ to $T-1$}{
    \For{$u$ in $\mathcal{U}$ in parallel}{
        Receives the latest global model from the CS \;
        Synchronize the local model: $\mathbf{w}_{u}^{t,0} \gets \mathbf{w}^{t}$ \;
        Client solves resource optimization problem  (\ref{localIterOptim_Orig}) using Algorithm \ref{iterativeAlgorithm_LRO} to determine $\mathrm{1}_u^t$ and local round $\kappa_u^t$ \; 
        
        \uIf{$\mathrm{1}_u^t = 1$}{
            Client performs $\kappa_u^t$ SGD steps: $\mathbf{w}_u^{t,\kappa_u^t} = \mathbf{w}_u^{t,0} - \eta \sum\nolimits_{\tau=0}^{\kappa_u^t-1} g_u \left(\mathbf{w}_u^{t,\tau} | \mathcal{D}_u^t \right)$ \;
            Client calculates model differences $\Delta \mathbf{w}_u^t \triangleq \mathbf{w}_u^{t,\kappa_u^t} - \mathbf{w}_u^{t,0}$ \;
            Client uses stochastic quantizer defined in (\ref{eq:stoQuantizer}) and sends $Q(\Delta \mathbf{w}_u^t)$ and discrepancy value $d_u^t$ \cite[Sec. $5.1$]{ye23fedDisco} to the \ac{cs} \; 
            }
    }
    \Ac{cs} computes aggregation weights following 
    \begin{align}
        \bar{\alpha}_u &= \frac{\mathrm{ReLU}\left(\alpha_u - a \cdot d_u^t + b \right)}{\sum_{u'=0}^{U-1} \mathrm{ReLU}\left(\alpha_{u'} - a \cdot d_{u'}^t + b \right)}, \\
        \tilde{\alpha}_u &= \frac{\bar{\alpha}_u^t \cdot \mathrm{1}_u^t}{\sum_{u=0}^{U-1} \bar{\alpha}_u^t \cdot \mathrm{1}_u^t}.
    \end{align} 
    CS perform global aggregation: $\mathbf{w}^{t+1} = \sum\nolimits_{u=0}^{U-1} \tilde{\alpha}_u \cdot \left[\mathbf{w}^t + Q\left( \Delta \mathbf{w}_u^t \right) \right] $
    }
\KwOut{Trained global model $\mathbf{w}^T$}
\caption{Modified FedDisco  Baselines}
\label{mfeddisco_Algorithm}
\end{algorithm}

\bigskip

\subsubsection{Modified-FedProx Baseline}
The \ac{mfedprox} baseline is summarized in Algorithm \ref{mfedprox_Algorithm}.
From our ablation study, we find that the learning rates of $0.01$, $0.02$, and $0.01$ respectively, for the CIFAR10, FashionMNIST, and MNIST datasets with the \sqz~ model, $0.03$, $0.04$, and $0.4$, respectively, with the \ac{cnn} model; and $0.01$, $0.02$, and $0.05$, respectively, with the \ac{resnet18} model, works the best in our setting for this \ac{mfedprox} algorithm.
Besides, we find $\mu=0.01$ works best in our implementation.

\bigskip
\subsubsection{Modified-FedNova}
Algorithm \ref{mfednova_Algorithm} summarizes the \ac{mfednova} baseline.
Based on our ablation study, use the learning rates of $0.01$, $0.02$, and $0.01$ respectively, for the CIFAR10, FashionMNIST, and MNIST datasets with the \sqz~ model, $0.03$, $0.04$, and $0.04$, respectively, with the \ac{cnn} model; and $0.008$, $0.02$, and $0.08$, respectively, with the \ac{resnet18} model, works the best in our setting for this \ac{mfednova} algorithm.

\bigskip
\subsubsection{Modified-FedDisco Baseline}
Algorithm \ref{mfeddisco_Algorithm} summarizes our \ac{mfeddisco} baseline. 
For this one, we find that the learning rates of $0.01$, $0.015$, and $0.01$ respectively, for the CIFAR10, FashionMNIST, and MNIST datasets with the \sqz~ model, $0.02$, $0.03$, and $0.06$, respectively, with the \ac{cnn} model; and $0.008$, $0.02$, and $0.05$, respectively, with the \ac{resnet18} model, works the best in our setting for this \ac{mfeddisco} algorithm.
Besides, we use $a=0.15$ and $0.1$ in Algorithm \ref{mfeddisco_Algorithm}.

\end{document}